\documentclass{article}
\usepackage{lmodern}
\usepackage{iclr2027_conference,times}

\usepackage{amsmath,amsfonts,bm}

\def\eqref#1{equation~\ref{#1}}

\def\1{\bm{1}}

\def\vzero{{\bm{0}}}

\def\va{{\bm{a}}}

\def\vd{{\bm{d}}}
\def\ve{{\bm{e}}}

\def\vh{{\bm{h}}}

\def\vx{{\bm{x}}}
\def\vy{{\bm{y}}}
\def\vz{{\bm{z}}}

\DeclareMathAlphabet{\mathsfit}{\encodingdefault}{\sfdefault}{m}{sl}
\SetMathAlphabet{\mathsfit}{bold}{\encodingdefault}{\sfdefault}{bx}{n}

\newcommand{\E}{\mathbb{E}}

\DeclareMathOperator*{\argmin}{arg\,min}

\usepackage{hyperref}
\usepackage{url}
\usepackage{amsmath} 
\usepackage{amsthm}
\usepackage{booktabs}
\usepackage{graphicx}
\usepackage{wrapfig}
\usepackage{enumitem}
\usepackage{caption}
\usepackage{subcaption}
\usepackage{multicol}

\PassOptionsToPackage{table}{xcolor}   
\usepackage[table]{xcolor}

\newtheorem{theorem}{Theorem}
\newtheorem{proposition}[theorem]{Proposition}

\newtheorem{corollary}[theorem]{Corollary}
\newtheorem{remark}[theorem]{Remark}

\def \setC{\mathcal{C}}

\def \setL{\mathcal{L}}

\def \setG{\mathcal{G}}

\def \setS{\mathcal{S}}
\def \setV{\mathcal{V}}
\def \setX{\mathcal{X}}

\def \setE{\mathcal{E}}

\def \va{{\mathbf a}}
\def \vd{{\mathbf d}}
\def \ve{{\mathbf e}}

\def \vh{{\mathbf h}}

\def \vx{{\mathbf x}}
\def \vy{{\mathbf y}}
\def \vz{{\mathbf z}}

\def \vA{{\mathbf A}}

\def \vD{{\mathbf D}}
\def \vH{{\mathbf H}}
\def \vI{{\mathbf I}}
\def \vL{{\mathbf L}}
\def \vM{{\mathbf M}}
\def \vP{{\mathbf P}}
\def \vQ{{\mathbf Q}}
\def \vS{{\mathbf S}}
\def \vX{{\mathbf X}}

\newcommand{\tr}{\mathrm{tr}}

\title{FairProp: Fair Node Representation Learning via Differentiable Propagation Layers}

\author{Emmanouil Kariotakis \& Aritra Konar\\
Department of Electrical Engineering (ESAT-STADIUS),
KU Leuven, Belgium \\
\texttt{\{emmanouil.kariotakis,aritra.konar\}@kuleuven.be}
}

\iclrfinalcopy 

\begin{document}

\maketitle
\lhead{Preprint. Under review.}

\begin{abstract}
Graph neural networks (GNNs) are the standard tool for node representation learning and are increasingly used in high-stakes settings. Their message-passing backbone, however, can amplify topological bias, raising fairness concerns. We study group fairness at the level of downstream predictions for node classification, link prediction, and node regression, and bound the demographic parity gap for an arbitrary number of sensitive groups. Our node classification bound is provably no looser than the closest prior result. For link prediction, ours is the first bound on the parity gap of the deployed sigmoid-activated prediction rather than a pre-activation proxy, and for node regression we provide the first such bound. Across all three tasks, the analysis identifies two distinct sources of bias: the separation between group means and the within-group covariance of the final representations. Building on this insight, we embed fairness into propagation itself by augmenting the convex smoothing problem underlying APPNP with a convex group-mean constraint and a within-group covariance regularizer. Unfolding projected gradient descent on this problem yields FairProp, whose layers pair a propagation step with a closed-form projection and which provably converges linearly to the unique fair optimum. Experiments on three tasks show that FairProp, even with exact group-mean equalization alone, provides a strong inductive bias that achieves excellent fairness-utility trade-offs against strong baselines.
\end{abstract}

\vspace{-3mm}
\section{Introduction}
\vspace{-3mm}
Graph neural networks (GNNs) \citep{kipf2016semi,hamilton2017inductive,velickovic2018graph} exhibit state-of-the-art predictive performance across high-stakes graph machine learning (ML) tasks including credit scoring \citep{yuan2025dynamic}, recidivism prediction \citep{jordan2015effect,agarwal2021towards} and recommender systems \citep{wu2022graph}. This stems from the ability of GNNs to effectively leverage graph topology and node features to learn discriminative node representations via recursive feature aggregation across local neighborhoods - a mechanism known as message passing \citep{gilmer2017neural}. However, the increasing deployment of GNNs in such high-stakes tasks has raised concerns that their predictions disproportionately harm certain demographic subgroups \citep{rahman2019fairwalk,dai2021say}. 

Such concerns are not unfounded: real-world graphs often exhibit homophily \citep{hofstra2017sources} with respect to sensitive node features (e.g., race, gender, age), where nodes sharing such an attribute preferentially connect to each other. As a result, node representations learned via message passing can become highly correlated with sensitive features, and studies have shown that message passing amplifies such topology bias, yielding disparate outcomes in downstream node classification \citep{kose2024fairgat,dai2021say}, link prediction \citep{li2021dyadic,kose2024fairwire} and node regression \citep{park2025fnrgnn} tasks. 


While recent efforts have sought to mitigate such bias through fairness-aware GNNs (see surveys \citep{dong2023fairness,chen2024fairness} references therein), existing approaches suffer from key limitations. Pre-processing approaches rewire the link structure alone \citep{spinelli2021fairdrop,loveland2022fairedit} or jointly with node features \citep{dong2022edits}, independently of the downstream task. This requires solving an expensive combinatorial edge-selection problem. Moreover, since edges encode both task-relevant information and sensitive attribute correlation, correcting the latter perturbs the former without explicit control of the trade-off. In-processing approaches intervene during training. Some target fairness of the node representations \citep{bose2019compositional,buyl2020debayes,agarwal2021towards}, but do not specify how it transfers to a downstream task. Others train the model weights with an additional fairness-promoting term via techniques such as adversarial training \citep{dai2021say,wang2022improving}, invariant learning \citep{zhu2024one} or disentanglement \citep{zhu2024fair,lee2025disentangling}. The regularization weight of this term then requires extensive tuning, and adversarial variants add min-max training instability. A third class reweights edges during message passing, via attention coefficients \citep{kose2024fairgat}, transition probabilities \citep{rahman2019fairwalk,khajehnejad2022crosswalk} or the adjacency matrix \citep{kang2022rawlsgcn}. In contrast, FMP \citep{jiang2024chasing} is the only prior method that embeds the sensitive attribute directly into the aggregation step of GNNs, but it only considers node classification.

In this paper, we consider group fairness, which seeks equitable treatment across demographic groups, and specifically adopt Demographic Parity \citep{dwork2012fairness}, a popular notion studied in graph ML \citep{dong2023fairness}. Departing from the majority of the prior art, our goal is to design a unified in-processing approach to learning fair node representations via message passing that provably mitigates bias across multiple downstream tasks, including node classification, link prediction, and node regression. Our contributions can be listed as follows:

\begin{figure}[t]
    \centering
    \includegraphics[width=0.92\linewidth]{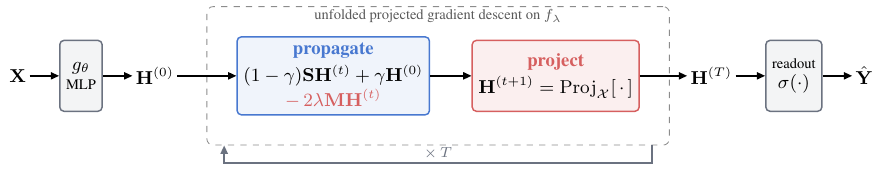}
    \vspace{-3mm}
    \caption{FairProp Architecture}
    \label{fig:unfolding_figure}
    \vspace{-7mm}
\end{figure}

$(i)$~\textit{Task-level bias analysis:} We first conduct a bias analysis for the aforementioned downstream tasks, which upper bounds the demographic parity gap in terms of the separation of the group means and the within-group variances of the final node representations (before the task-level prediction layer). For node classification, our analysis yields a tighter bound than the one derived in \citep{kose2024fairgat}, and for link prediction, our result directly bounds the prediction-level parity gap, in contrast to \citep{li2021dyadic,kose2024fairwire}, which only bound the pre-activation parity gap. Finally, to the best of our knowledge, we establish the first such bound on the parity gap for node regression. The key insight from our analysis is that bias mitigation across these tasks can be accomplished in a unified manner by controlling the separation of the group means and the within-group variances of the node representations.\\
$(ii)$~\textit{The FairProp architecture:} We employ the fairness-agnostic APPNP~\citep{gasteiger2018predict} as a backbone and describe how its propagation layers can be modified in a simple yet principled manner to perform effective bias mitigation, resulting in our {\bf FairProp} architecture (Figure~\ref{fig:unfolding_figure}). Unlike conventional message-passing GNNs, APPNP decouples prediction from propagation, which provides a clean setting to isolate the effect of bias mitigation at the propagation level. We consider the unconstrained, strongly convex feature propagation problem that the APPNP propagation layers implicitly solve \citep{ma2021unified}, and augment it to enact bias mitigation via: (a) a convex constraint bounding the separation of the group means, and (b) a quadratic regularizer minimizing within-group variance. We show that iterations of projected gradient descent applied on the modified convex problem can be unfolded into layers comprising an APPNP-style propagation step followed by a projection onto a convex set, and that this projection admits a closed-form solution in the forward and backward pass, enabling efficient end-to-end training for any downstream task.\\
$(iii)$~\textit{Empirical validation:} We evaluate {\bf FairProp} on three tasks against $18$ baselines across  benchmark datasets. Our empirical results show that FairProp with exact group mean equalization provides strong inductive bias which substantially reduces the parity gap of the APPNP backbone at negligible cost in utility. Compared to competing baselines, many of which achieve strong utility only at high cost of fairness, FairProp achieves the lowest parity gap on node classification, a near-zero gap on link prediction, and exactly zero on node regression, while maintaining strong performance.

\vspace{-3mm}
\subsection{Related Work} \label{sec:related}
\vspace{-3mm}
Most fairness-aware GNNs target node classification. One line trains fair models directly, through adversarial masking (FairGNN~\citep{dai2021say}, FairVGNN~\citep{wang2022improving}), counterfactual perturbation (NIFTY~\citep{agarwal2021towards}), normalization (Fast\&Fair~\citep{kose2023fast}), invariant learning (FairINV~\citep{zhu2024one}), disentanglement (FairSAD~\citep{zhu2024fair}, DAB-GNN~\citep{lee2025disentangling}) or graph partitioning in graph transformers (FairGP~\citep{luo2025fairgp}). A second family modifies how information travels along edges, by reweighting transition probabilities (FairWalk~\citep{rahman2019fairwalk}, CrossWalk~\citep{khajehnejad2022crosswalk}), attention coefficients (FairGAT~\citep{kose2024fairgat}), dropout (FairDrop~\citep{spinelli2021fairdrop}) or the normalized adjacency (RawlsGCN~\citep{kang2022rawlsgcn}). Finally, pre-processing methods edit the input graph (EDITS~\citep{dong2022edits}, FairEdit~\citep{loveland2022fairedit}). Node regression is addressed only by FnRGNN~\citep{park2025fnrgnn}. For link prediction, FairAdj~\citep{li2021dyadic} formalizes \emph{dyadic} fairness, later regularized structurally by FairWire~\citep{kose2024fairwire}, which also provides a bias analysis for link prediction.
The idea of projecting onto a fairness constraint set at every iteration has also been applied to PageRank by FairRARI~\citep{kariotakis2026fairrari}; we bring it into end-to-end GNN training, where the projection must also be differentiable.

Closest to our work, FMP~\citep{jiang2024chasing} considers demographic parity in node classification and also applies algorithm unrolling to solve a fairness-regularized optimization problem over node representations, enacting fair message passing. However, since the fairness term is non-convex, it is unclear whether the resulting algorithm converges to the underlying optimum, and it also entails complicated hyperparameter tuning for trading off accuracy and fairness. Moreover, its bias-mitigation mechanism is derived from the softmax Jacobian specific to node classification outputs, making its extension to link prediction or node regression non-trivial. In contrast, our framework unrolls a convex problem that directly constrains the separation of group means in the node representations, provably achieves linear convergence to the constrained optimum via simple projected gradient descent, and can be applied across many tasks. A full discussion is provided in Appendix~\ref{app:related}.

%

\vspace{-3mm}
\section{Preliminaries}
\label{sec:prelim}
\vspace{-3mm}
Let $\setG := (\setV,\setE)$ denote an attributed, undirected graph with $n := |\setV|$ nodes and $m := |\setE|$ edges, $\vA\in\mathbb{R}^{n\times n}$ the adjacency matrix, $\vD\in\mathbb{R}^{n\times n}$ the diagonal degree matrix, and $\mathbf{X}\in\mathbb{R}^{n\times d}$ denote a matrix whose rows correspond to node features. Each node $v$ is annotated with
a categorical sensitive feature $s_v \in[K]$, where $[K]:=\{0,\ldots,K-1\}$. The sensitive feature partitions the nodes of $\setG$ into $K$ subgroups, with the $k^{th}$ subgroup $\setS_k := \{v\in\setV : s_v = k\}$, $k\in[K]$.
In node-level tasks, predictions are made on nodes. Each $v\in\setV$ has a ground truth value $Y(v)$ and a prediction $\hat{Y}(v)$, taking values in $\{0,1\}$ for node classification and in $\mathbb{R}$ for node regression. In link prediction, predictions are made on a subset of node pairs comprising the existing edges $\setE\subseteq\setV\times\setV$ together with a sample of non-edges.
Since each pair involves \emph{two} nodes, subgroup membership is defined dyadically, i.e., with intra ($s_v = s_u$) and inter-group ($s_v \neq s_u$) pairs.

\textbf{Group-Fairness:}
%
%
Demographic parity (DP) requires that the model predictions be independent of the sensitive attribute \citep{dwork2012fairness}.  For binary \emph{node classification}, independence is equivalent to equal positive-prediction rates across demographic groups. Hence, we quantify its violation by \mbox{$\Delta^{\mathrm{NC}}_{\mathrm{DP}} := \max_{k,\ell\in[K]} \big| Pr(\hat{Y}(v)=1 | v\in\setS_k) - Pr(\hat{Y}(v)=1 | v\in\setS_\ell) \big| $.}
For \emph{node regression}, where predictions are continuous, we quantify the violation by the mean parity gap \mbox{$\Delta_{\mathrm{MG}} := \max_{k,\ell\in[K]}\big|\mathbb{E}[\hat{Y}(v)| v\in\setS_k]-\mathbb{E}[\hat{Y}(v)| v\in\setS_\ell]\big|$} \citep{wei2023mean, park2025fnrgnn}. Finally, for \emph{link prediction}, we use the dyadic demographic parity gap \citep{li2021dyadic} \mbox{$\Delta^{\mathrm{LP}}_{\mathrm{DP}} := \big| \mathbb{E}[f(v,u)|s_v = s_u] - \mathbb{E}[f(v,u)|s_v \neq s_u] \big| $},
where $f(v,u)$ denotes the predicted probability of an edge between the nodes $v$ and $u$. 
\begin{remark}
    For node classification, we evaluate $\Delta^{\mathrm{NC}}_{\mathrm{DP}}$ using the hard prediction $\hat{Y}(v)$, since the task itself naturally produces a discrete label. For link prediction, however, performance is conventionally assessed via ranking (e.g., top-$k$ retrieval), so there is no canonical threshold for converting the score $f(v,u)$ into a hard decision; we therefore evaluate $\Delta^{\mathrm{LP}}_{\mathrm{DP}}$ directly on the predicted scores.
\end{remark}


\vspace{-3mm}
\section{Task-Level Bias Analysis} \label{sec:bias_analysis}
\vspace{-3mm}
In this section, we analyze and bound the demographic and mean parity gaps in terms of the final learned representations (prior to the task-specific output layer) across three core tasks: (i) node classification, (ii) link prediction, and (iii) node regression. 
Prior work has derived analogous task-level bounds for (i) \citep{kose2024fairgat} and (ii) \citep{li2021dyadic,kose2024fairwire}, but each is limited in a distinct way. For node classification, \citep{kose2024fairgat} bounds a metric related to the demographic parity gap. However, this metric equals the true, hard-thresholded gap under a stochastic labeling rule applied on the softmax outputs rather than the deterministic ``arg-max'' decision rule standard in practice. We show that this metric in fact corresponds to a soft relaxation of the DP gap under the standard decision rule, and we provide a tighter bound in this setting. For link prediction, \citep{li2021dyadic,kose2024fairwire} bound the parity gap of the pre-activation link score rather than the sigmoid-activated predicted probability that is actually deployed; we extend their result to the latter. Finally, to the best of our knowledge, we establish the first such bound for node regression. Together, these results reveal how the task-specific output transformation (e.g., softmax, inner product) can itself amplify or attenuate bias, offering theoretical insight into how node embeddings must be structured to effectively constrain the fairness gap for each task.

Let $\mathbf{h}_v$ denote the final representation of node $v \in \mathcal{V}$ before the prediction layer.
Throughout, $U$ denotes the discrete uniform distribution over $\setV$, and expectations are taken with respect to it. 
We write $\mathbb{E}[\,\cdot\,]$ for expectations over uniformly sampled vertices $v \sim U$ or vertex pairs $(v,u) \sim U \times U$.
Define $\boldsymbol{\mu}_k := \mathbb{E}[\vh_v\,|\,v\in\setS_k]$ and $\mathbf{C}_k := \mathrm{cov}(\vh_v\,|\,v\in\setS_k)$ as the \textit{group-conditional mean} and the \textit{within-group covariance} of the node representations corresponding to group $\setS_k$, respectively. 
All proofs are deferred to Appendix~\ref{app:bias_analysis:proofs}.

\textbf{(i) Node Classification:}
For a node classification task, the goal is to use the model to generate a class label for a given node $v$. This is done by first applying an affine composition of the node embedding vector $\vh_v$ with a non-linear, $L$-Lipschitz activation function $\sigma(\cdot)$; typically the softmax operator (with $L=1/2$).
Let  $g(v) := \mathbf{W}\mathbf{h}_v + \mathbf{b}$ denote the affine function of $\vh_v$, and on composing it with $\sigma(\cdot)$, we obtain the vector of ``soft'' class labels  $f(v) := \sigma(\mathbf{W}\mathbf{h}_v + \mathbf{b})$. The final ``hard'' label is obtained by returning the index of the largest entry of $f(v)$. 
\begin{proposition}
\label{prop:nc_2_multigroup}
If $\max_{k,\ell\in[K]}\lVert\boldsymbol{\mu}_k-\boldsymbol{\mu}_\ell\rVert_2 \leq \delta$, then it holds that
\vspace{-2mm}
\begin{equation}
    \max_{k,\ell\in[K]}\big\lVert\mathbb{E}[f(v)|v \in \setS_k]-\mathbb{E}[f(v)|v \in \setS_\ell]\big\rVert_2 \leq L\left(\lVert\mathbf{W}\rVert_2\,\delta + 2\max_{k\in[K]}\sqrt{\mathrm{tr}(\mathbf{W}\mathbf{C}_k\mathbf{W}^\top)}\right).
\end{equation}
\end{proposition}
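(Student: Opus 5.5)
The idea is to linearize around the group means and pay for the linearization with the within-group spread. Fix two groups $k,\ell$. Write $\bar g_k := \mathbf{W}\boldsymbol{\mu}_k + \mathbf{b}$, which is the affine map applied to the group mean. Then split
\[
\mathbb{E}[f(v)\mid v\in\setS_k]-\mathbb{E}[f(v)\mid v\in\setS_\ell]
= \big(\mathbb{E}[f(v)\mid \setS_k]-\sigma(\bar g_k)\big) + \big(\sigma(\bar g_k)-\sigma(\bar g_\ell)\big) + \big(\sigma(\bar g_\ell)-\mathbb{E}[f(v)\mid \setS_\ell]\big).
\]
Applying the triangle inequality turns this into three terms, which we bound separately.

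\textbf{Middle term.} Use the $L$-Lipschitz property of $\sigma$ in $\lVert\cdot\rVert_2$:
\[
\lVert\sigma(\bar g_k)-\sigma(\bar g_\ell)\rVert_2 \le L\lVert\mathbf{W}(\boldsymbol{\mu}_k-\boldsymbol{\mu}_\ell)\rVert_2 \le L\lVert\mathbf{W}\rVert_2\,\delta .
\]
The last step uses the hypothesis and the definition of the spectral norm.

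\textbf{Outer terms.} Each of these is a Jensen-gap term. For group $k$:
\begin{align*}
\big\lVert\mathbb{E}[\sigma(g(v))-\sigma(\bar g_k)\mid\setS_k]\big\rVert_2
&\le \mathbb{E}\big[\lVert\sigma(g(v))-\sigma(\bar g_k)\rVert_2\mid\setS_k\big]\\
&\le L\,\mathbb{E}\big[\lVert\mathbf{W}(\vh_v-\boldsymbol{\mu}_k)\rVert_2\mid\setS_k\big]\\
&\le L\sqrt{\mathbb{E}\big[\lVert\mathbf{W}(\vh_v-\boldsymbol{\mu}_k)\rVert_2^2\mid\setS_k\big]}.
\end{align*}
The first line is Jensen for the convex norm, the second is Lipschitzness, and the third is Cauchy–Schwarz (equivalently, Jensen for the square root). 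The inner second moment is $\mathrm{tr}(\mathbf{W}\mathbf{C}_k\mathbf{W}^\top)$, by the cyclic trace identity $\mathbb{E}\lVert\mathbf{W}\vz\rVert^2=\mathrm{tr}(\mathbf{W}\,\mathbb{E}[\vz\vz^\top]\mathbf{W}^\top)$ applied with $\vz=\vh_v-\boldsymbol{\mu}_k$. The same bound holds for group $\ell$. Bounding both by the maximum over groups gives the term $2\max_k\sqrt{\mathrm{tr}(\mathbf{W}\mathbf{C}_k\mathbf{W}^\top)}$.

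\textbf{Assembly.} Adding the three bounds gives the claimed inequality for each pair $(k,\ell)$, and taking the maximum over pairs finishes the proof. No step is a genuine obstacle; the only point needing care is that the Lipschitz constant is with respect to the Euclidean norm on both sides. The paper quotes $L=1/2$ for softmax, and I would take that value as given.
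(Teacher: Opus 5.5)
Your proposal is correct and follows the paper's proof essentially step for step. It uses the same three-term decomposition around $\sigma(\mathbf{W}\boldsymbol{\mu}_k+\mathbf{b})$, Lipschitzness plus the spectral norm for the mean-gap term, and Jensen for the norm, Lipschitzness and Jensen for the square root to obtain $\sqrt{\mathrm{tr}(\mathbf{W}\mathbf{C}_k\mathbf{W}^\top)}$ for the outer terms. The paper likewise first proves the two-group bound and then takes maxima over pairs and groups, exactly as in your assembly step.
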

\begin{corollary}
\label{cor:dp_bound}
For binary classification with softmax output, let $\Pr(\hat{Y}(v)=1)$ denote the predicted positive class probability of node $v$, and 
$\bar{p}_k := \frac{1}{n_k}\sum_{v\in\setS_k}\Pr(\hat{Y}(v)=1)$ denote the group-mean of predicted positive scores of group $\setS_k$. Under the
conditions of Proposition~\ref{prop:nc_2_multigroup}, it holds that the ``soft'' demographic parity gap:
$
    \max_{k,\ell\in[K]}\lvert\bar{p}_k - \bar{p}_\ell\rvert
    \leq \frac{L}{\sqrt{2}}\left(\lVert\mathbf{W}\rVert_2\,\delta
    + 2\max_{k\in[K]}\sqrt{\mathrm{tr}(\mathbf{W}\mathbf{C}_k\mathbf{W}^\top)}\right).
$
\end{corollary}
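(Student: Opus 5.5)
The corollary should follow from Proposition~\ref{prop:nc_2_multigroup} by a norm comparison specific to two-class softmax outputs. With two classes, $f(v) = (1-p_v,\, p_v)^\top$, where $p_v := \Pr(\hat{Y}(v)=1)$. Hence for any groups $k,\ell$ the vector $\mathbb{E}[f(v)\,|\,v\in\setS_k]-\mathbb{E}[f(v)\,|\,v\in\setS_\ell]$ equals $(\bar p_\ell-\bar p_k,\ \bar p_k-\bar p_\ell)^\top$, because conditional expectation under the uniform distribution $U$ restricted to $\setS_k$ is the group average, so $\mathbb{E}[p_v\,|\,v\in\setS_k]=\bar p_k$. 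The Euclidean norm of this vector is $\sqrt{2}\,\lvert\bar p_k-\bar p_\ell\rvert$.

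The steps, in order, are as follows.
\begin{enumerate}
\item Write the two-class softmax output explicitly and note that its two coordinates sum to one.
\item Compute the difference of group-conditional expectations coordinatewise, which gives the identity $\lVert \mathbb{E}[f|\setS_k]-\mathbb{E}[f|\setS_\ell]\rVert_2 = \sqrt{2}\,\lvert \bar p_k-\bar p_\ell\rvert$ for every pair $(k,\ell)$.
\item Take the maximum over $k,\ell\in[K]$ on both sides. Since the identity holds pairwise, the two maxima coincide up to the factor $\sqrt2$.
\item Invoke Proposition~\ref{prop:nc_2_multigroup}, whose hypothesis $\max_{k,\ell}\lVert\boldsymbol\mu_k-\boldsymbol\mu_\ell\rVert_2\le\delta$ is assumed. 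This bounds the left side by $L(\lVert\mathbf W\rVert_2\delta+2\max_k\sqrt{\tr(\mathbf W\mathbf C_k\mathbf W^\top)})$, and dividing by $\sqrt2$ gives the claimed bound.
\end{enumerate}

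There is no real obstacle. The only point needing care is to confirm that the conditional expectation of the coordinate $p_v$ under $v\sim U$ given $v\in\setS_k$ is exactly the group mean $\bar p_k$, with $n_k=|\setS_k|$. A second check is that the Lipschitz constant $L$ used in the proposition is the one for the softmax applied to the two-dimensional logits, so no further adjustment of constants is needed.
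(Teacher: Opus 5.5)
Your proposal is correct and follows the paper's proof essentially step for step. You write the binary softmax output as $(1-p_v,\,p_v)^\top$ and note that the group-mean difference has Euclidean norm $\sqrt{2}\,\lvert\bar p_k-\bar p_\ell\rvert$. You then take the maximum over pairs and divide the bound of Proposition~\ref{prop:nc_2_multigroup} by $\sqrt{2}$.
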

\vspace{-2mm}

\textbf{Comparison with the FairGAT bound.}
The result closest to Proposition \ref{prop:nc_2_multigroup} is that of FairGAT~\citep{kose2024fairgat}, which is derived for $K=2$ sensitive groups. In this setting, their result also upper bounds the same quantity by the separation of the group means $\lVert\boldsymbol{\mu}_0-\boldsymbol{\mu}_1\rVert_2$ and the within-group spread. 
The main difference lies in bounding the second term via a uniform deviation $\ell_{\infty}$ bound $\Delta_z$ on the pre-activations $\mathbf{z}_v := \mathbf{W}\vh_v$, whereas our $\ell_2$-style analysis bounds the spread in terms of the within-group covariances $\mathbf{C}_k$. 
Under the same hypothesis $\lVert\boldsymbol{\mu}_0-\boldsymbol{\mu}_1\rVert_2\leq\delta$, the two bounds can be expressed as  $\mathcal{B}_{\mathrm{ours}} := L(\lVert\mathbf{W}\rVert_2\delta + \sum_{k=0}^1 \sqrt{\mathrm{tr}(\mathbf{W}\mathbf{C}_k\mathbf{W}^\top)})$ and $\mathcal{B}_{\mathrm{FairGAT}} := L(\lVert\mathbf{W}\rVert_2\delta + 2\sqrt{n}\Delta_z)$. Hence, the difference between the two bounds boils down to the expressions corresponding to the within-group spread. Note that the bound of \citet{kose2024fairgat} becomes looser as the number of nodes $n$ grows while ours exhibits no such dependence. This observation suffices to show that our bound is guaranteed to be the tighter of the two under a mild condition on $n$, which holds in essentially every practical setting. Please refer to Appendix~\ref{app:proof:bounds_comparison} for the proof.

\begin{proposition}
\label{prop:bounds_comparison}
Let $n$ denote the number of nodes and $c$ be the number of classes in a node classification task). If $n \geq c$, then $\mathcal{B}_{\mathrm{ours}} \leq \mathcal{B}_{\mathrm{FairGAT}}$.
\end{proposition}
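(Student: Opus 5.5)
The plan is to compare the two bounds term by term. Both $\mathcal{B}_{\mathrm{ours}}$ and $\mathcal{B}_{\mathrm{FairGAT}}$ share the factor $L$ and the mean-separation term $\lVert\mathbf{W}\rVert_2\delta$. So it suffices to show
\begin{equation*}
\sum_{k=0}^{1}\sqrt{\mathrm{tr}(\mathbf{W}\mathbf{C}_k\mathbf{W}^\top)} \;\leq\; 2\sqrt{n}\,\Delta_z ,
\end{equation*}
and for this it is enough to prove, for each group $k$ separately, that $\sqrt{\mathrm{tr}(\mathbf{W}\mathbf{C}_k\mathbf{W}^\top)}\leq\sqrt{c}\,\Delta_z$, and then use $c\leq n$.

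First, rewrite the trace as a variance of the pre-activations. Since $\mathbf{z}_v=\mathbf{W}\vh_v\in\mathbb{R}^c$, we have
\begin{equation*}
\mathrm{tr}(\mathbf{W}\mathbf{C}_k\mathbf{W}^\top)=\mathrm{tr}(\mathrm{cov}(\mathbf{z}_v\,|\,v\in\setS_k))=\mathbb{E}\big[\lVert\mathbf{z}_v-\mathbb{E}[\mathbf{z}_v|v\in\setS_k]\rVert_2^2\,\big|\,v\in\setS_k\big]=\sum_{j=1}^{c}\mathrm{Var}(z_{v,j}\,|\,v\in\setS_k).
\end{equation*}
Second, bound each coordinate variance by $\Delta_z^2$ using the FairGAT hypothesis, which is a uniform $\ell_\infty$ deviation bound on $\mathbf{z}_v$. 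The variance is the minimum over constants $a$ of $\mathbb{E}[(z_{v,j}-a)^2]$. Whether $\Delta_z$ bounds deviations from a reference point (e.g.\ the global mean) or pairwise differences $|z_{v,j}-z_{u,j}|$, we can take $a$ to be that reference point, or any fixed $z_{u,j}$ in the group. This gives $\mathrm{Var}(z_{v,j}|v\in\setS_k)\leq\Delta_z^2$. Summing over the $c$ coordinates yields $\mathrm{tr}(\mathbf{W}\mathbf{C}_k\mathbf{W}^\top)\leq c\,\Delta_z^2$. Third, take square roots, sum over $k\in\{0,1\}$, and apply $n\geq c$ to get $2\sqrt{c}\,\Delta_z\leq2\sqrt{n}\,\Delta_z$. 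This concludes the proof.

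The main obstacle is the second step: pinning down exactly how \citet{kose2024fairgat} define $\Delta_z$. Their definition could be a per-coordinate bound on deviation from the group mean, from the overall mean, or between pairs of nodes. I would first recall their precise definition. I would then use the variational characterization of the variance (the mean squared deviation from the mean is at most the mean squared deviation from any other point) to reduce any of these variants to a per-coordinate bound of $\Delta_z^2$.

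The pairwise-difference variant needs slightly more care. There I would either fix a node $u$ and centre at $z_{u,j}$, or use $\mathrm{Var}(X)=\tfrac12\mathbb{E}[(X-X')^2]$ for an i.i.d.\ copy $X'$. Either route gives the bound $\Delta_z^2$ or better, so the constant is not lost.
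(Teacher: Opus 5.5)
Your proposal is correct and follows essentially the same route as the paper: cancel the common mean-gap term, write $\mathrm{tr}(\mathbf{W}\mathbf{C}_s\mathbf{W}^\top)$ as the expected squared deviation of $\mathbf{z}$ from its group mean, bound it coordinatewise by $c\,\Delta_z^2$, and conclude via $c\le n$. The paper phrases the last step as $n\ge n^*$ with $n^*\le c$, which divides by $\Delta_z^2$ and so implicitly assumes $\Delta_z>0$; your direct chain avoids that division, and since the paper takes $\Delta_z$ to bound $\lVert\mathbf{z}_j-\bar{\mathbf{z}}_s\rVert_\infty$ with $\bar{\mathbf{z}}_s$ the group mean, the simplest of your cases applies and the hedging over other definitions is not needed.
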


\textbf{(ii) Link Prediction:}
For the link prediction task, given a vertex pair $\{v,u\}$, the model generates a soft prediction that indicates the likelihood of $u$ and $v$ forming a link. This is typically accomplished by using a final layer which accepts the nodal representations $\vh_u,\vh_v$, and computes their inner product, followed by a non-linear, $L$-Lipschitz function $\sigma(\cdot)$; e.g. sigmoid, with $L=1/4$. Let's define the inner product function $g(v,u) := \vh_v^\top \boldsymbol{\Sigma} \vh_u$, where $\boldsymbol{\Sigma} \in \mathbb{S}_{++}^d$ is a positive-definite matrix. Composing the result with the non-linearity $\sigma(\cdot)$ we obtain $f(v,u) := \sigma(\vh_v^\top \boldsymbol{\Sigma} \vh_u)$. We make the standard assumption, following \citep{li2021dyadic}, that for each $v \in \setV$, there is a scalar $Q>0$, such that $\lVert \vh_v\rVert_2 \leq Q$. 

\begin{proposition}
\label{prop:lp_2_multigroup}
If $\max_{k,\ell\in[K]}\lVert\boldsymbol{\mu}_k-\boldsymbol{\mu}_\ell\rVert_2 \leq \delta$, then it holds that
\begin{equation}
\begin{aligned}
    \big|\mathbb{E} [f(v,u) | s_u\! =\! s_v] - \mathbb{E} [f(v,u) | s_u\! \neq\! s_v]\big| 
    \leq L Q \lVert \boldsymbol{\Sigma} \rVert_2 \left( 2\delta \!+\! \sum\nolimits_{k\in[K]}\!\left( 2\alpha_k + w_k \right) \sqrt{\mathrm{tr}(\mathbf{C}_k)} \right)\!
\end{aligned}
\end{equation}
where
$\alpha_k := \frac{n_k^2}{\sum_{j=0}^{K-1} n_j^2}$, $\beta_{kl} := \frac{n_k n_l}{n^2 - \sum_{j=0}^{K-1} n_j^2} = \frac{n_k n_l}{\sum_{i\neq j} n_i n_j}$, with $w_k := \sum_{l\neq k}\left(\beta_{kl}+\beta_{lk}\right)$, and $\mathbf{C}_k:=\mathrm{cov}(\mathbf h_v\mid v\in\mathcal S_k)$, for every $k\in[K]$.
\end{proposition}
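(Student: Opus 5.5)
Under $(v,u)\sim U\times U$, the intra-group conditional distribution puts mass $\alpha_k$ on the block $\setS_k\times\setS_k$, and within each block the pair is uniform on $\setS_k\times\setS_k$. Likewise, the inter-group distribution puts mass $\beta_{kl}$ on $\setS_k\times\setS_l$ for $k\neq l$. So
\[
\mathbb{E}[f\,|\,s_u=s_v]=\sum_k \alpha_k\,\mathbb{E}_{k,k}[f],\qquad \mathbb{E}[f\,|\,s_u\neq s_v]=\sum_{k\neq l}\beta_{kl}\,\mathbb{E}_{k,l}[f],
\]
where $\mathbb{E}_{k,l}$ denotes expectation with $v$ uniform on $\setS_k$ and $u$ uniform on $\setS_l$, drawn independently. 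Since $\sum_k\alpha_k=\sum_{k\ne l}\beta_{kl}=1$, I will insert a common reference point and compare each block average to it.

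For each block I replace $f(v,u)=\sigma(\vh_v^\top\boldsymbol{\Sigma}\vh_u)$ by the surrogate $\sigma(\boldsymbol{\mu}_k^\top\boldsymbol{\Sigma}\boldsymbol{\mu}_l)$. By $L$-Lipschitzness and the bilinear split $\vh_v^\top\boldsymbol{\Sigma}\vh_u-\boldsymbol{\mu}_k^\top\boldsymbol{\Sigma}\boldsymbol{\mu}_l=(\vh_v-\boldsymbol{\mu}_k)^\top\boldsymbol{\Sigma}\vh_u+\boldsymbol{\mu}_k^\top\boldsymbol{\Sigma}(\vh_u-\boldsymbol{\mu}_l)$,
\[
|\mathbb{E}_{k,l}[f]-\sigma(\boldsymbol{\mu}_k^\top\boldsymbol{\Sigma}\boldsymbol{\mu}_l)|\le LQ\lVert\boldsymbol{\Sigma}\rVert_2\big(\mathbb{E}_k\lVert\vh_v-\boldsymbol{\mu}_k\rVert_2+\mathbb{E}_l\lVert\vh_u-\boldsymbol{\mu}_l\rVert_2\big).
\]
Here I use $\lVert\vh_u\rVert_2\le Q$, and also $\lVert\boldsymbol{\mu}_k\rVert_2\le Q$, which holds by convexity. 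Jensen's inequality gives $\mathbb{E}_k\lVert\vh_v-\boldsymbol{\mu}_k\rVert_2\le\sqrt{\mathrm{tr}(\mathbf{C}_k)}$. Summing with the weights, the intra blocks contribute $\sum_k 2\alpha_k\sqrt{\mathrm{tr}(\mathbf{C}_k)}$, since both endpoints lie in $\setS_k$. The inter blocks contribute $\sum_{k\neq l}\beta_{kl}(\sqrt{\mathrm{tr}\,\mathbf{C}_k}+\sqrt{\mathrm{tr}\,\mathbf{C}_l})$. Regrouping by $k$, this equals $\sum_k w_k\sqrt{\mathrm{tr}(\mathbf{C}_k)}$ with $w_k=\sum_{l\ne k}(\beta_{kl}+\beta_{lk})$. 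This produces exactly the covariance term of the statement.

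What remains is to bound the gap between the two mixtures of surrogates, $\big|\sum_k\alpha_k\sigma(\boldsymbol{\mu}_k^\top\boldsymbol{\Sigma}\boldsymbol{\mu}_k)-\sum_{k\ne l}\beta_{kl}\sigma(\boldsymbol{\mu}_k^\top\boldsymbol{\Sigma}\boldsymbol{\mu}_l)\big|$, by $2LQ\lVert\boldsymbol{\Sigma}\rVert_2\delta$. I fix a reference mean, say $\boldsymbol{\mu}_0$, and compare every surrogate with $\sigma(\boldsymbol{\mu}_0^\top\boldsymbol{\Sigma}\boldsymbol{\mu}_0)$. Because both mixtures are convex combinations, the difference of mixtures is bounded by the maximum over pairs $(k,l)$ of $|\sigma(\boldsymbol{\mu}_k^\top\boldsymbol{\Sigma}\boldsymbol{\mu}_l)-\sigma(\boldsymbol{\mu}_{k'}^\top\boldsymbol{\Sigma}\boldsymbol{\mu}_{l'})|$. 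Applying the same bilinear split, this is at most $LQ\lVert\boldsymbol{\Sigma}\rVert_2(\lVert\boldsymbol{\mu}_k-\boldsymbol{\mu}_{k'}\rVert_2+\lVert\boldsymbol{\mu}_l-\boldsymbol{\mu}_{l'}\rVert_2)\le 2LQ\lVert\boldsymbol{\Sigma}\rVert_2\delta$. In fact I should compare the two mixtures directly pairwise rather than via $\boldsymbol{\mu}_0$; that avoids paying $2\delta$ twice.

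The step I expect to need the most care is the bookkeeping of the weights: checking that the conditional laws are exactly the $\alpha$- and $\beta$-mixtures of uniform block laws, and that $\beta_{kl}$ is symmetric. Symmetry matters because each unordered inter pair then contributes to both $w_k$ and $w_l$. The triangle-inequality chaining is routine. I will also note the bound $\lVert\boldsymbol{\mu}_k\rVert_2\le Q$ explicitly, since it is what lets a single constant $Q$ appear in both the mean and the covariance terms.
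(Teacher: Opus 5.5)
Your proposal is correct and follows the paper's proof essentially step for step. It uses the same $\alpha_k/\beta_{kl}$ block-mixture decomposition, the same surrogate values $\sigma(\boldsymbol{\mu}_k^\top\boldsymbol{\Sigma}\boldsymbol{\mu}_l)$ with the bilinear split and Jensen for the spread terms, and the same pairwise comparison of the two surrogate mixtures (the paper's $\sum_k\sum_{l\neq m}\alpha_k\beta_{lm}$-weighted differences) for the $2LQ\lVert\boldsymbol{\Sigma}\rVert_2\delta$ term. Your explicit remark that $\lVert\boldsymbol{\mu}_k\rVert_2\le Q$ follows by convexity fills in a step the paper uses without stating it.
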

Unlike the existing bounds in \citep{li2021dyadic,kose2024fairwire}, which are derived with respect to the pre-activation bilinear score $g(v,u)$, extending their result to account for the sigmoid activation and bound the actual predicted edge probability $f(v,u) = \sigma(g(v,u))$ is technically non-trivial, and does not follow as a direct corollary. Our formulation instead directly bounds $\Delta_{\mathrm{DP}}^{\mathrm{LP}}$ as computed on the deployed, sigmoid-activated prediction, the exact quantity reported in empirical evaluation, rather than a pre-activation proxy.

\textbf{(iii) Node Regression:}
For node regression, given node $v$, the model generates a prediction via the linear transformation $g(v) = \mathbf{W}\vh_v + \mathbf{b}$ with $\mathbf{W}\in\mathbb{R}^{1\times d}$. Here, we have the following bound.
\begin{proposition}\label{prop:nr_1_multigroup}
If $\max_{k,l} \lVert \boldsymbol{\mu}_k - \boldsymbol{\mu}_l\rVert_2 \leq \delta$, then it holds that
\begin{equation}
    \max_{k,l} \left\{\lVert\mathbb{E} [g(v) | v \in \setS_k] - \mathbb{E} [g(v) | v \in \setS_l]\rVert_2\right\} \leq \lVert \mathbf{W} \rVert_2 \delta .
\end{equation}
\end{proposition}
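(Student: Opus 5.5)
The plan is a direct application of linearity of expectation followed by the definition of the operator norm. No distributional or Lipschitz argument is needed, because $g$ is affine and has no nonlinear activation.

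\textbf{Step 1 (linearity).} Since $g(v) = \mathbf{W}\vh_v + \mathbf{b}$ and the conditional expectation given $v\in\setS_k$ is an average over the finite set $\setS_k$ under the uniform distribution $U$, we get
\begin{equation*}
\mathbb{E}[g(v)\,|\,v\in\setS_k] = \mathbf{W}\,\mathbb{E}[\vh_v\,|\,v\in\setS_k] + \mathbf{b} = \mathbf{W}\boldsymbol{\mu}_k + \mathbf{b}.
\end{equation*}

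\textbf{Step 2 (cancellation of the bias).} For any pair $k,l\in[K]$, the bias term $\mathbf{b}$ is common to both groups and cancels:
\begin{equation*}
\mathbb{E}[g(v)\,|\,v\in\setS_k]-\mathbb{E}[g(v)\,|\,v\in\setS_l] = \mathbf{W}(\boldsymbol{\mu}_k-\boldsymbol{\mu}_l).
\end{equation*}

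\textbf{Step 3 (operator norm bound).} By the definition of the spectral norm, and using the hypothesis of the proposition,
\begin{equation*}
\lVert \mathbf{W}(\boldsymbol{\mu}_k-\boldsymbol{\mu}_l)\rVert_2 \le \lVert\mathbf{W}\rVert_2\,\lVert\boldsymbol{\mu}_k-\boldsymbol{\mu}_l\rVert_2 \le \lVert\mathbf{W}\rVert_2\,\delta.
\end{equation*}
Here $\mathbf{W}\in\mathbb{R}^{1\times d}$, so $\lVert\mathbf{W}\rVert_2$ is just the Euclidean norm of the row vector. The left-hand side is then an absolute value, and the inequality is simply Cauchy--Schwarz.

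\textbf{Step 4 (maximum over pairs).} The bound in Step 3 holds uniformly in $(k,l)$, so taking the maximum over all pairs gives the claim.

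There is no genuine obstacle. The only point worth checking is that the within-group covariance does not appear, unlike in Proposition~\ref{prop:nc_2_multigroup}. This is expected: without a nonlinearity, $\mathbb{E}[g]$ depends only on the group means, so no Jensen- or Lipschitz-type spread term arises. It also means that $\Delta_{\mathrm{MG}}$ is exactly zero whenever $\delta=0$, since $\hat{Y}(v)=g(v)$.
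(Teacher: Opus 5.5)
Your proof is correct and takes the same route as the paper. The paper proves the two-group linear-readout case (Proposition~\ref{prop:nc_1}) by linearity of expectation, cancellation of $\mathbf{b}$, and the spectral-norm inequality, then takes the maximum over all pairs of groups, exactly as in your Steps 1--4.
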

\vspace{-3mm}
Note that the bound simplifies to an expression that is independent of the within-group covariance, i.e., the terms stemming from the non-linearity. For a more detailed analysis refer to Appendix~\ref{app:effect_nonlinearity}. 

\vspace{-3mm}
\section{FairProp}
\label{sec:FairProp}

\vspace{-2mm}
\subsection{APPNP layers as Unfolded Diffusion}
\vspace{-2mm}
The (A)PPNP architecture proposed in \cite{gasteiger2018predict} comprises two distinct stages: an MLP prediction stage that takes as input only the node features $\vX$ (but not the adjacency $\vA$), followed by a propagation stage which applies diffusion on the output of the MLP. Let $\vH^{(0)} = g_\theta(\vX)$ be the embedding produced by an MLP applied to the node features $\vX$, $\mathbf{S} = \vD^{-\frac{1}{2}}(\vA+\vI)\vD^{-\frac{1}{2}}$ the propagation matrix, and $\vL_{\mathbf{S}} = \vI - \mathbf{S}$ its Laplacian. The propagation scheme of PPNP solves the strongly convex quadratic problem \citep{zhu2021interpreting, ma2021unified}
\begin{equation} \label{eq:PPNP}
  \mathbf{H}^* =  \arg\min_{\mathbf{H}} \left\{f(\mathbf{H}) := \tfrac{1}{2} \gamma \lVert \mathbf{H}-\mathbf{H}^{(0)}\rVert_F^2 + \tfrac{1}{2}(1-\gamma)\mathrm{tr}(\mathbf{H}^\top \vL_{\mathbf{S}} \mathbf{H}) \right\},
\end{equation}
with the parameter $\gamma\in(0,1]$ trading off feature smoothness along the graph edges against proximity to the input features $\vH^{(0)}$. Gradient descent on \eqref{eq:PPNP} with step size $\eta=1$ gives exactly the APPNP iterations $\vH^{(t+1)} = (1-\gamma)\vS\vH^{(t)}+\gamma\vH^{(0)}$. Hence the APPNP propagation layers correspond to {\em algorithmic unrolling} \citep{gregor2010learning,monga2021algorithm} of gradient descent applied to \eqref{eq:PPNP}. Stacking $T$ APPNP propagation layers implements $T$ such gradient steps, resulting in an approximation of the optimal solution $\vH^* = \gamma[\vI - (1-\gamma)\vS]^{-1}\vH^{(0)}$.

In this framework, feature propagation is decoupled from prediction. Consequently, representation-level bias can be directly addressed by modifying the propagation scheme. 
Since propagation itself amounts to unrolled gradient descent on \eqref{eq:PPNP}, bias mitigation can be enacted directly at the optimization level - by augmenting \eqref{eq:PPNP} with fairness-promoting terms - and the resulting propagation scheme obtained via unrolling inherits this bias mitigation by construction (see Figure~\ref{fig:unfolding_figure}).
The closest approach to ours is FMP~\citep{jiang2024chasing}, which only considers node classification and augments the PPNP objective with an $\ell_1$ DP regularizer on the softmax outputs, unrolling a primal-dual solver into a fair message-passing scheme. However, since the fairness term is non-convex in the node representations due to the softmax nonlinearity, their unrolled forward pass lacks optimality guarantees, and its accuracy–fairness trade-off is highly sensitive to hyperparameter tuning.

In contrast to FMP, guided by the analysis of Section~\ref{sec:bias_analysis}, we design {\em convex} fairness-inducing mechanisms that act directly on the node embeddings $\vH^*$. In the previous section, we showed that bias across tasks arises from two distinct sources: the separation between the group means $\delta$, and the within-group covariance $\mathbf{C}_k$. The former can be controlled via a hard constraint (enforced through projection) that either drives the group-mean separation to exactly zero or bounds it within a tolerance $\delta$, while the latter is penalized via a regularization term. Propositions~\ref{prop:nc_2_multigroup},~\ref{prop:lp_2_multigroup}, and~\ref{prop:nr_1_multigroup} bound the prediction disparity in terms of these quantities for $K$ groups across each task. Consequently, controlling both quantities in the embedding $\vH^*$ suffices to yield fair predictions for an arbitrary number of groups, across all three downstream tasks.

\vspace{-2mm}
\subsection{Fair Projections}
\vspace{-2mm}
\textbf{Exact mean-equalization projection.} \label{sec:fair_proj:exact}
For an embedding matrix $\mathbf{H} \in \mathbb{R}^{n \times d}$ ($n$ nodes, $d$ features), zeroing out the mean group separation $\delta$ requires the group-means of the embeddings to coincide, i.e., $\bar{\vh}_{\setS_i} = \bar{\vh}_{\setS_j}$, for every pair of groups $i,j$, where $\bar{\vh}_{\setS_i} = \frac{1}{|\setS_i|}\ve_{\setS_i}^\top \vH$, with $\ve_{\setS_i}$ the indicator vector of subgroup $\setS_i$. This is equivalent to $\bar{\vh}_{\setS_k} = \bar{\vh}_{\setS_0}$, for every $k\in[K]\setminus\{0\}$. 
Define the \emph{mean-equalization} set 
\begin{equation} \label{eq:mean-fair-set}
    \setX_0 := \{ \mathbf{H} : \mathbf{a}^\top_k\mathbf{H} = \mathbf{0},\ k\in[K]\setminus\{0\}\}, \quad\quad \mathbf{a}_k := \frac{\ve_{\setS_0}}{n_0} - \frac{\ve_{\setS_k}}{n_k},
\end{equation}
Note that $\setX_0 = \{\mathbf{H} : \mathbf{A}^\top \mathbf{H} = \mathbf{0}\}$, where $\mathbf{A} := [\va_1,\va_2,\cdots,\va_{K-1}]$; equivalently, $\setX_0$ consists of all matrices whose columns lie in the orthogonal complement of $\mathrm{col}(\mathbf{A})$.
In particular, it admits a closed-form Euclidean projection,
\begin{equation}\label{eq:proj:matrix_form}
    \vH^* = \mathrm{Proj}_{\setX_0}(\mathbf{H}) = \mathbf{P}\cdot \mathbf{H},\quad\quad \mathbf{P} = \vI + \frac{\ve\ve^\top}{n} - \sum_{k=0}^{K-1}\frac{\ve_{\setS_k}\ve_{\setS_k}^\top}{|\setS_k|}.
\end{equation}
At a level of a single node is $\vh^*_v = \vh_v + \bar{\vh} - \bar{\vh}_{\setS_k},\, \forall v\in\setS_k $, i.e., every node embedding is shifted by the difference of the global mean $\bar{\vh}$ and its own group mean $\bar{\vh}_{\setS_k}$.
The operator $\vP$ is a symmetric orthogonal projection. It is \emph{parameter-free,} and \emph{linear}, hence differentiable with backward pass $d\vH = \vP\cdot d\vH^*$, identical to the forward pass, and equalizes all group means \emph{exactly}, so $\delta = 0$ after each projection.
In practice, $\mathbf{P}$ is never formed explicitly: computing the $K$ group means and shifting each row costs $\mathcal{O}(nd)$ time and $\mathcal{O}(Kd)$ memory for both the forward and backward pass, which is dominated by the $\mathcal{O}((m+n)d)$ cost of a single propagation step, resulting to $\mathcal{O}((m+n)dL)$, for $L$ layers.
For the full derivation and the backward pass please refer to Appendix~\ref{app:proof:projection}.

\textbf{Relaxed fairness constraint.}
Enforcing $\delta = 0$ exactly can be overly restrictive in practice, since it may force node representations to discard task-relevant information in order to fully eliminate group separation, resulting in a significant utility drop. Hence, we also consider the relaxed set
$\setX_\delta := \{\mathbf{H} : \lVert\mathbf{H}^\top\mathbf{a}\rVert_\infty \leq \delta\}$, with $\mathbf{a} = \frac{\ve_{\setS_0}}{n_0} - \frac{\ve_{\setS_1}}{n_1}$,
which bounds the group-mean gap by a \emph{fairness tolerance} $\delta$ in every feature and recovers $\setX_0$ at $\delta = 0$.
For a binary sensitive attribute (i.e., $K=2$), the projection onto $\setX_\delta$ remains in closed form and decouples across feature columns:
$\mathrm{Proj}_{\setX_\delta}(\mathbf{H}) = \mathbf{H} - \tfrac{\mathbf{a}}{\lVert\mathbf{a}\rVert_2^2}\big(\mathrm{ReLU}(\mathbf{s}-\delta\mathbf{1}) - \mathrm{ReLU}(-\mathbf{s}-\delta\mathbf{1})\big)^\top$, with $\mathbf{s} = \mathbf{H}^\top\mathbf{a}$.
The fairness layer is thus a ReLU-type layer with bias $-\delta$ acting on the group-mean gap. Like a ReLU, it is gated: columns whose gap lies within $\delta$ are inactive and pass through unchanged in both the forward and backward pass, while only violating columns are shifted back to the boundary of the slab. At $\delta = 0$ the two ReLUs sum to the identity, recovering the exact projection.
For $K>2$ groups, $\setX_\delta$ becomes an intersection of $K-1$ slabs, and the projection can be computed with Dykstra's algorithm \citep{boyle1986method}. 
For the backward pass, we can differentiate though the KKT conditions of the problem using implicit differentiation \citep{blondel2022efficient}.
The full analysis can be found in Appendix~\ref{app:slab}.

\textbf{Interleaving Fair Projections with Propagation.}
In order to enforce the exact mean-equalization constraint of the set $\setX_0$, or its relaxed slab alternative $\setX_\delta$, we can solve the constrained version of the problem of \eqref{eq:PPNP}, which is the following constrained strongly convex quadratic problem
\begin{equation} \label{eq:PPNP:constrained}
    \min \left\{f(\mathbf{H}) \ \ \mathrm{s.t.} \ \ \vH\in\setX_{\delta} \right\}.
\end{equation}
With $\delta = 0$, the solution of the problem satisfies exact mean equalization, while $\delta >0$ refers to the relaxed mean separation.
Applying projected gradient descent on this problem, we get $\vH^{(t+1)} = \mathrm{Proj}_{\setX_\delta}[\vH^{(t)} - \eta \nabla f(\vH^{(t)})]$. Setting $\eta=1$, results in $\vH^{(t+1)} = \mathrm{Proj}_{\setX_\delta}[(1-\gamma)\vS\vH^{(t)} +\gamma \vH^{(0)}]$, which is equivalent to interleaving the fair projection layer after each propagation step of APPNP. 

\vspace{-2mm}
\subsection{Variance Control} \label{sec:FairProp:variance}
\vspace{-2mm}
\textbf{Within-Group Variance Regularization.}
In order to fully suppress the parity gap bounds, we have to not only minimize the mean gap, but also suppress the within-group variance terms.
For each demographic group $\setS_k$, let $\mathbf{H}_{\setS_k} = \mathbf{B}_k\mathbf{H}$, where $\mathbf{B}_k \in \{0,1\}^{n_k \times n}$ is a binary selection matrix. Its mean-centered representation is given by $\tilde{\mathbf{H}}_{\setS_k} = \mathbf{M}_k \mathbf{H}_{\setS_k}$, with centering matrix $\mathbf{M}_k = \vI_{n_k} - \frac{1}{n_k}\mathbf{1}\mathbf{1}^\top$. The empirical within-group covariance is then $\hat{\mathbf{C}}_k = \frac{1}{n_k}\tilde{\mathbf{H}}_{\setS_k}^\top \tilde{\mathbf{H}}_{\setS_k}$, leading to the regularizer, $\setL_{\mathrm{cov}}(\vH) := \sum_{k\in [K]} \mathrm{tr}(\hat{\mathbf{C}}_k) = \sum_{k\in [K]} \frac{1}{n_k}\lVert \tilde{\vH}_{\setS_k}\rVert_F^2$.
Note that the centering of the embeddings is necessary, since without it we would penalize $\frac{1}{n_k}\lVert \mathbf{H}_{\setS_k}\rVert^2_F = \mathrm{tr}(\hat{\mathbf{C}}_k) + \lVert \bar{\vh}_{\setS_k}\rVert_2^2$, collapsing the group means $\bar{\vh}_{\setS_k}$ to zero. 
Defining $\vM := \sum_{k\in[K]} \frac{1}{n_k}\mathbf{B}_k^\top \mathbf{M}_k \mathbf{B}_k$ gives $\mathrm{tr}(\mathbf{H}^\top \vM \mathbf{H}) = \sum_{k\in[K]} \tfrac{1}{n_k}\lVert \tilde{\mathbf{H}}_{\setS_k}\rVert_F^2$. Finally, we can express the regularizer compactly in matrix form, $\setL_{\mathrm{cov}}(\vH) = \mathrm{tr}(\mathbf{H}^\top \vM \mathbf{H})$.

\textbf{Adding Variance Control.}
The regularizer of Section~\ref{sec:FairProp:variance} can be placed inside the diffusion objective itself, resulting in a task-agnostic regularization that controls the variance at every propagation step.
Define the \emph{variance-regularized} objective
\begin{equation}
    f_\lambda(\mathbf{H}) := \tfrac{1}{2}\gamma\lVert \mathbf{H}-\mathbf{H}^{(0)}\rVert_F^2
    + \tfrac{1}{2}(1-\gamma)\,\mathrm{tr}(\mathbf{H}^\top \vL_{\mathbf{S}}\mathbf{H})
    \;+\; \lambda\,\mathrm{tr}(\mathbf{H}^\top \vM \mathbf{H}),
    \label{eq:flambda}
\end{equation}
with $\vM$ as in Section~\ref{sec:FairProp:variance}. Since $\vL_{\vS}\succeq 0$, and $\vM\succeq 0$ as a sum of PSD terms, 
this remains a convex quadratic with Hessian $\vQ_\lambda := \nabla^2 f_{\lambda} = \gamma\vI + (1-\gamma)\vL_{\mathbf{S}} + 2\lambda\vM$. Projected gradient descent on $\min\{f_\lambda(\mathbf{H}) \ \mathrm{s.t.}\ \vH\in\setX_0\}$ with step size $\eta$ gives
\begin{equation}\label{eq:FairProp:cov:pgd_update}
    \mathbf{H}^{(t+1)} = \vP[\mathbf{T}_{\eta,\lambda}\mathbf{H}^{(t)} + \eta\gamma\mathbf{H}^{(0)}], \qquad \mathbf{T}_{\eta,\lambda} := \vI - \eta\,\vQ_\lambda,
\end{equation}
which for $\eta=1$ becomes the FairProp propagation step with an added variance-control term, $\mathbf{H}^{(t+1)} = \vP[(1-\gamma)\mathbf{S}\mathbf{H}^{(t)} + \gamma\mathbf{H}^{(0)} - 2\lambda\vM\mathbf{H}^{(t)}]$, and fixed-point $\mathbf{H}^* = \eta\gamma\big(\vI - \vP\mathbf{T}_{\eta,\lambda}\big)^{-1}\vP\,\mathbf{H}^{(0)}$.
%
The following proposition provides convergence guarantees for the proposed fair propagation scheme. For the proof please refer to Appendix~\ref{app:fair_diff_conv:proof}.
\begin{proposition}
\label{prop:fair_diff_conv}
Let $\gamma\in(0,1]$, $\lambda\geq 0$, and $n_{\min} := \min_k n_k$. If
\begin{equation}
    0 \;<\; \eta \;<\; \frac{2}{\,2-\gamma+\tfrac{2\lambda}{n_{\min}}\,} \Longleftrightarrow \lambda < n_{\min} \left(\frac{1}{\eta} - 1 + \frac{\gamma}{2}\right)
    \label{eq:step_condition}
\end{equation}
then $\rho := \lVert \mathbf{T}_{\eta,\lambda}\rVert_2 < 1$ and the iterations in \eqref{eq:FairProp:cov:pgd_update} converge linearly, $\lVert \mathbf{H}^{(t)}-\mathbf{H}^*\rVert_F \leq \rho^t\lVert \mathbf{H}^{(0)}-\mathbf{H}^*\rVert_F$, to the unique minimizer $\mathbf{H}^*$ of $\min\{f_{\lambda}(\mathbf{H})\ \mathrm{s.t.}\ \vH\in\setX\}$. In particular, the standard APPNP step size $\eta=1$ satisfies \eqref{eq:step_condition} if $\lambda < \gamma\,n_{\min}/2$, placing an upper bound on how aggressively within-group variance can be penalized while guaranteeing convergence.
\end{proposition}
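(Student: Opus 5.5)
The approach is to view the update \eqref{eq:FairProp:cov:pgd_update} as the affine map $\Phi(\vH) := \vP[\mathbf{T}_{\eta,\lambda}\vH + \eta\gamma\vH^{(0)}]$ and show it is a contraction in Frobenius norm with constant $\rho = \lVert \mathbf{T}_{\eta,\lambda}\rVert_2$. Since $\vP$ is a symmetric orthogonal projection, $\lVert \vP\rVert_2\le 1$, and therefore $\lVert \Phi(\vH)-\Phi(\vH')\rVert_F \le \lVert \mathbf{T}_{\eta,\lambda}\rVert_2\lVert \vH-\vH'\rVert_F$. Once we have $\rho<1$, Banach's fixed point theorem gives a unique fixed point $\vH^*$ together with the linear rate.

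It remains to identify that fixed point with the constrained minimizer over $\setX_0$. Projected gradient descent on a convex problem over a closed convex set has, as its fixed points, exactly the points satisfying the optimality condition $\vH = \mathrm{Proj}_{\setX_0}(\vH - \eta\nabla f_\lambda(\vH))$. Because $f_\lambda$ is $\gamma$-strongly convex (indeed $\vQ_\lambda\succeq\gamma\vI$), the minimizer is unique, so it coincides with $\vH^*$. The stated rate in terms of $\vH^{(0)}$ then follows by iterating $\lVert \vH^{(t)}-\vH^*\rVert_F = \lVert \Phi(\vH^{(t-1)})-\Phi(\vH^*)\rVert_F$.

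The core step is bounding the spectrum of $\vQ_\lambda$. Since $\vQ_\lambda$ is symmetric, $\rho = \max\{|1-\eta\lambda_{\min}(\vQ_\lambda)|,\,|1-\eta\lambda_{\max}(\vQ_\lambda)|\}$, and I bound each extreme eigenvalue in turn.

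\emph{Lower bound.} Because $\vL_\vS\succeq 0$ and $\vM\succeq 0$, we get $\lambda_{\min}\ge\gamma>0$. Hence $1-\eta\lambda_{\min}\le 1-\eta\gamma<1$.

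\emph{Upper bound.} The eigenvalues of $\vS$ lie in $(-1,1]$, so $\lVert\vL_\vS\rVert_2\le 2$. For $\vM$, it is block-diagonal after permutation, with blocks $\frac{1}{n_k}\mathbf{M}_k$; each $\mathbf{M}_k$ is a centering projection, so $\lVert\vM\rVert_2 = \max_k 1/n_k = 1/n_{\min}$. This gives
$$\lambda_{\max}\le\gamma+2(1-\gamma)+2\lambda/n_{\min} = 2-\gamma+2\lambda/n_{\min}.$$
The requirement $1-\eta\lambda_{\max}>-1$ is then guaranteed exactly by $\eta<2/(2-\gamma+2\lambda/n_{\min})$, which is \eqref{eq:step_condition}.

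The equivalence with the condition on $\lambda$ is obtained by rearranging: the step condition reads $2-\gamma+2\lambda/n_{\min}<2/\eta$, i.e.\ $\lambda<n_{\min}(1/\eta-1+\gamma/2)$. Setting $\eta=1$ yields $\lambda<\gamma n_{\min}/2$.

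The main obstacle is the spectral bound on the Laplacian. I must justify $\lambda_{\min}(\vS)>-1$, or at least $\ge -1$, for the self-loop normalized adjacency. With $\vD$ interpreted as the degree matrix of $\vA+\vI$, the standard renormalization argument shows $\vS$ has spectrum in $(-1,1]$, hence $\lVert\vL_\vS\rVert_2<2$. I will also make sure the strict inequality in $\rho<1$ is preserved on both ends, using $\gamma>0$ for the lower end and the strict step condition for the upper end.
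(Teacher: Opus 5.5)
Your proposal is correct and follows the paper's proof essentially step for step. You show $\Phi$ is a contraction with constant $\lVert \mathbf{T}_{\eta,\lambda}\rVert_2$, bound the spectrum by $\gamma \le \mu \le 2-\gamma+2\lambda/n_{\min}$ using $\lambda_{\max}(\vL_{\vS})\le 2$ and $\lVert\vM\rVert_2 \le 1/n_{\min}$, apply Banach's theorem, and identify the fixed point via the projection optimality condition. The differences are minor: you use $\lVert\vP\rVert_2\le 1$ where the paper invokes non-expansiveness of projections onto closed convex sets, which is what lets the result carry over to $\mathrm{Proj}_{\setX_\delta}$, and your remark that $\vD$ must be the degree matrix of $\vA+\vI$ for $\vL_{\vS}\succeq 0$ is a worthwhile clarification the paper leaves implicit.
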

\vspace{-2mm}
\noindent Note that for $\lambda<\infty$, the term $\lambda\,\mathrm{tr}(\mathbf{H}^\top\vM\mathbf{H})$ \emph{reduces} but does not zero the within-group variance, avoiding collapsing each group to a single point. On the other hand, the projection to the mean-equalization constraint admits a closed-form, is non-expansive, and removes the mean gap exactly. 
Replacing $\vP$ by the projection operator of the slab projection $\mathrm{Proj}_{\setX_\delta}$ constrains the group-mean gap to $\delta$ rather than to zero; since $\setX_\delta$ is closed and convex, the projection is non-expansive and Proposition~\ref{prop:fair_diff_conv} continues to hold. This yields a second fairness-utility trade-off mechanism alongside $\lambda$, expressed as a tolerance on the mean gap.

\vspace{-2mm}
\subsection{Why Project at Every Step}
\label{sec:FairProp_discussion}
\vspace{-2mm}
Since the bounds in Section~\ref{sec:bias_analysis} rely exclusively on the final-layer representations, a single final projection layer is sufficient to yield $\delta=0$. However, this alters the underlying optimization problem: (per-step) $\argmin_{\mathbf{H}\in\setX_0} f_\lambda(\mathbf{H}) \neq \vP(\argmin_{\mathbf{H}} f_\lambda(\mathbf{H}))$ (end-only), with distinct fixed points $\mathbf{H}^*_{\mathrm{per-step}} = \eta\gamma(\vI - \vP\mathbf{T}_{\eta,\lambda})^{-1}\vP\mathbf{H}^{(0)}$ and $\mathbf{H}^*_{\mathrm{end-only}} = \eta\gamma\vP(\vI - \mathbf{T}_{\eta,\lambda})^{-1}\mathbf{H}^{(0)}$. While end-only projection just subtracts group means from the unconstrained optimum, per-step projection solves for the \emph{optimal fair representation} - identifying the smoothest representation subject to equal group means. In Appendix~\ref{app:th:FairProp_vs_post} (Theorem~\ref{th:FairProp_vs_post}), we make this contrast explicit via KKT conditions. Although both methods modify the original APPNP fixed point via optimal dual variables, end-only projection imposes a direct, graph-agnostic shift. By contrast, FairProp applies the graph-diffusion term $[\vI-(1-\gamma)\vS+2\lambda\vM]^{-1} = \sum_{j\geq 0}[(1-\gamma)\mathbf{S}-2\lambda\vM]^j$, ensuring a smoother correction. As illustrated, for the one dimensional case, in Proposition~\ref{prop:proj_percapita} (Appendix~\ref{app:prop:proj_percapita}), projecting only once imposes a coarse correction that fails to account for graph structure, resulting in utility being sacrificed almost entirely on the minority side. Hence, a criterion imposed for fairness reasons is itself satisfied unfairly (Remark~\ref{app:rem:proj_once_unfair}).

\vspace{-3mm}
\section{Experimental Evaluation}
\vspace{-3mm}

\textbf{Baselines \& Datasets.} In order to demonstrate the impact of introducing differentiable fairness-promoting layers in the standard APPNP architecture \citep{gasteiger2018predict}, we add it as a fairness-agnostic baseline. We compare our proposed approach against several relevant baselines in fairness aware GNNs - spanning general purpose methods to more task-specialized ones.
For all baselines, we use the authors' public implementations and reported hyperparameter settings where available; otherwise, we implement the models ourselves following the descriptions in their respective papers. All results are averaged over $20$ independent experiments.
Note that for link prediction, we also use methods originally proposed for node classification. 
We discard the classification head and replace it with an inner-product decoder applied to the learned node embeddings, optimizing binary cross-entropy loss. For node regression, since there is only one dedicated baseline, we adapt methods originally proposed for node classification, by replacing the final softmax prediction layer with a linear output layer and optimizing the mean-squared error loss. 
Owing to space limitations, we report only the best-performing baselines for each task; full experimental comparisons are provided in Appendix~\ref{app:experiments}. Details regarding dataset statistics, and our experimental setup are available in Appendices~\ref{app:datasets} and~\ref{app:experimental_setup}. 
We further study the effect of relaxing the exact projection to a slab of tolerance $\delta$ in
Appendix~\ref{app:exp:slab}. A runtime comparison of FairProp against all baselines is provided in
Appendix~\ref{app:runtime}. The implementation of our method is provided in the supplementary material.


\vspace{-2mm}
\subsection{Main Results} \label{sec:exp:comparison}
\vspace{-2mm}
\textbf{Node Classification.}
Table~\ref{tab:nc_combined} reports test accuracy and $\Delta_{\mathrm{DP}}^{\mathrm{NC}}$ on four benchmarks.
It is evident that the standard APPNP pipeline offers good predictive accuracy, but violates demographic parity to a large extent. In contrast, FairProp with exact subgroup mean-centering via projection, markedly reduces the demographic parity gap ($5-33\times$) while yielding a very small decrease in mean accuracy (at most $0.4\%$). Moreover, FairProp attains a lower demographic parity gap than \emph{every} competing method on \emph{every} dataset (tied with FairSAD on Credit), and is second best in terms of accuracy in $2$ out of $4$ datasets. It is worth pointing out that the baselines that surpass it in accuracy are significantly worse-off in terms of the fairness gap. Overall, FairProp exhibits the best utility-fairness trade-off. 
%
\begin{table*}[t]
\setlength{\tabcolsep}{3.5pt}
\setlength\extrarowheight{-3pt}
  \centering
  \caption{\textbf{Node classification} results. Best and second best per column in \textbf{bold} and \underline{underlined}, respectively, excluding APPNP. Dashed results indicate out-of-memory (OOM) or time-consuming experiments. All results are presented by $\times100$. 
  \emph{FairProp is always the best in terms of fairness and among the best in terms of accuracy.}}
  \label{tab:nc_combined}
  \small
  \vspace{-3mm}
  \begin{tabular}{l c c c c c c c c}
    \toprule
    & \multicolumn{2}{c}{Credit} & \multicolumn{2}{c}{Recidivism} & \multicolumn{2}{c}{Pokec-z} & \multicolumn{2}{c}{Pokec-n} \\
    \cmidrule(lr){2-3} \cmidrule(lr){4-5} \cmidrule(lr){6-7} \cmidrule(lr){8-9}
    & Acc $\uparrow$ & $\Delta^{\mathrm{NC}}_{\mathrm{DP}}$ $\downarrow$ & Acc $\uparrow$ & $\Delta^{\mathrm{NC}}_{\mathrm{DP}}$ $\downarrow$ & Acc $\uparrow$ & $\Delta^{\mathrm{NC}}_{\mathrm{DP}}$ $\downarrow$ & Acc $\uparrow$ & $\Delta^{\mathrm{NC}}_{\mathrm{DP}}$ $\downarrow$ \\
    \midrule
    APPNP & 80.9 {\scriptsize$\pm$0.2} & 3.5 {\scriptsize$\pm$0.7} & 67.1 {\scriptsize$\pm$0.2} & 18.4 {\scriptsize$\pm$0.3}$^\dagger$ & 67.6 {\scriptsize$\pm$0.6} & 5.2 {\scriptsize$\pm$0.6} & 67.7 {\scriptsize$\pm$0.6} & 6.6 {\scriptsize$\pm$0.9} \\
    \midrule[.2pt]
    FairProp (proj) & {80.8 {\scriptsize$\pm$0.2}} & \textbf{0.4 {\scriptsize$\pm$0.3}} & 66.8 {\scriptsize$\pm$0.3} & \textbf{3.5 {\scriptsize$\pm$0.4}}$^\dagger$ & \underline{67.2 {\scriptsize$\pm$0.7}} & \textbf{0.7 {\scriptsize$\pm$0.3}} & \underline{67.3 {\scriptsize$\pm$0.8}} & \textbf{0.2 {\scriptsize$\pm$0.2}} \\
    \midrule[.2pt]
    FairGAT & 79.4 {\scriptsize$\pm$0.5} & 1.2 {\scriptsize$\pm$0.3} & 67.8 {\scriptsize$\pm$1.1} & 15.1 {\scriptsize$\pm$0.5} & \textbf{69.4 {\scriptsize$\pm$0.9}} & \underline{2.0 {\scriptsize$\pm$1.2}} & \textbf{68.6 {\scriptsize$\pm$0.8}} & 2.3 {\scriptsize$\pm$3.0} \\
    FairVGNN & 80.0 {\scriptsize$\pm$0.3} & \underline{0.8 {\scriptsize$\pm$0.1}} & 65.5 {\scriptsize$\pm$2.3} & 19.5 {\scriptsize$\pm$0.4} & --- & --- & --- & --- \\
    FMP & \underline{81.2 {\scriptsize$\pm$0.1}} & 3.5 {\scriptsize$\pm$0.3} & \textbf{69.8 {\scriptsize$\pm$0.3}} & 19.1 {\scriptsize$\pm$0.3} & 65.8 {\scriptsize$\pm$0.4} & 3.9 {\scriptsize$\pm$0.3} & 65.3 {\scriptsize$\pm$0.5} & \underline{1.7 {\scriptsize$\pm$0.5}} \\
    FairSAD & 78.1 {\scriptsize$\pm$0.5} & \textbf{0.4 {\scriptsize$\pm$0.5}} & 63.4 {\scriptsize$\pm$3.2} & 10.0 {\scriptsize$\pm$3.8} & --- & --- & --- & --- \\
    DAB-GNN & 79.6 {\scriptsize$\pm$1.1} & 4.6 {\scriptsize$\pm$2.0} & 63.3 {\scriptsize$\pm$6.3} & 8.7 {\scriptsize$\pm$6.3} & 66.5 {\scriptsize$\pm$2.0} & 3.9 {\scriptsize$\pm$3.3} & 65.1 {\scriptsize$\pm$1.3} & 4.6 {\scriptsize$\pm$3.8} \\
    FairGP & \textbf{81.6 {\scriptsize$\pm$0.3}} & 3.3 {\scriptsize$\pm$0.8} & \underline{68.0} {\scriptsize$\pm$0.9} & \underline{6.4 {\scriptsize$\pm$2.5}} & --- & --- & --- & --- \\
    \bottomrule
  \end{tabular}
  \vspace{2pt}
  {\small\qquad\quad $^\dagger$ Recidivism results for APPNP / FairProp (proj) use $\gamma=0.3$; all other datasets use $\gamma=0.1$.}
  \vspace{-2mm}
\end{table*}

\begin{table*}[t]
\setlength{\tabcolsep}{3.8pt}
\setlength\extrarowheight{-3pt}
  \centering
  \caption{\textbf{Link prediction} results. Best and second best per column in \textbf{bold} and \underline{underlined}, respectively, excluding APPNP.
  \emph{FairProp yields the best utility-fairness trade-off, with near 0 $\Delta^{\mathrm{LP}}_\mathrm{DP}$.}}
  \label{tab:lp_combined}
  \vspace{-3mm}
  \small
  \begin{tabular}{l c c c c c c}
    \toprule
    & \multicolumn{2}{c}{Cora} & \multicolumn{2}{c}{Citeseer} & \multicolumn{2}{c}{Pubmed} \\
    \cmidrule(lr){2-3} \cmidrule(lr){4-5} \cmidrule(lr){6-7}
    & AUC $\uparrow$ & $\Delta^{\mathrm{LP}}_\mathrm{DP}$ $\downarrow$ & AUC $\uparrow$ & $\Delta^{\mathrm{LP}}_\mathrm{DP}$ $\downarrow$ & AUC $\uparrow$ & $\Delta^{\mathrm{LP}}_\mathrm{DP}$ $\downarrow$ \\
    \midrule
    APPNP & 0.908 {\scriptsize$\pm$0.007} & 0.031 {\scriptsize$\pm$0.004} & 0.885 {\scriptsize$\pm$0.009} & 0.014 {\scriptsize$\pm$0.001} & 0.957 {\scriptsize$\pm$0.002} & 0.081 {\scriptsize$\pm$0.002} \\
    \midrule[.2pt]
    FairProp (proj) & 0.897 {\scriptsize$\pm$0.009} & 0.003 {\scriptsize$\pm$0.001} & 0.875 {\scriptsize$\pm$0.009} & 0.002 {\scriptsize$\pm$0.000} & 0.942 {\scriptsize$\pm$0.003} & \underline{0.003 {\scriptsize$\pm$0.001}} \\
    \midrule[.2pt]
    EDITS & 0.905 {\scriptsize$\pm$0.008} & 0.009 {\scriptsize$\pm$0.002} & 0.879 {\scriptsize$\pm$0.007} & 0.008 {\scriptsize$\pm$0.001} & 0.966 {\scriptsize$\pm$0.001} & 0.028 {\scriptsize$\pm$0.005} \\
    FairGAT & 0.915 {\scriptsize$\pm$0.006} & 0.050 {\scriptsize$\pm$0.017} & \underline{0.919 {\scriptsize$\pm$0.007}} & 0.027 {\scriptsize$\pm$0.016} & 0.845 {\scriptsize$\pm$0.017} & 0.066 {\scriptsize$\pm$0.034} \\
    FairAdj & 0.915 {\scriptsize$\pm$0.006} & 0.019 {\scriptsize$\pm$0.001} & 0.888 {\scriptsize$\pm$0.006} & 0.013 {\scriptsize$\pm$0.001} & \textbf{0.969 {\scriptsize$\pm$0.001}} & 0.050 {\scriptsize$\pm$0.001} \\
    FairDrop & \underline{0.922 {\scriptsize$\pm$0.006}} & 0.022 {\scriptsize$\pm$0.003} & 0.900 {\scriptsize$\pm$0.007} & 0.015 {\scriptsize$\pm$0.002} & \underline{0.967 {\scriptsize$\pm$0.001}} & 0.060 {\scriptsize$\pm$0.002} \\
    FairWire & 0.811 {\scriptsize$\pm$0.013} & \textbf{0.000 {\scriptsize$\pm$0.000}} & 0.817 {\scriptsize$\pm$0.013} & \textbf{0.000 {\scriptsize$\pm$0.000}} & 0.917 {\scriptsize$\pm$0.002} & \textbf{0.001 {\scriptsize$\pm$0.001}} \\
    FairINV & \textbf{0.928 {\scriptsize$\pm$0.006}} & 0.027 {\scriptsize$\pm$0.001} & \textbf{0.924 {\scriptsize$\pm$0.007}} & 0.014 {\scriptsize$\pm$0.001} & 0.953 {\scriptsize$\pm$0.001} & 0.080 {\scriptsize$\pm$0.001} \\
    FairSAD & 0.848 {\scriptsize$\pm$0.019} & \underline{0.001 {\scriptsize$\pm$0.000}} & 0.817 {\scriptsize$\pm$0.028} & \underline{0.001 {\scriptsize$\pm$0.000}} & 0.743 {\scriptsize$\pm$0.012} & \textbf{0.001 {\scriptsize$\pm$0.001}} \\
    FairGP & 0.746 {\scriptsize$\pm$0.021} & \textbf{0.000 {\scriptsize$\pm$0.000}} & 0.865 {\scriptsize$\pm$0.010} & \underline{0.001 {\scriptsize$\pm$0.000}} & 0.835 {\scriptsize$\pm$0.009} & \textbf{0.001 {\scriptsize$\pm$0.001}} \\
    \bottomrule
  \end{tabular}
  \vspace{-5mm}
\end{table*}

\textbf{Link Prediction.} 
Table~\ref{tab:lp_combined} reports AUC and $\Delta_{\mathrm{DP}}^{\mathrm{LP}}$ on three benchmarks. Relative to the APPNP backbone, FairProp with exact mean centering reduces the parity gap by $7-27\times$, to at most $0.003$, while retaining over $98\%$ of the backbone AUC on every dataset. No other method does better w.r.t both metrics. Those that improve upon AUC (FairGAT, FairAdj, FairDrop, FairINV) do so at the cost of a $3-27\times$ larger parity gap. Conversely, every method attaining a smaller gap than FairProp (FairWire, FairSAD, FairGP) pays for it in terms of AUC, losing up to $22\%$ of the backbone AUC. 
Restricting attention to methods that retain at least $98\%$ of the backbone AUC, FairProp attains the smallest gap on every dataset. FairProp best handles this trade-off because the mean-gap component is removed exactly, resulting in almost zero parity gap, at almost no cost in expressiveness. This trade-off is visually illustrated in Figure~\ref{fig:scatter_plot:lp} in Appendix~\ref{app:lp_scatter}, where FairProp consistently sits closest to the ideal top-left corner across all datasets.

\textbf{Node Regression.} Table~\ref{tab:nr_combined_no_nba_mae} reports Mean Absolute Error (MAE) and $\Delta_{\mathrm{MG}}$ on three benchmarks. As established in Section~\ref{sec:bias_analysis}, for a linear prediction layer (without activations), the mean-gap bound contains no variance term, so the mean-equalizing projection of FairProp eliminates $\Delta_{\mathrm{MG}}$ \emph{exactly}. This result is confirmed empirically, as FairProp achieves a mean gap of precisely $0$ across all datasets. On Pokec-z and Pokec-n it matches the MAE of APPNP. On GermanCredit, the MAE increases from 0.404 to 0.481,
which the slab relaxation partially recovers (Figure~\ref{fig:slab}, Appendix~\ref{app:exp:slab}).
FnRGNN, the only baseline specifically designed for fair node regression, does not attain $0$ mean gap, while yielding a higher MAE on every dataset. Other baselines, which are adapted from node classification, struggle to reduce $\Delta_{\mathrm{MG}}$ in the regression setting.

\begin{table*}[t]
\setlength{\tabcolsep}{4pt}
\setlength\extrarowheight{-3pt}
  \centering
  \caption{\textbf{Node regression} results. Best and second best per column in \textbf{bold} and \underline{underlined}, respectively, excluding APPNP.
  \emph{FairProp is almost always the best in terms of both MAE and fairness.}}
  \label{tab:nr_combined_no_nba_mae}
  \vspace{-3mm}
  \small
  \begin{tabular}{l c c c c c c}
    \toprule
    & \multicolumn{2}{c}{GermanCredit} & \multicolumn{2}{c}{Pokec-z} & \multicolumn{2}{c}{Pokec-n} \\
    \cmidrule(lr){2-3} \cmidrule(lr){4-5} \cmidrule(lr){6-7}
    & MAE $\downarrow$ & $\Delta_\mathrm{MG}$ $\downarrow$ & MAE $\downarrow$ & $\Delta_\mathrm{MG}$ $\downarrow$ & MAE $\downarrow$ & $\Delta_\mathrm{MG}$ $\downarrow$ \\
    \midrule
    APPNP & 0.404 {\scriptsize$\pm$0.035} & 0.089 {\scriptsize$\pm$0.015} & 0.199 {\scriptsize$\pm$0.001} & 0.091 {\scriptsize$\pm$0.001}$^\dagger$ & 0.199 {\scriptsize$\pm$0.001} & 0.152 {\scriptsize$\pm$0.001}$^\dagger$ \\
    \midrule[.2pt]
    FairProp (proj) & 0.481 {\scriptsize$\pm$0.027} & \textbf{0.000 {\scriptsize$\pm$0.000}} & \textbf{0.200 {\scriptsize$\pm$0.001}} & \textbf{0.000 {\scriptsize$\pm$0.000}}$^\dagger$ & \textbf{0.199 {\scriptsize$\pm$0.001}} & \textbf{0.000 {\scriptsize$\pm$0.000}}$^\dagger$ \\
    \midrule[.2pt]
    FairGAT & 0.442 {\scriptsize$\pm$0.012} & \underline{0.017 {\scriptsize$\pm$0.012}} & 0.652 {\scriptsize$\pm$0.052} & 0.061 {\scriptsize$\pm$0.007} & 0.654 {\scriptsize$\pm$0.042} & 0.094 {\scriptsize$\pm$0.008} \\
    EDITS & \underline{0.414 {\scriptsize$\pm$0.013}} & 0.024 {\scriptsize$\pm$0.019} & 0.587 {\scriptsize$\pm$0.004} & 0.006 {\scriptsize$\pm$0.002} & 0.586 {\scriptsize$\pm$0.005} & \underline{0.020 {\scriptsize$\pm$0.001}} \\
    FairVGNN & \textbf{0.407 {\scriptsize$\pm$0.024}} & 0.049 {\scriptsize$\pm$0.021} & --- & --- & --- & --- \\
    FairWalk & 0.445 {\scriptsize$\pm$0.008} & \underline{0.017 {\scriptsize$\pm$0.012}} & 0.478 {\scriptsize$\pm$0.004} & 0.055 {\scriptsize$\pm$0.001} & 0.472 {\scriptsize$\pm$0.004} & 0.100 {\scriptsize$\pm$0.002} \\
    FMP 
    & 0.558 {\scriptsize$\pm$0.006} & 0.074 {\scriptsize$\pm$0.008} & \underline{0.204 {\scriptsize$\pm$0.002}} & 0.096 {\scriptsize$\pm$0.001} & \underline{0.201 {\scriptsize$\pm$0.002}} & 0.154 {\scriptsize$\pm$0.001} \\
    FnRGNN & 0.521 {\scriptsize$\pm$0.011} & 0.022 {\scriptsize$\pm$0.018} & 0.249 {\scriptsize$\pm$0.021} & \underline{0.004 {\scriptsize$\pm$0.003}} & 0.239 {\scriptsize$\pm$0.005} & \underline{0.020 {\scriptsize$\pm$0.008}} \\
    \bottomrule
  \end{tabular}
  \vspace{2pt}
  {\small\qquad $^\dagger$ Pokec-z/-n results for APPNP / FairProp (proj) use $\gamma=0.5$; GermanCredit use $\gamma=0.1$.}
  \vspace{-5mm}
\end{table*}

\vspace{-3mm}
\subsection{Ablation Study}
\vspace{-3mm}

\begin{wrapfigure}{r}{0.37\textwidth}
    \vspace{-10mm}
    \centering
    \includegraphics[width=\linewidth]{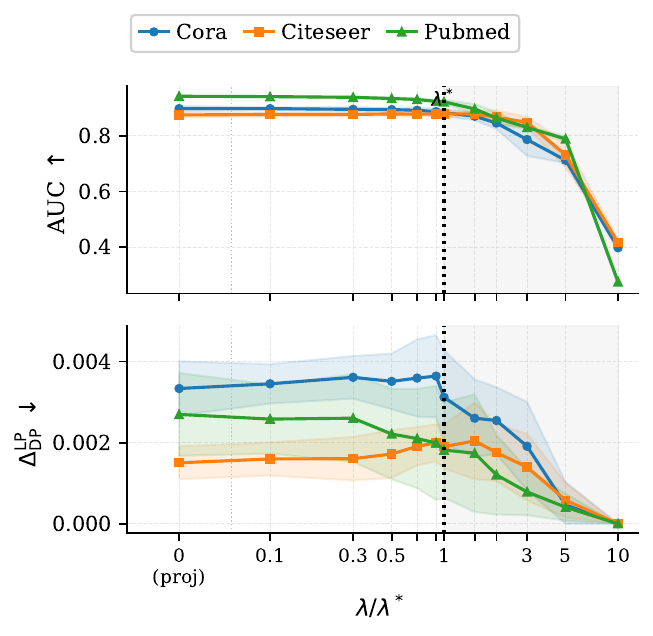}
    \vspace{-6mm}
    \caption{The effect of variance regularization in link prediction.}
    \label{fig:regularization}
    \vspace{-4mm}
\end{wrapfigure}
\textbf{Effect of the within-group variance regularizer.}
The projection alone already suffices for node classification and node regression (Section~\ref{sec:exp:comparison}). In link prediction it attains a near-zero parity gap, but a few baselines reach smaller gaps, albeit at a large cost in AUC. We therefore test whether also increasing the variance term of Proposition~\ref{prop:lp_2_multigroup}, via the regularizer of Section~\ref{sec:FairProp:variance}, closes this gap. Figure~\ref{fig:regularization} sweeps $\lambda$, normalized by $\lambda^* := \gamma\,n_{\min}/2$, the largest value for which Proposition~\ref{prop:fair_diff_conv} guarantees contraction at $\eta=1$; the shaded region $\lambda>\lambda^*$ lies outside this regime. The disparity does fall to zero, but only under strong regularization, by which point AUC has collapsed. This is expected: once the projection has equalized the group means, shrinking the within-group variance to zero drives all embeddings toward a single point.

\vspace{-3mm}
\section{Conclusion}
\vspace{-3mm}
We studied fairness in GNNs at the level of the downstream prediction. We analyzed and bounded the demographic and mean parity gaps via the group-mean separation and the within-group covariance of the final node representations, tightening the prior bound for node classification, extending it to the deployed, sigmoid-activated prediction for link prediction, and providing the first such bound for node regression. Guided by this analysis, we proposed FairProp, which unfolds projected gradient descent on a fairness-constrained version of the APPNP objective. Each propagation layer of FairProp comprises a standard propagation step, regularized by within-group variance, followed by a closed-form projection that bounds group-mean separation, and the iteration provably converges linearly to the unique fair optimum. Empirically, using the projection alone to exactly equalize group means already achieves the lowest parity gap on node classification, a near-zero gap on link prediction, and a zero mean gap on node regression, while largely preserving utility against state-of-the-art baselines. Extending the framework to other unfolded GNN architectures and fairness notions, such as equal opportunity or individual fairness, are further directions for future work.

\subsection*{AI use statement}


In this work, we used generative AI tools for polishing the writing of the manuscript, research execution (implementation of experiments) and fact-checking proofs of mathematical claims.
We have not used generative AI tools for retrieval and discovery, draft sections of the paper, and generating synthetic datasets are not applicable to this work.
We have reviewed all AI-assisted work. LLM-generated code was verified and tested for correctness. We take responsibility for the final content of this work, including text, claims or artifacts produced with the aid of generative AI.

\subsection*{Ethics statement}
GNNs propagate information along edges that can perpetuate or amplify structural biases across vertex subgroups. This work advances the field of algorithmic fairness on graphs by analyzing where such bias enters the learned representations and by developing mechanisms that constrain it during propagation, for node classification, link prediction and node regression.
Our aim is to reduce disparity in predictions while preserving the utility of the underlying model. 
While this work addresses group fairness, and in particular demographic parity, we acknowledge that it does not explicitly treat other relevant notions, such as individual, counterfactual or degree-based fairness. Hence, our results should be interpreted within this scope: the objective of this work is to improve the trade-off between group-fairness constraints and predictive utility, and our findings should not be directly extrapolated to other notions of fairness. We further note that enforcing a fairness constraint is not without cost when the sensitive attribute is correlated with the target, and that a small measured disparity does not by itself certify that a deployed system is equitable. 
We view our work as a step in this direction and anticipate future research extending the framework to additional fairness notions. We hope this contribution supports the development of more equitable graph-learning systems and motivates further research in algorithmic fairness for graph machine learning.



\bibliography{iclr2027_conference}
\bibliographystyle{iclr2027_conference}

\clearpage
\appendix

\section{Extended Related Work} \label{app:related}

\subsection{Task-Agnostic Learning}
A separate line of work aims to make the node embeddings themselves independent of the sensitive attribute. \citet{bose2019compositional} train adversarial filters that remove sensitive information from graph embeddings. Their filters are compositional, so a single model can enforce invariance to any combination of sensitive attributes. DeBayes~\citep{buyl2020debayes} builds on conditional network embeddings: during training, it encodes sensitive information in a biased structural prior, so that the embeddings only capture structure not explained by it. NIFTY~\citep{agarwal2021towards} similarly regularizes representations to be stable under perturbations of the sensitive attribute. These methods target fairness at the representation level. However, they do not formally specify how it transfers to the predictions of a particular downstream task, which is precisely the gap our task-level analysis in Section~\ref{sec:bias_analysis} addresses.

\subsection{Node Classification}
\textbf{In-processing.} A number of methods enforce group fairness during training. FairGNN~\citep{dai2021say} uses adversarial training to mask sensitive attributes under partial observability, while FairVGNN~\citep{wang2022improving} adversarially masks sensitive-correlated feature channels. NIFTY~\citep{agarwal2021towards} promotes counterfactual fairness via graph perturbations, and Fast\&Fair~\citep{kose2023fast} regularizes group-wise normalization operators for provably faster convergence. FairINV~\citep{zhu2024one} recasts group fairness as invariant learning, inferring sensitive groups so that one training run is fair to several sensitive attributes without accessing them. 
These methods augment the task loss with a fairness term whose weight governs the trade-off, requiring per-dataset tuning without a principled selection procedure; adversarial variants additionally suffer from min-max instability.

\textbf{Interventions on the propagation operator.} A large family of methods mitigates bias by modifying how information travels along edges. FairWalk~\citep{rahman2019fairwalk} and CrossWalk~\citep{khajehnejad2022crosswalk} rebalance random-walk transition probabilities across sensitive groups, FairGAT~\citep{kose2024fairgat} down-weights attention on edges contributing to disparity, and  FairDrop~\citep{spinelli2021fairdrop} biases edge dropout against homophilous connections. RawlsGCN~\citep{kang2022rawlsgcn} renormalizes the adjacency in the same spirit but targets a different notion, degree-based Rawlsian fairness, in which the sensitive attribute plays no role. 

\textbf{Fairness inside the propagation operator.} Closest to our work, FMP~\citep{jiang2024chasing} also unfolds a fairness-regularized smoothness objective into message passing. However, it softly penalizes disparity in post-softmax probability space and is limited to node classification. In contrast, we constrain the representations themselves, drive the mean gap to exactly zero at every step, and prove linear convergence to the constrained optimum.

\textbf{Disentanglement.} Arguing that deleting sensitive-attribute information also deletes task-related information correlated with it, FairSAD~\citep{zhu2024fair} isolates the sensitive attribute into an independent representation channel and decorrelates it by channel masking, while DAB-GNN~\citep{lee2025disentangling} disentangles attribute, structure and interaction biases into separate embeddings and aligns their subgroup distributions by Wasserstein distance. 

\textbf{Pre-processing and graph editing.} EDITS~\citep{dong2022edits} and FairEdit~\citep{loveland2022fairedit} debias input graphs via feature and edge editing independently of downstream tasks. These require solving a combinatorial edge-selection problem, and since edges encode both task-relevant information and sensitive correlation, correcting the latter perturbs the former without explicit control over the trade-off.

\subsection{Link Prediction}
\citet{li2021dyadic} formalize \emph{dyadic} fairness and propose FairAdj, which learns a fair adjacency matrix under a dyadic constraint; their representation-level analysis is the one our bounds extend. FairDrop~\citep{spinelli2021fairdrop} biases edge dropout against homophilous connections, and FairWire~\citep{kose2024fairwire} identifies structural sources of dyadic disparity to construct a regularizer for link prediction and graph generation. Note that previously mentioned FairGAT \citep{kose2024fairgat}, is proposed to tackle also the link prediction task.

\subsection{Node Regression} 
Node regression is addressed only by FnRGNN~\citep{park2025fnrgnn}, which combines edge reweighting, representation alignment and prediction-level normalization to match the full group-conditional distributions.

\subsection{Representation-level bias analysis}
Beyond its attention mechanism, FairGAT~\citep{kose2024fairgat} bounds the demographic parity gap of a GAT layer and of the fully connected readout typically used for node classification; our analysis tightens this bound. FairGP~\citep{luo2025fairgp} bounds a related but distinct quantity for graph transformers, namely the distortion of the sensitive-attribute distribution under global attention. In \citep{kose2024fairwire} they provide a bias analysis for link prediction. A closely related idea to ours, projecting onto a fairness constraint set at each iteration of a graph algorithm, has been applied to PageRank in the FairRARI framework~\citep{kariotakis2026fairrari}; we instead apply per-step projections inside end-to-end GNN training.

\section{Bias Analysis Proofs}
\label{app:bias_analysis:proofs}
Adapting the definitions from the main text, we have that $$\boldsymbol{\mu}_k := \mathbb{E}[\vh_v\,|\,v\in\setS_k] \quad \text{and} \quad \mathbf{C}_k := \mathrm{cov}(\vh_v\,|\,v\in\setS_k).$$

\subsection{Node Classification}
For a node classification model, typically the final layer consists of a linear projection and a non-linear, $L$-Lipschitz function $\sigma(\cdot)$ (e.g. softmax, $L=1/2$).
Let's define the linear projection function $g(\cdot)$ as $g(v) := \mathbf{W}\mathbf{h}_v + \mathbf{b}$, and combining it with the non-linearity we obtain $f(v) := \sigma(\mathbf{W}\mathbf{h}_v + \mathbf{b})$.
\label{app:proof:nc_1}
\begin{proposition}[Linear readout]\label{prop:nc_1}
If the difference of the mean embeddings of the two subgroups $\setS_0$ and $\setS_1$ is small, i.e., $\lVert \boldsymbol{\mu}_0 - \boldsymbol{\mu}_1\rVert_2 \leq \delta$, then
\begin{equation}
    \lVert\mathbb{E} [g(v) | v \in \setS_0] - \mathbb{E} [g(v) | v \in \setS_1]\rVert_2 \leq \lVert \mathbf{W} \rVert_2 \cdot \delta .
\end{equation}
\end{proposition}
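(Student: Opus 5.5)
The plan is to use linearity of conditional expectation. The map $g(v)=\mathbf{W}\mathbf{h}_v+\mathbf{b}$ is affine in $\mathbf{h}_v$. Under the uniform distribution $U$ restricted to $\setS_k$, the conditional expectation is a finite average, so it commutes with this affine map:
\[
\mathbb{E}[g(v)\mid v\in\setS_k]=\mathbf{W}\,\mathbb{E}[\mathbf{h}_v\mid v\in\setS_k]+\mathbf{b}=\mathbf{W}\boldsymbol{\mu}_k+\mathbf{b}.
\]
I would record this identity for $k=0$ and $k=1$. The only detail to check is that $\mathbf{b}$ passes through the conditional expectation unchanged, which holds because conditional probabilities sum to one.

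Next I subtract the two identities. The bias terms cancel, leaving
\[
\mathbb{E}[g(v)\mid v\in\setS_0]-\mathbb{E}[g(v)\mid v\in\setS_1]=\mathbf{W}(\boldsymbol{\mu}_0-\boldsymbol{\mu}_1).
\]
Then I bound the Euclidean norm of the right-hand side by the definition of the induced spectral norm, $\lVert\mathbf{W}\mathbf{x}\rVert_2\le\lVert\mathbf{W}\rVert_2\lVert\mathbf{x}\rVert_2$. Combining this with the hypothesis $\lVert\boldsymbol{\mu}_0-\boldsymbol{\mu}_1\rVert_2\le\delta$ gives the claim.

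I do not expect a real obstacle here. The one point worth stating explicitly is the standing assumption that $\setS_0$ and $\setS_1$ are nonempty, so that $\boldsymbol{\mu}_0$ and $\boldsymbol{\mu}_1$ are well defined. Unlike Proposition~\ref{prop:nc_2_multigroup}, there is no nonlinearity, so no covariance term arises.

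The same argument, applied to every pair $k,\ell$ and followed by taking the maximum, also yields Proposition~\ref{prop:nr_1_multigroup} directly.
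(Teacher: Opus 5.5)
Your proof is correct and follows the paper's argument: you pass $\mathbf{W}$ and $\mathbf{b}$ through the conditional expectations, the bias cancels, and you apply $\lVert\mathbf{W}\mathbf{x}\rVert_2\le\lVert\mathbf{W}\rVert_2\lVert\mathbf{x}\rVert_2$ together with the hypothesis. Your closing remark also matches how the paper gets the multi-group version, namely by taking the maximum over all pairs of groups.
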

\begin{proof}
    For $\lVert \mathbb{E}_{v \sim U} [\vh_v | v \in S_0] - \mathbb{E}_{v \sim U} [\vh_v | v \in S_1] \rVert_2 \leq \delta$, we have that
    \begin{gather}
        \begin{aligned}
            \lVert \mathbb{E}_{v \sim U} [g(v) | v \in S_0] - \mathbb{E}_{v \sim U} [g(v) | v \in S_1] \rVert_2 &= \lVert \mathbb{E}[\mathbf{W}\mathbf{h}_v +\mathbf{b} | v \in S_0] - \mathbb{E}[\mathbf{W}\mathbf{h}_v +\mathbf{b} | v \in S_1] \rVert_2 \\
            &= \lVert \mathbf{W}(\mathbb{E}[\mathbf{h}_v | v \in S_0] - \mathbb{E}[\mathbf{h}_v | v \in S_1]) \rVert_2 \\
            &\leq \lVert \mathbf{W}\rVert_{2} \lVert \mathbb{E}[\mathbf{h}_v | v \in S_0] - \mathbb{E}[\mathbf{h}_v | v \in S_1] \rVert_2 \\
            &\leq \lVert \mathbf{W}\rVert_{2} \cdot \delta
        \end{aligned}
        \raisetag{35pt}
    \end{gather}
\end{proof}

\begin{proposition}[Linear readout - Multiple Groups]\label{prop:nc_1_multigroup}
If the maximum difference of the mean embeddings of all the different subgroup pairs, i.e., $\max_{k,l} \lVert \boldsymbol{\mu}_k - \boldsymbol{\mu}_l\rVert_2 \leq \delta$, then
\begin{equation}
    \max_{k,l} \left\{\lVert\mathbb{E} [g(v) | v \in \setS_k] - \mathbb{E} [g(v) | v \in \setS_l]\rVert_2\right\} \leq \lVert \mathbf{W} \rVert_2 \cdot \delta .
\end{equation}
\end{proposition}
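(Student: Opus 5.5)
The plan is to reduce the multi-group statement to the two-group argument of Proposition~\ref{prop:nc_1}, applied to each pair of groups separately, and then take the maximum. The only facts needed are linearity of conditional expectation and submultiplicativity of the spectral norm. No use of the covariance or of any nonlinearity is needed, since $g$ is affine.

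\textbf{Step 1: fix a pair.} Fix an arbitrary pair $k,l\in[K]$. Because $g(v)=\mathbf{W}\vh_v+\mathbf{b}$ is affine and the conditional expectations are finite averages over $\setS_k$ and $\setS_l$, linearity gives $\mathbb{E}[g(v)\mid v\in\setS_k]=\mathbf{W}\boldsymbol{\mu}_k+\mathbf{b}$, and likewise for $l$. The bias $\mathbf{b}$ then cancels in the difference, leaving $\mathbf{W}(\boldsymbol{\mu}_k-\boldsymbol{\mu}_l)$.

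\textbf{Step 2: bound the pair.} The operator-norm inequality $\lVert\mathbf{W}\mathbf{x}\rVert_2\le\lVert\mathbf{W}\rVert_2\lVert\mathbf{x}\rVert_2$ gives
\[
\lVert\mathbb{E}[g(v)\mid v\in\setS_k]-\mathbb{E}[g(v)\mid v\in\setS_l]\rVert_2\le\lVert\mathbf{W}\rVert_2\lVert\boldsymbol{\mu}_k-\boldsymbol{\mu}_l\rVert_2 .
\]
By hypothesis, $\lVert\boldsymbol{\mu}_k-\boldsymbol{\mu}_l\rVert_2\le\max_{k',l'}\lVert\boldsymbol{\mu}_{k'}-\boldsymbol{\mu}_{l'}\rVert_2\le\delta$. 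This is exactly the computation in Proposition~\ref{prop:nc_1} with $\setS_0,\setS_1$ replaced by $\setS_k,\setS_l$.

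\textbf{Step 3: take the maximum.} Since the bound $\lVert\mathbf{W}\rVert_2\delta$ does not depend on the pair, it holds for every $(k,l)$. Because $[K]$ is finite, the maximum over pairs is attained and is therefore also bounded by $\lVert\mathbf{W}\rVert_2\delta$. The diagonal pairs $k=l$ give $0$ and are trivially covered.

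There is no real obstacle. The only point needing care is that each group is nonempty, i.e.\ $n_k>0$, so that the conditional means $\boldsymbol{\mu}_k$ are well defined, which is implicit in the definition of the $\setS_k$.
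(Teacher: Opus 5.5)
Your proposal is correct and follows the same route as the paper: apply the two-group linear-readout bound to an arbitrary pair $(k,l)$, then take the maximum over pairs. You spell out the bias cancellation and the nonemptiness of each group explicitly, which the paper's one-line proof leaves implicit.
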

\begin{proof}
    The proof trivially follows from taking the max of each side in Proposition~\ref{prop:nc_1}.
\end{proof}

\label{app:proof:nc_2}
\begin{proposition}[Non-linear readout]\label{prop:nc_2}
If $\lVert \boldsymbol{\mu}_0-\boldsymbol{\mu}_1 \rVert_2 \leq \delta$, then
\begin{equation}\label{app:eq:nc2_bound}
    \!\big\lVert \E[f(v)| v\in\setS_0] - \E[f(v)| v\in\setS_1] \big\rVert_2
    \leq L\!\left(
    \lVert\mathbf{W}\rVert_2\,\delta
    \!+\! \sqrt{\tr(\mathbf{W} \mathbf{C}_0 \mathbf{W}^\top)}
    \!+\! \sqrt{\tr(\mathbf{W} \mathbf{C}_1 \mathbf{W}^\top)}
    \right)\!.\!
\end{equation}
\end{proposition}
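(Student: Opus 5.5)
The plan is to insert the value of $f$ at each group's mean pre-activation as an intermediate point, and then bound the resulting three terms with the Lipschitz property of $\sigma$, Proposition~\ref{prop:nc_1}, and Jensen's inequality. Write $\mathbf{z}_v := \mathbf{W}\vh_v + \mathbf{b}$ and $\bar{\mathbf{z}}_k := \E[\mathbf{z}_v \mid v\in\setS_k] = \mathbf{W}\boldsymbol{\mu}_k + \mathbf{b}$ for $k\in\{0,1\}$. The triangle inequality gives
\begin{equation*}
\big\lVert \E[f(v)\mid\setS_0] - \E[f(v)\mid\setS_1]\big\rVert_2 \le \big\lVert \E[\sigma(\mathbf{z}_v)\mid\setS_0]-\sigma(\bar{\mathbf{z}}_0)\big\rVert_2 + \big\lVert \sigma(\bar{\mathbf{z}}_0)-\sigma(\bar{\mathbf{z}}_1)\big\rVert_2 + \big\lVert \sigma(\bar{\mathbf{z}}_1)-\E[\sigma(\mathbf{z}_v)\mid\setS_1]\big\rVert_2 .
\end{equation*}

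The middle term is the mean-separation term. By $L$-Lipschitzness of $\sigma$ in $\ell_2$ it is at most $L\lVert \bar{\mathbf{z}}_0-\bar{\mathbf{z}}_1\rVert_2$. By Proposition~\ref{prop:nc_1} this is at most $L\lVert\mathbf{W}\rVert_2\,\delta$.

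The outer two terms are the within-group spread terms. For group $k$, I first move the norm inside the expectation using convexity of $\lVert\cdot\rVert_2$ (Jensen):
\begin{equation*}
\big\lVert \E[\sigma(\mathbf{z}_v)-\sigma(\bar{\mathbf{z}}_k)\mid\setS_k]\big\rVert_2 \le \E\big[\lVert\sigma(\mathbf{z}_v)-\sigma(\bar{\mathbf{z}}_k)\rVert_2 \mid \setS_k\big].
\end{equation*}
Lipschitzness then bounds the right-hand side by $L\,\E[\lVert \mathbf{W}(\vh_v-\boldsymbol{\mu}_k)\rVert_2\mid\setS_k]$. Applying Jensen again, this time through the concavity of $\sqrt{\cdot}$ (equivalently Cauchy--Schwarz), it is at most $L\sqrt{\E[\lVert\mathbf{W}(\vh_v-\boldsymbol{\mu}_k)\rVert_2^2\mid\setS_k]}$. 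Finally, the second moment rewrites as a trace:
\begin{equation*}
\E\big[\lVert\mathbf{W}(\vh_v-\boldsymbol{\mu}_k)\rVert_2^2\mid\setS_k\big] = \tr\big(\mathbf{W}\,\E[(\vh_v-\boldsymbol{\mu}_k)(\vh_v-\boldsymbol{\mu}_k)^\top\mid\setS_k]\,\mathbf{W}^\top\big) = \tr(\mathbf{W}\mathbf{C}_k\mathbf{W}^\top).
\end{equation*}
Summing the three bounds gives exactly \eqref{app:eq:nc2_bound}.

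The only delicate point is the choice of the intermediate point $\sigma(\bar{\mathbf{z}}_k)$. Because $\sigma$ is nonlinear, $\E[\sigma(\mathbf{z}_v)] \neq \sigma(\E[\mathbf{z}_v])$, and this gap is precisely what produces the covariance terms. I also need $\mathbf{C}_k$ to be the conditional covariance under the uniform distribution on $\setS_k$, so that the trace identity above holds exactly.
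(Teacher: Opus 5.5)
Your proposal is correct and follows the paper's proof essentially step for step. It uses the same three-term triangle-inequality decomposition through $f(\boldsymbol{\mu}_0)$ and $f(\boldsymbol{\mu}_1)$, Lipschitzness plus the spectral norm for the mean-separation term, and Jensen (first for the norm, then for the square root) followed by the trace identity for the spread terms; the only cosmetic difference is that you invoke Proposition~\ref{prop:nc_1} for the middle term, whereas the paper bounds $\lVert\mathbf{W}(\boldsymbol{\mu}_0-\boldsymbol{\mu}_1)\rVert_2$ directly.
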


\begin{proof}
For any $\vh_v,\vh_u\in\mathbb{R}^{d}$, the $L$-Lipschitzness of $\sigma$ and the definition of the spectral norm give
\begin{equation}\label{app:eq:g_lipschitz}
    \lVert f(v) - f(u) \rVert_2
    \!=\! \lVert \sigma(\mathbf{W}\vh_v+\mathbf{b}) - \sigma(\mathbf{W}\vh_u+\mathbf{b}) \rVert_2
    \!\leq\! L \lVert \mathbf{W}(\vh_v-\vh_u) \rVert_2
    \!\leq\! L \lVert \mathbf{W} \rVert_2 \lVert \vh_v-\vh_u \rVert_2 ,
\end{equation}
so $f$ is $L\lVert\mathbf{W}\rVert_2$-Lipschitz. 

In order to introduce the expectations we have to do the following analysis
\begin{equation}\label{app:eq:decomposition}
    \begin{aligned}
        &\lVert \mathbb{E}[f(v)|S_0] \!-\! \mathbb{E}[f(u)|S_1] \rVert_2 \!=\! \lVert \mathbb{E}[f(v)|S_0] \!-\! f(\boldsymbol{\mu}_0) + f(\boldsymbol{\mu}_0) \!-\! f(\boldsymbol{\mu}_1) \!+\! f(\boldsymbol{\mu}_1) \!-\! \mathbb{E}[f(u)|S_1] \rVert_2\!\! \\
        &\leq \underbrace{\lVert \mathbb{E}[f(v)|S_0] - f(\boldsymbol{\mu}_0)\rVert_2}_{(i)} + \underbrace{\lVert  f(\boldsymbol{\mu}_0) - f(\boldsymbol{\mu}_1) \rVert_2}_{(ii)} + \underbrace{\lVert f(\boldsymbol{\mu}_1) - \mathbb{E}[f(u)|S_1] \rVert_2}_{(iii)} 
    \end{aligned}
\end{equation}
\emph{Term $(ii)$:} By the $L$-Lipschitzness of $\sigma$ and the
definition of the spectral norm,
\begin{equation}\label{app:eq:mean_gap}
    \lVert f(\boldsymbol{\mu}_0) - f(\boldsymbol{\mu}_1) \rVert_2
    \leq L \lVert \mathbf{W}(\boldsymbol{\mu}_0-\boldsymbol{\mu}_1) \rVert_2
    \leq L \lVert \mathbf{W} \rVert_2 \, \delta .
\end{equation}
\emph{Terms $(i)$ and $(iii)$:} For every $k\in\{0,1\}$, by Jensen's inequality for the convex map $\mathbf{y}\mapsto\lVert\mathbf{y}\rVert_2$,
followed by the $L$-Lipschitzness of $\sigma$,
\begin{equation}\label{app:eq:dispersion_1}
\begin{aligned}
    \big\lVert \E[f(v)\mid\setS_k] - f(\boldsymbol{\mu}_k) \big\rVert_2
    &= \big\lVert \E[\, f(v) - f(\boldsymbol{\mu}_k) \mid \setS_k \,] \big\rVert_2
    \leq \E\big[\, \lVert f(v) - f(\boldsymbol{\mu}_k) \rVert_2 \,\big|\, \setS_k \,\big] \\
    &= \E\big[\, \lVert \sigma(\mathbf{W}\vh_v+\mathbf{b}) - \sigma(\mathbf{W}\boldsymbol{\mu}_k+\mathbf{b}) \rVert_2 \,\big|\, \setS_k \,\big] \\
    &\leq L\, \E\big[\, \lVert \mathbf{W}(\vh_v-\boldsymbol{\mu}_k) \rVert_2 \,\big|\, \setS_k \,\big] .
\end{aligned}
\end{equation}
Applying Jensen's inequality once more, now for the concave map
$t\mapsto\sqrt{t}$,
\begin{equation}\label{app:eq:dispersion_2}
    L\, \E\big[\, \lVert \mathbf{W}(\vh_v-\boldsymbol{\mu}_k) \rVert_2 \,\big|\, \setS_k \,\big]
    \leq L\, \sqrt{ \E\big[\, \lVert \mathbf{W}(\vh_v-\boldsymbol{\mu}_k) \rVert_2^2 \,\big|\, \setS_k \,\big] }
    = L\, \sqrt{ \tr(\mathbf{W} \mathbf{C}_k \mathbf{W}^\top) } ,
\end{equation}
where the equality holds because, by linearity of the trace and of the
expectation,
\begin{equation}\label{eq:trace_identity}
    \E\big[\lVert \mathbf{W}(\vh_v-\boldsymbol{\mu}_k)\rVert_2^2 \,\big|\, \setS_k\big]
    = \E\big[\tr\big(\mathbf{W}(\vh_v-\boldsymbol{\mu}_k)(\vh_v-\boldsymbol{\mu}_k)^\top\mathbf{W}^\top\big) \,\big|\, \setS_k\big]
    = \tr(\mathbf{W} \mathbf{C}_k \mathbf{W}^\top).
\end{equation}
Substituting \eqref{app:eq:mean_gap}, \eqref{app:eq:dispersion_1} and \eqref{app:eq:dispersion_2} into \eqref{app:eq:decomposition} yields \eqref{app:eq:nc2_bound}.
\end{proof}

\subsubsection{Extension to Multiple Sensitive Groups (Proof of Proposition~\ref{prop:nc_2_multigroup})}
Proposition~\ref{prop:nc_2} trivially extends to any pair of subgroups $i,j\in\mathcal{K}$. Therefore, the following bound applies
\begin{equation}
    \big\lVert\mathbb{E}[f(v)\mid\setS_i]-\mathbb{E}[f(v)\mid\setS_j]\big\rVert_2
    \leq L\Big(\lVert\mathbf{W}\rVert_2\,\delta_{ij}
    + \sqrt{\mathrm{tr}(\mathbf{W}\mathbf{C}_i\mathbf{W}^\top)}
    + \sqrt{\mathrm{tr}(\mathbf{W}\mathbf{C}_j\mathbf{W}^\top)}\Big),
\end{equation}
with $\delta_{ij} := \lVert\boldsymbol{\mu}_i-\boldsymbol{\mu}_j\rVert_2$. Since $\Delta_{\mathrm{DP}}$ is a maximum over pairs, taking the maximum of the right-hand side yields
\begin{equation}
    \max_{i,j\in[K]}\big\lVert\mathbb{E}[f(v)|\setS_i]-\mathbb{E}[f(v)\setS_j]\big\rVert_2 \leq L\Big(\lVert\mathbf{W}\rVert_2\,\delta
    + 2\max_{i\in[K]}\sqrt{\mathrm{tr}(\mathbf{W}\mathbf{C}_i\mathbf{W}^\top)}\Big),
\end{equation}
with $\delta := \max_{i,j\in[K]}\lVert\boldsymbol{\mu}_i-\boldsymbol{\mu}_j\rVert_2$.

\subsubsection{Binary Classification (Proof of Corollary~\ref{cor:dp_bound})} \label{app:binary_class}

\begin{proof}
Since $\sigma(\cdot)$ is the softmax and the task is binary, each $f(v)$ is a probability vector over the two classes:
\begin{equation}
    f(v) = \begin{bmatrix} \Pr(\hat{Y}(v)=0) \\ \Pr(\hat{Y}(v)=1) \end{bmatrix}
         = \begin{bmatrix} 1-p_v \\ p_v \end{bmatrix}.
\end{equation}
Taking the expectation uniformly over the nodes of group $\setS_k$ gives
\begin{equation}
    \E[f(v)\mid v\in\setS_k] = \frac{1}{n_k}\sum_{v\in\setS_k} f(v)
    = \begin{bmatrix} 1-\bar{p}_k \\ \bar{p}_k \end{bmatrix}.
\end{equation}
Therefore, for any $k,\ell\in[K]$,
\begin{equation}
\begin{aligned}
    \big\lVert \E[f(v)\mid v\in\setS_k] \!-\! \E[f(v)\mid v\in\setS_\ell] \big\rVert_2
    &=\! \left\lVert \begin{bmatrix} (1-\bar{p}_k) - (1-\bar{p}_\ell) \\
       \bar{p}_k - \bar{p}_\ell \end{bmatrix} \right\rVert_2
     \!=\! \left\lVert (\bar{p}_k - \bar{p}_\ell)
       \begin{bmatrix} -1 \\ 1 \end{bmatrix} \right\rVert_2 \\
    &=\! \sqrt{2}\,\lvert\bar{p}_k - \bar{p}_\ell\rvert .
\end{aligned}
\end{equation}
Taking the maximum over $k,\ell\in[K]$ on both sides yields
\begin{equation}
    \max_{k,\ell\in[K]}
    \big\lVert \E[f(v)\mid v\in\setS_k] - \E[f(v)\mid v\in\setS_\ell] \big\rVert_2
    = \sqrt{2}\max_{k,\ell\in[K]}\lvert\bar{p}_k - \bar{p}_\ell\rvert .
\end{equation}
Substituting this identity into the left-hand side of Proposition~\ref{prop:nc_2_multigroup} and dividing by $\sqrt{2}$ gives
\begin{equation}
    \max_{k,\ell\in[K]}\lvert\bar{p}_k - \bar{p}_\ell\rvert
    \leq \frac{L}{\sqrt{2}}\Big(\lVert\mathbf{W}\rVert_2\,\delta
    + 2\max_{k\in[K]}\sqrt{\mathrm{tr}(\mathbf{W}\mathbf{C}_k\mathbf{W}^\top)}\Big),
\end{equation}
which completes the proof.
\end{proof}

\subsection{Comparison of the Bounds (Proof of Proposition~\ref{prop:bounds_comparison})} \label{app:proof:bounds_comparison}

We adopt the setting of Section~\ref{sec:bias_analysis} with $\mathbf{b} = \mathbf{0}$, which cancels in the difference of the two conditional expectations, and write $\mathbf{z} := \mathbf{W}\vh_v \in \mathbb{R}^c$ for the pre-activation output on a graph with $n$ nodes. In place of the covariances $\mathbf{C}_k$, FairGAT~\citep{kose2024fairgat} assumes a uniform bound on the deviation of each node from its group mean, $\lVert\mathbf{z}_j - \bar{\mathbf{z}}_s\rVert_\infty \leq \Delta_z^{(s)}$ for $s = 0,1$, and sets $\Delta_z := \max_{s} \Delta_z^{(s)}$. Under the same hypothesis $\lVert\boldsymbol{\mu}_0-\boldsymbol{\mu}_1\rVert_2 \leq \delta$, the two upper bounds on the mean gap $\lVert\mathbb{E}[f(v)\,|\,v\in\setS_0] - \mathbb{E}[f(v)\,|\,v\in\setS_1]\rVert_2$ are
\begin{equation}
\begin{aligned}
    \mathcal{B}_{\mathrm{ours}} &:= L\left(\lVert\mathbf{W}\rVert_2\,\delta
    + \sqrt{\mathrm{tr}(\mathbf{W}\mathbf{C}_0\mathbf{W}^\top)}
    + \sqrt{\mathrm{tr}(\mathbf{W}\mathbf{C}_1\mathbf{W}^\top)}\right), \\
    \mathcal{B}_{\mathrm{FairGAT}} &:= L\left(\sigma_{\max}(\mathbf{W})\,\delta + 2\sqrt{n}\Delta_z\right)= L\left(\lVert\mathbf{W}\rVert_2\,\delta + 2\sqrt{n}\Delta_z\right).
\end{aligned}
\end{equation}
The two share an identical mean-gap term and differ only in how they account for the within-group spread, so the comparison of Proposition~\ref{prop:bounds_comparison} reduces to the variance parts alone. Note that $\mathcal{B}_{\mathrm{ours}}$ is based on Proposition~\ref{prop:nc_2} below, the special case of Proposition~\ref{prop:nc_2_multigroup} for $K=2$.

For an experimental comparison, please refer to Figure~\ref{fig:bound_comparison} in Appendix~\ref{app:bound_comparison}.

\begin{proof}    
Given a graph $\setG$ with $n$ nodes, let's define $f(v) = \sigma (\mathbf{W h}_v)$, where the function $\sigma(\cdot)$ is $L$-Lipschitz. Suppose that we have two groups of nodes $\setS_0$ and $\setS_1$, then $\boldsymbol{\mu}_0 = \mathbb{E}[\mathbf{h}_v\, |\, v\in\setS_0]$, $\boldsymbol{\mu}_1 = \mathbb{E}[\mathbf{h}_v\, |\, v\in\setS_1]$, $\mathbf{C}_0 = \mathrm{cov}(\mathbf{h}_v\, |\, v\in\setS_0)$ and $\mathbf{C}_1 = \mathrm{cov}(\mathbf{h}_v\, |\, v\in\setS_1)$, and define $\delta := \lVert \boldsymbol{\mu}_0 - \boldsymbol{\mu}_1 \rVert$.\\
Also, define $\mathbf{z} = \mathbf{Wh}_v$, $\Delta_z = \max(\Delta_z^{(0)}, \Delta_z^{(1)})$, $\lVert \mathbf{z}_j - \bar{\mathbf{z}}_s \rVert_{\infty} \leq \Delta_z^{(s)}$, $s=0,1$.\\
Then, we have the following upper bounds, for
\begin{equation}
    \lVert \mathbb{E}[f(v)\, | \, v\in \setS_0] - \mathbb{E}[f(v)\, | \, v\in \setS_1]\rVert_2
\end{equation}
\textbullet\ Our bound:
    \begin{equation}
    \begin{aligned}
         \mathcal{B}_{\mathrm{ours}} := L \lVert \mathbf{W} \rVert_2  \delta \!+\! \underbrace{L \left(\sqrt{\mathrm{tr}(\mathbf{W}\mathbf{C}_0\mathbf{W}^\top)} \!+\! \sqrt{\mathrm{tr}(\mathbf{W}\mathbf{C}_1\mathbf{W}^\top)}\right)}_{\mathcal{D}_{\mathrm{ours}}} 
    \end{aligned}
    \end{equation}
\textbullet\ FairGAT bound:
    \begin{equation}
    \begin{aligned}
         \mathcal{B}_{\mathrm{FairGAT}} := L \left(\sigma_{\max}(\mathbf{W})\cdot \delta + 2\sqrt{n}\Delta_{z} \right)
         = L \Vert\mathbf{W}\rVert_2 \delta + \underbrace{2L\sqrt{n}\Delta_{z}}_{\mathcal{D}_{\mathrm{FairGAT}}} 
    \end{aligned}   
    \end{equation}
We notice that the first parts of the bounds are the same, so we only have to compare $\mathcal{D}_{\mathrm{ours}}$ with $\mathcal{D}_{\mathrm{FairGAT}}$.

The inequality $\mathcal{D}_{\mathrm{ours}}\leq\mathcal{D}_{\mathrm{FairGAT}}$ is equivalent to
\begin{equation}
    \left(\sqrt{\mathrm{tr}(\mathbf{W}\mathbf{C}_0\mathbf{W}^\top)}+\sqrt{\mathrm{tr}(\mathbf{W}\mathbf{C}_1\mathbf{W}^\top)}\right)
    \;\leq\; 2\sqrt{n}\,\Delta_z ,
\end{equation}
where we canceled the common factor $L>0$. Solving for $n$ (both sides are nonnegative, so we may square), yields
\begin{equation}
    n \;\geq\; n^{*} := \frac{\left(\sqrt{\mathrm{tr}(\mathbf{W}\mathbf{C}_0\mathbf{W}^\top)}+\sqrt{\mathrm{tr}(\mathbf{W}\mathbf{C}_1\mathbf{W}^\top)}\right)^{2}}{4\,\Delta_z^{2}}.
\end{equation}

Let $c$ denote the output dimension, i.e.\ $\mathbf{z}\in\mathbb{R}^{c}$. The trace of the $z$-space covariance is the expected squared $\ell_2$ deviation around the group mean $\bar{\mathbf{z}}_s=\mathbf{W}\boldsymbol{\mu}_s$:
\begin{equation}
    \mathrm{tr}(\mathbf{W}\mathbf{C}_s\mathbf{W}^\top)
    =\mathbb{E}\!\left[\lVert\mathbf{z}-\bar{\mathbf{z}}_s\rVert_2^{2}\,\middle|\, s\right],
    \qquad s=0,1 .
\end{equation}
The assumption $\lVert\mathbf{z}_j-\bar{\mathbf{z}}_s\rVert_\infty\leq\Delta_z^{(s)}\leq\Delta_z$ bounds each of the $c$ coordinates, hence
\begin{equation}
    \lVert\mathbf{z}_j-\bar{\mathbf{z}}_s\rVert_2^{2}
    =\sum_{k=1}^{c}\big(z_{j,k}-\bar{z}_{s,k}\big)^{2} \;\leq\; \sum_{k=1}^{c} \Delta_z^{2}
    = c\,\Delta_z^{2}.
\end{equation}
Taking the conditional expectation gives, for $s=0,1$,
\begin{equation}
    \mathbb{E}\!\left[\lVert\mathbf{z}-\bar{\mathbf{z}}_s\rVert_2^{2}\,\middle|\, s\right] \leq c\,\Delta_z^{2} 
    \Rightarrow
    \mathrm{tr}(\mathbf{W}\mathbf{C}_s\mathbf{W}^\top)\;\leq\; c\,\Delta_z^{2}
    \Rightarrow
    \sqrt{\mathrm{tr}(\mathbf{W}\mathbf{C}_s\mathbf{W}^\top)}\;\leq\;\sqrt{c}\,\Delta_z .
\end{equation}
Substituting this into $n^{*}$,
\begin{equation}
    n^{*}
    \;\leq\; \frac{\big(\sqrt{c}\,\Delta_z+\sqrt{c}\,\Delta_z\big)^{2}}{4\,\Delta_z^{2}}
    =\frac{\big(2\sqrt{c}\,\Delta_z\big)^{2}}{4\,\Delta_z^{2}}
    =c .
\end{equation}
Therefore $c \geq n^{*}$, and the condition $n\geq n^{*}$ is guaranteed whenever the number of nodes exceeds the output dimension, $n\geq c$. In summary, for $n\geq c$ we obtain
\begin{equation}
    \mathcal{B}_{\mathrm{ours}}\;\leq\;\mathcal{B}_{\mathrm{FairGAT}}.
\end{equation}
\end{proof}

\subsection{Link Prediction}
\label{app:proof:lp_1}
For a link prediction model, typically the final layer consists of an inner product and a non-linear, $L$-Lipschitz function $\sigma(\cdot)$ (e.g. sigmoid, $L=1/4$). Let's define the inner product function $g(\cdot,\cdot)$ as $g(v,u) = \vh_v^\top \Sigma \vh_u$, where $\Sigma \in \mathbb{S}_{++}^d$ is a positive-definite matrix, and combining it with the non-linearity we obtain $f(v,u) := \sigma(\vh_v^\top \Sigma \vh_u)$. Also, assume that for each $v \in \setV$, there is a scalar $Q>0$, such that $\lVert \vh_v\rVert_2 \leq Q$. 
\begin{proposition}[Linear readout - Proposition 4.1. of \citep{li2021dyadic}] \label{prop:lp_1} 
If $\lVert \boldsymbol{\mu}_0 - \boldsymbol{\mu}_1\rVert_2 \leq \delta$,
then
\begin{equation}
    | \mathbb{E} [g(v,u) | \setS(v) = \setS(u)] - \mathbb{E} [g(v,u) | \setS(v) \neq \setS(u)]| \leq Q \lVert \Sigma \rVert_2 \delta .
\end{equation}
\end{proposition}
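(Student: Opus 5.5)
The plan is a direct computation: write both conditional expectations in closed form in terms of the group means $\boldsymbol{\mu}_0,\boldsymbol{\mu}_1$, then bound their difference using bilinearity of $g$.

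\emph{Step 1: decompose the intra-group expectation.} Since $(v,u)\sim U\times U$, the event $\{\setS(v)=\setS(u)\}$ splits into $\{v,u\in\setS_0\}$ and $\{v,u\in\setS_1\}$. These have probabilities proportional to $n_0^2$ and $n_1^2$. Hence
\begin{equation*}
\mathbb{E}[g(v,u)\mid \setS(v)=\setS(u)] = \alpha_0\,\mathbb{E}[g\mid v,u\in\setS_0] + \alpha_1\,\mathbb{E}[g\mid v,u\in\setS_1],
\end{equation*}
with $\alpha_k = n_k^2/(n_0^2+n_1^2)$, which are exactly the weights of Proposition~\ref{prop:lp_2_multigroup}. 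Given the group labels, $v$ and $u$ are independent and uniform within their groups. Because $g$ is bilinear, the expectation factorizes:
\begin{equation*}
\mathbb{E}[\vh_v^\top\Sigma\vh_u\mid v,u\in\setS_k] = \boldsymbol{\mu}_k^\top\Sigma\boldsymbol{\mu}_k .
\end{equation*}

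\emph{Step 2: the inter-group expectation.} The event $\{\setS(v)\neq\setS(u)\}$ splits into $(v\in\setS_0,u\in\setS_1)$ and $(v\in\setS_1,u\in\setS_0)$, which are equally likely (each has mass $n_0n_1$). These two cases give $\boldsymbol{\mu}_0^\top\Sigma\boldsymbol{\mu}_1$ and $\boldsymbol{\mu}_1^\top\Sigma\boldsymbol{\mu}_0$, which coincide because $\Sigma$ is symmetric. So the inter-group expectation equals $\boldsymbol{\mu}_0^\top\Sigma\boldsymbol{\mu}_1$.

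\emph{Step 3: bound the difference.} Using $\alpha_0+\alpha_1=1$, split the cross term as $\boldsymbol{\mu}_0^\top\Sigma\boldsymbol{\mu}_1=\alpha_0\boldsymbol{\mu}_0^\top\Sigma\boldsymbol{\mu}_1+\alpha_1\boldsymbol{\mu}_0^\top\Sigma\boldsymbol{\mu}_1$. The difference then regroups as
\begin{equation*}
\alpha_0\,\boldsymbol{\mu}_0^\top\Sigma(\boldsymbol{\mu}_0-\boldsymbol{\mu}_1) + \alpha_1\,(\boldsymbol{\mu}_1-\boldsymbol{\mu}_0)^\top\Sigma\boldsymbol{\mu}_1 .
\end{equation*}
By Cauchy--Schwarz and the definition of the spectral norm, each term is at most $\alpha_k\lVert\boldsymbol{\mu}_k\rVert_2\lVert\Sigma\rVert_2\delta$. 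Since every $\lVert\vh_v\rVert_2\le Q$ and $\boldsymbol{\mu}_k$ is a convex combination of such vectors, Jensen (convexity of the norm) gives $\lVert\boldsymbol{\mu}_k\rVert_2\le Q$. Summing with $\alpha_0+\alpha_1=1$ yields the bound $Q\lVert\Sigma\rVert_2\delta$.

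The only delicate point is Step 3: choosing the regrouping so that each term carries exactly one factor of $\boldsymbol{\mu}_0-\boldsymbol{\mu}_1$ and one bounded mean, while the weights still sum to one. Splitting the cross term with the weights $\alpha_0,\alpha_1$ is what achieves this. Pairs with $v=u$ are included in $U\times U$, but this causes no issue: the factorization in Step 1 uses independent draws, and independence holds under the product measure.
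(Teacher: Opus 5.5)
Your proof is correct and takes essentially the same route as the paper. The paper does not reprove this statement (it cites \citet{li2021dyadic}), but the same argument appears as the ``mean group gap'' step of its proof of Proposition~\ref{prop:lp_2}: the difference is written as $\alpha_0(M_{00}-M_{01})+\alpha_1(M_{11}-M_{01})$, each piece is bounded by $Q\lVert\Sigma\rVert_2\delta$ via Cauchy--Schwarz, and the spread terms drop out here because bilinearity gives $\mathbb{E}[g\mid \setS_k,\setS_l]=\boldsymbol{\mu}_k^\top\Sigma\boldsymbol{\mu}_l$ exactly; your explicit justifications of $\lVert\boldsymbol{\mu}_k\rVert_2\le Q$ and of the symmetric inter-group term fill in points the paper leaves implicit.
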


The extension of Proposition~\ref{prop:lp_1} for multiple sensitive subgroups ($K>2$), can be found in \cite[Proposition 1]{kose2024fairwire}.

\label{app:proof:lp_2}
\begin{proposition}[Non-linear readout] \label{prop:lp_2}
If $\lVert \boldsymbol{\mu}_0 - \boldsymbol{\mu}_1\rVert_2 \leq \delta$,
then
\begin{equation}
\begin{aligned}
    &\lVert \mathbb{E} [f(v,u) | \setS(v) = \setS(u)] - \mathbb{E} [f(v,u) | \setS(v) \neq \setS(u)]\rVert \\
    \leq &\ L Q \lVert \Sigma \rVert_2 \left( \delta + (2\alpha_0 + 1)\sqrt{\mathrm{tr}(\mathbf{C}_0)} + (2\alpha_1 + 1)\sqrt{\mathrm{tr}(\mathbf{C}_1)} \right)
\end{aligned}
\end{equation}
where $\alpha_k = \frac{n_k^2}{n_0^2+n_1^2}$.
\end{proposition}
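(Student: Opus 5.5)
The plan is to mirror the three-term decomposition used in the proof of Proposition~\ref{prop:nc_2}. Each pairwise conditional expectation is split into a mean-gap part, evaluated at the group means, and a dispersion part, which is controlled by the covariances. Throughout I write $f(\mathbf{x},\mathbf{y}) := \sigma(\mathbf{x}^\top\Sigma\mathbf{y})$ for arbitrary vectors. I use the shorthand $F_{kl} := \mathbb{E}[f(v,u)\mid v\in\setS_k,u\in\setS_l]$.

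\textbf{Step 1: condition on group pairs.} Under $(v,u)\sim U\times U$, the event $s_v=s_u$ splits into the group pairs $(0,0)$ and $(1,1)$, with conditional probabilities proportional to $n_0^2$ and $n_1^2$. This gives
\begin{equation*}
\mathbb{E}[f(v,u)\mid s_v=s_u]=\alpha_0F_{00}+\alpha_1F_{11}.
\end{equation*}
The event $s_v\neq s_u$ splits into the pairs $(0,1)$ and $(1,0)$, each with weight $n_0n_1/(2n_0n_1)=1/2$, so
\begin{equation*}
\mathbb{E}[f(v,u)\mid s_v\neq s_u]=\tfrac12(F_{01}+F_{10}).
\end{equation*}

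\textbf{Step 2: dispersion bound for each pair.} I will show that for every $k,l$,
\begin{equation*}
|F_{kl}-f(\boldsymbol{\mu}_k,\boldsymbol{\mu}_l)|\le LQ\lVert\Sigma\rVert_2\bigl(\sqrt{\mathrm{tr}(\mathbf{C}_k)}+\sqrt{\mathrm{tr}(\mathbf{C}_l)}\bigr).
\end{equation*}
\begin{itemize}
\item Move the absolute value inside the expectation and apply the $L$-Lipschitzness of $\sigma$. This reduces the problem to bounding $\mathbb{E}|\vh_v^\top\Sigma\vh_u-\boldsymbol{\mu}_k^\top\Sigma\boldsymbol{\mu}_l|$.
\item Add and subtract $\boldsymbol{\mu}_k^\top\Sigma\vh_u$, giving $(\vh_v-\boldsymbol{\mu}_k)^\top\Sigma\vh_u+\boldsymbol{\mu}_k^\top\Sigma(\vh_u-\boldsymbol{\mu}_l)$.
\item Apply Cauchy--Schwarz with $\lVert\vh_u\rVert_2\le Q$. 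Also use $\lVert\boldsymbol{\mu}_k\rVert_2\le Q$, which holds because a mean of vectors in the $Q$-ball lies in that ball.
\item Finish with Jensen for $\sqrt{\cdot}$: $\mathbb{E}\lVert\vh_v-\boldsymbol{\mu}_k\rVert_2\le\sqrt{\mathrm{tr}(\mathbf{C}_k)}$, exactly as in \eqref{app:eq:dispersion_2}.
\end{itemize}

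\textbf{Step 3: collect the dispersion terms.} The same-group terms contribute $\alpha_k\cdot 2\sqrt{\mathrm{tr}(\mathbf{C}_k)}$ for each $k$. The two cross terms, each weighted by $\tfrac12$, together contribute $\sqrt{\mathrm{tr}(\mathbf{C}_0)}+\sqrt{\mathrm{tr}(\mathbf{C}_1)}$. Summing gives the coefficients $(2\alpha_k+1)$ in the statement.

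\textbf{Step 4: the mean-gap term.} It remains to bound
\begin{equation*}
\bigl|\alpha_0f(\boldsymbol{\mu}_0,\boldsymbol{\mu}_0)+\alpha_1f(\boldsymbol{\mu}_1,\boldsymbol{\mu}_1)-f(\boldsymbol{\mu}_0,\boldsymbol{\mu}_1)\bigr|.
\end{equation*}
Here I use that $f(\boldsymbol{\mu}_0,\boldsymbol{\mu}_1)=f(\boldsymbol{\mu}_1,\boldsymbol{\mu}_0)$, since $\Sigma$ is symmetric, and that $\alpha_0+\alpha_1=1$. The expression then equals
\begin{equation*}
\bigl|\alpha_0[f(\boldsymbol{\mu}_0,\boldsymbol{\mu}_0)-f(\boldsymbol{\mu}_0,\boldsymbol{\mu}_1)]+\alpha_1[f(\boldsymbol{\mu}_1,\boldsymbol{\mu}_1)-f(\boldsymbol{\mu}_1,\boldsymbol{\mu}_0)]\bigr|.
\end{equation*}
Each bracket is at most $LQ\lVert\Sigma\rVert_2\delta$, by Lipschitzness and $|\boldsymbol{\mu}_k^\top\Sigma(\boldsymbol{\mu}_k-\boldsymbol{\mu}_{1-k})|\le Q\lVert\Sigma\rVert_2\delta$. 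The weights sum to $1$, so this term contributes $LQ\lVert\Sigma\rVert_2\delta$.

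\textbf{Conclusion and main obstacle.} The triangle inequality over Steps 2--4 yields the claimed bound. I expect the main obstacle to be the bookkeeping in Step 4. The cross-group mean term must be split so that the coefficient on $\delta$ is exactly $1$, and both symmetry of $\Sigma$ and $\alpha_0+\alpha_1=1$ are needed for that. A naive three-point interpolation would instead produce a factor $2\delta$. Step 2 is routine once $\lVert\boldsymbol{\mu}_k\rVert_2\le Q$ is noted.
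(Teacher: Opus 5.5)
Your proposal is correct and follows essentially the same route as the paper. Both use the same triangle-inequality split into the $\alpha_k$-weighted within-group dispersion terms, the cross-group dispersion term, and the mean-gap term $|\alpha_0 M_{00}+\alpha_1 M_{11}-M_{01}|$ controlled via $\alpha_0+\alpha_1=1$. The only differences are cosmetic: you keep both ordered cross pairs $(0,1)$ and $(1,0)$ with weight $\tfrac12$ and merge them using the symmetry of $\Sigma$, whereas the paper writes $\mathbb{E}[f(v,u)\mid s_v\neq s_u]=\mathbb{E}[f(v,u)\mid v\in\setS_0,u\in\setS_1]$ directly; you also make explicit the bound $\lVert\boldsymbol{\mu}_k\rVert_2\le Q$, which the paper uses without comment.
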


\begin{proof}
    To simplify the notations, let $\boldsymbol{\mu}_0 := \mathbb{E}_{v \sim U} [v | v \in S_0]$ and $\boldsymbol{\mu}_1 := \mathbb{E}_{u \sim U} [u | u \in S_1]$ denote the mean representations. Thus, $\lVert \boldsymbol{\mu}_0 - \boldsymbol{\mu}_1 \rVert_2 \leq \delta$. \\
    Let 
    \begin{equation}
    \begin{aligned}
        \mathbb{E}_{\mathrm{intra}} &:= \mathbb{E}[f(v,u) | S(v)=S(u)] = \mathbb{E}[f(v,u) | v\in S_0, u \in S_0 \lor v\in S_1, u \in S_1]| \\
        &= \frac{|S_0|^2}{|S_0|^2+|S_1|^2} \mathbb{E}[f(v,u) | v,u \in S_0] + \frac{|S_1|^2}{|S_0|^2+|S_1|^2} \mathbb{E}[f(v,u) | v,u \in S_1] \\
        &= \alpha_0\cdot \mathbb{E}[f(v,u) | v,u \in S_0] + \alpha_1\cdot \mathbb{E}[f(v,u) | v,u \in S_1]
    \end{aligned}
    \end{equation}
    and 
    \begin{equation}
        \mathbb{E}_{\mathrm{inter}} := \mathbb{E}[f(v,u) | S(v)\neq S(u)] = \mathbb{E}[f(v,u) | v \in S_0, u \in S_1].
    \end{equation}
    To bridge the expectations with the mean representations, we define three mean-based variables:
    \begin{equation}
        M_{00} := \sigma(\boldsymbol{\mu}_0^\top \Sigma \boldsymbol{\mu}_0) \quad M_{11} := \sigma(\boldsymbol{\mu}_1^\top \Sigma \boldsymbol{\mu}_1) \quad M_{01} := \sigma(\boldsymbol{\mu}_0^\top \Sigma \boldsymbol{\mu}_1)
    \end{equation}
    Using the triangle inequality, we can bound $\Delta_{\mathrm{DP}}$ by separating the variance within the groups from the gap between the group means:
    \begin{equation}
        \begin{aligned} \Delta_{\mathrm{DP}} = &\ |\mathbb{E}_{\mathrm{intra}} - \mathbb{E}_{\mathrm{inter}}| \\
        = &\ |\alpha_0\cdot \mathbb{E}[f(v,u) | v,u \in S_0] + \alpha_1\cdot \mathbb{E}[f(v,u) | v,u \in S_1] - \mathbb{E}[f(v,u) | v \in S_0, u \in S_1]| \\
        = &\ |\alpha_0 \mathbb{E}[f(v,u) | S_0, S_0] - \alpha_0 M_{00} + \alpha_0 M_{00} \\
        &\ + \alpha_1 \mathbb{E}[f(v,u) | S_1, S_1] - \alpha_1 M_{11} + \alpha_1 M_{11} \\
        &\ - \mathbb{E}[f(v,u) | S_0, S_1] + M_{01} - M_{01}|\\
        \leq&\ \alpha_0 |\mathbb{E}[f(v,u)|S_0, S_0] - M_{00}| + \alpha_1 |\mathbb{E}[f(v,u)|S_1, S_1] - M_{11}| \\
        &+ |\mathbb{E}[f(v,u)|S_0, S_1] - M_{01}| + |\alpha_0 M_{00} + \alpha_1 M_{11} - M_{01}| \end{aligned}
    \end{equation}

    \textbf{1. Bounding the Mean Group Gap} (Term 4):\\
    Since $\alpha_0 + \alpha_1 = 1$, we can rewrite the last term:
    \begin{equation}
        \alpha_0 M_{00} + \alpha_1 M_{11} - M_{01} = \alpha_0(M_{00} - M_{01}) + \alpha_1(M_{11} - M_{01}).
    \end{equation}
    Using the Lipschitz property of the non-linear $\sigma()$ and Cauchy-Schwarz:
    \begin{equation}
        \begin{aligned} |M_{00} - M_{01}| &= |\sigma(p^\top \Sigma p) - \sigma(p^\top \Sigma q)| \\ &\leq L |p^\top \Sigma (p - q)| \leq L \lVert p \rVert_2 \lVert \Sigma \rVert_2 \lVert p - q \rVert_2 \leq L Q \lVert \Sigma \rVert_2 \delta . \end{aligned}
    \end{equation}
    Similarly, $|M_{11} - M_{01}| \leq L Q \lVert \Sigma \rVert_2 \delta$. Thus, the total mean gap is bounded by:
    \begin{equation}
    \begin{aligned}
        |\alpha_0 M_{00} + \alpha_1 M_{11} - M_{01}| &= |\alpha_0(M_{00} - M_{01}) + \alpha_1(M_{11} - M_{01})| \\
        &\leq L Q \lVert \Sigma \rVert_2 \delta (\alpha_0 + \alpha_1) = L Q \lVert \Sigma \rVert_2 \delta .
    \end{aligned}
    \end{equation}
    \textbf{2. Bounding the Representation Spread} (Terms 1, 2, and 3):\\
    We now evaluate the distance between the expectation of the function and the function of the expectation. For $S_0$ (Jensen's and then Lipschitz for each term of the expectations - thus also for the expectation):
    \begin{equation}
    \begin{aligned}
        |\mathbb{E}[\sigma(v^\top \Sigma u) | S_0, S_0] - M_{00}| &\leq \mathbb{E}[|\sigma(v^\top \Sigma u) - \sigma(p^\top \Sigma p)| | S_0, S_0] \\
        &\leq L \mathbb{E}[|v^\top \Sigma u - p^\top \Sigma p| | S_0, S_0]
    \end{aligned}
    \end{equation}
    We decompose the inner product difference:
    \begin{equation}
        |v^\top \Sigma u - p^\top \Sigma p| = |(v-p)^\top \Sigma u + p^\top \Sigma (u-p)| \leq Q \lVert \Sigma \rVert_2 (\lVert v-p \rVert_2 + \lVert u-p \rVert_2)
    \end{equation}
    Taking the expectation over independent variables $v, u \in S_0$,\\
    and using Jensen's inequality $\mathbb{E}[\lVert v-p \rVert_2 | S_0] \leq \sqrt{\mathrm{tr}(\mathbf{C}_0)}$, we get:
    \begin{equation}
        \alpha_0 |\mathbb{E}[f(v,u)|S_0, S_0] - M_{00}| \leq \alpha_0 L Q \lVert \Sigma \rVert_2 (2\sqrt{\mathrm{tr}(\mathbf{C}_0)}) = \alpha_0 2L Q \lVert \Sigma \rVert_2 \sqrt{\mathrm{tr}(\mathbf{C}_0)}
    \end{equation}
    Similarly, for $S_1$:
    \begin{equation}
        \alpha_1 |\mathbb{E}[f(v,u)|S_1, S_1] - M_{11}| \leq \alpha_1 2L Q \lVert \Sigma \rVert_2 \sqrt{\mathrm{tr}(\mathbf{C}_1)}
    \end{equation}
    For the inter-group variance ($v \in S_0, u \in S_1$):
    \begin{equation}
        \begin{aligned} |\mathbb{E}[f(v,u) | S_0, S_1] - M_{01}| &\leq L \mathbb{E}[|v^\top \Sigma u - p^\top \Sigma q| | S_0, S_1] \\
        &= L \mathbb{E}[|v^\top \Sigma u - p^\top \Sigma u + p^\top \Sigma u - p^\top \Sigma q| | S_0, S_1] \\
        &= L \mathbb{E}[|(v-p)^\top \Sigma u + p^\top \Sigma (u-q)| | S_0, S_1] \\
        &\leq L Q \lVert \Sigma \rVert_2 \left( \sqrt{\mathrm{tr}(\mathbf{C}_0)} + \sqrt{\mathrm{tr}(\mathbf{C}_1)} \right) \end{aligned}
    \end{equation}
    \textbf{3. Final Combination} \\
    Substituting all the bounded terms back into the triangle inequality for $\Delta_{\mathrm{DP}}$ and
    factoring out $L Q \lVert \Sigma \rVert_2$ yields the final bound:
    \begin{equation}
        \Delta_{\mathrm{DP}} \leq L Q \lVert \Sigma \rVert_2 \left( \delta + (2\alpha_0 + 1)\sqrt{\mathrm{tr}(\mathbf{C}_0)} + (2\alpha_1 + 1)\sqrt{\mathrm{tr}(\mathbf{C}_1)} \right)
    \end{equation}
    completing the proof.
\end{proof}

\subsubsection{Extension to Multiple Sensitive Groups (Proof of Proposition~\ref{prop:lp_2_multigroup})}

\begin{proof}
    To simplify the notations, let $\boldsymbol{\mu}_k := \mathbb{E}_{v \sim U}[v \mid v \in S_k]$ denote the mean representation of group $S_k$, for $k = 0,\dots,K-1$. Thus, $\lVert \boldsymbol{\mu}_k - \boldsymbol{\mu}_l \rVert_2 \leq \delta$ for all $k,l$. \\
    Let
    \begin{equation}
    \begin{aligned}
        \mathbb{E}_{\mathrm{intra}} &:= \mathbb{E}[f(v,u) \mid S(v)=S(u)] = \sum_{k=0}^{K-1} \frac{|S_k|^2}{\sum_{j} |S_j|^2} \, \mathbb{E}[f(v,u) \mid v,u \in S_k] \\
        &= \sum_{k=0}^{K-1} \alpha_k \cdot \mathbb{E}[f(v,u) \mid v,u \in S_k]
    \end{aligned}
    \end{equation}
    and
    \begin{equation}
    \begin{aligned}
        \mathbb{E}_{\mathrm{inter}} &:= \mathbb{E}[f(v,u)\! \mid\! S(v)\neq S(u)] = \sum_{k \neq l} \frac{|S_k||S_l|}{(\sum_j |S_j|)^2 \!-\! \sum_j |S_j|^2} \, \mathbb{E}[f(v,u)\! \mid\! v \in S_k, u \in S_l]\\
        &= \sum_{k \neq l} \beta_{kl} \cdot \mathbb{E}[f(v,u) \mid v \in S_k, u \in S_l],
    \end{aligned}
    \end{equation}
    where $\sum_k \alpha_k = 1$ and $\sum_{k \neq l} \beta_{kl} = 1$.
    To bridge the expectations with the mean representations, we define the mean-based variables
    \begin{equation}
        M_{kl} := \sigma(\boldsymbol{\mu}_k^\top \Sigma \boldsymbol{\mu}_l), \qquad 0 \le k,l \le K-1.
    \end{equation}
    Using the triangle inequality, we can bound $\Delta_{\mathrm{DP}}$ by separating the variance within the groups from the gap between the group means:
    \begin{equation}
    \begin{aligned}
        \Delta_{\mathrm{DP}} = &\ |\mathbb{E}_{\mathrm{intra}} - \mathbb{E}_{\mathrm{inter}}| \\
        = &\ \Big|\sum_k \alpha_k \mathbb{E}[f(v,u) \mid v,u \in S_k] - \sum_{k \neq l} \beta_{kl} \mathbb{E}[f(v,u) \mid v \in S_k, u \in S_l]\Big| \\
        = &\ \Big|\sum_k \alpha_k \mathbb{E}[f(v,u) \mid S_k, S_k] - \sum_k \alpha_k M_{kk} + \sum_k \alpha_k M_{kk} \\
        &\ - \sum_{k \neq l} \beta_{kl} \mathbb{E}[f(v,u) \mid S_k, S_l] + \sum_{k \neq l} \beta_{kl} M_{kl} - \sum_{k \neq l} \beta_{kl} M_{kl}\Big| \\
        \leq &\ \sum_k \alpha_k \big|\mathbb{E}[f(v,u) \mid S_k, S_k] - M_{kk}\big|
        + \sum_{k \neq l} \beta_{kl} \big|\mathbb{E}[f(v,u) \mid S_k, S_l] - M_{kl}\big| \\
        &\ + \Big|\sum_k \alpha_k M_{kk} - \sum_{k \neq l} \beta_{kl} M_{kl}\Big|
    \end{aligned}
    \end{equation}

    \textbf{1. Bounding the Mean Group Gap} (last term):\\
    Since $\sum_k \alpha_k = 1$ and $\sum_{l \neq m} \beta_{lm} = 1$, we can rewrite the last term as a doubly-indexed average of differences:
    \begin{equation}
        \sum_k \alpha_k M_{kk} - \sum_{l \neq m} \beta_{lm} M_{lm}
        = \sum_k \sum_{l \neq m} \alpha_k \beta_{lm} \left( M_{kk} - M_{lm} \right).
    \end{equation}
    Using the Lipschitz property of the non-linear $\sigma(\cdot)$ and Cauchy--Schwarz:
    \begin{equation}
    \begin{aligned}
        |M_{kk} - M_{lm}| &= |\sigma(\boldsymbol{\mu}_k^\top \Sigma \boldsymbol{\mu}_k) - \sigma(\boldsymbol{\mu}_l^\top \Sigma \boldsymbol{\mu}_m)|
        \leq L \left| (\boldsymbol{\mu}_k - \boldsymbol{\mu}_l)^\top \Sigma \boldsymbol{\mu}_k + \boldsymbol{\mu}_l^\top \Sigma (\boldsymbol{\mu}_k - \boldsymbol{\mu}_m) \right| \\
        &\leq L Q \lVert \Sigma \rVert_2 \left( \lVert \boldsymbol{\mu}_k - \boldsymbol{\mu}_l \rVert_2 + \lVert \boldsymbol{\mu}_k - \boldsymbol{\mu}_m \rVert_2 \right)
        \leq 2 L Q \lVert \Sigma \rVert_2 \delta .
    \end{aligned}
    \end{equation}
    Note that when $k \in \{l,m\}$ one of the two differences vanishes and a single $\delta$ is paid; this is what happens for every pair when $K=2$. Thus, the total mean gap is bounded by:
    \begin{equation}
    \begin{aligned}
        \Big|\sum_k \alpha_k M_{kk} - \sum_{l \neq m} \beta_{lm} M_{lm}\Big|
        \leq 2 L Q \lVert \Sigma \rVert_2 \delta \sum_k \sum_{l \neq m} \alpha_k \beta_{lm}
        = 2 L Q \lVert \Sigma \rVert_2 \delta .
    \end{aligned}
    \end{equation}

    \textbf{2. Bounding the Representation Spread} (first two terms):\\
    We now evaluate the distance between the expectation of the function and the function of the expectation. For a generic pair of groups $S_k, S_l$ (Jensen's and then Lipschitz for each term of the expectations - thus also for the expectation):
    \begin{equation}
    \begin{aligned}
        |\mathbb{E}[\sigma(v^\top \Sigma u) \mid S_k, S_l] - M_{kl}| &\leq \mathbb{E}[|\sigma(v^\top \Sigma u) - \sigma(\boldsymbol{\mu}_k^\top \Sigma \boldsymbol{\mu}_l)| \mid S_k, S_l] \\
        &\leq L \, \mathbb{E}[|v^\top \Sigma u - \boldsymbol{\mu}_k^\top \Sigma \boldsymbol{\mu}_l| \mid S_k, S_l]
    \end{aligned}
    \end{equation}
    We decompose the inner product difference:
    \begin{equation}
        |v^\top \Sigma u - \boldsymbol{\mu}_k^\top \Sigma \boldsymbol{\mu}_l| = |(v-\boldsymbol{\mu}_k)^\top \Sigma u + \boldsymbol{\mu}_k^\top \Sigma (u-\boldsymbol{\mu}_l)| \leq Q \lVert \Sigma \rVert_2 \left( \lVert v-\boldsymbol{\mu}_k \rVert_2 + \lVert u-\boldsymbol{\mu}_l \rVert_2 \right)
    \end{equation}
    Taking the expectation over independent variables $v \in S_k$, $u \in S_l$,\\
    and using Jensen's inequality $\mathbb{E}[\lVert v-\boldsymbol{\mu}_k \rVert_2 \mid S_k] \leq \sqrt{\mathrm{tr}(\mathbf{C}_k)}$, we get:
    \begin{equation}\label{eq:generic-spread}
        |\mathbb{E}[f(v,u) \mid S_k, S_l] - M_{kl}| \leq L Q \lVert \Sigma \rVert_2 \left( \sqrt{\mathrm{tr}(\mathbf{C}_k)} + \sqrt{\mathrm{tr}(\mathbf{C}_l)} \right)
    \end{equation}
    Applying \eqref{eq:generic-spread} with $l = k$ for the intra-group terms:
    \begin{equation}
        \sum_k \alpha_k |\mathbb{E}[f(v,u) \mid S_k, S_k] - M_{kk}| \leq 2 L Q \lVert \Sigma \rVert_2 \sum_k \alpha_k \sqrt{\mathrm{tr}(\mathbf{C}_k)}
    \end{equation}
    and with $l \neq k$ for the inter-group terms:
    \begin{equation}
    \begin{aligned}
        \sum_{k \neq l} \beta_{kl} |\mathbb{E}[f(v,u) \mid S_k, S_l] - M_{kl}|
        &\leq L Q \lVert \Sigma \rVert_2 \sum_{k \neq l} \beta_{kl} \left( \sqrt{\mathrm{tr}(\mathbf{C}_k)} + \sqrt{\mathrm{tr}(\mathbf{C}_l)} \right)\\
        &= L Q \lVert \Sigma \rVert_2 \sum_k w_k \sqrt{\mathrm{tr}(\mathbf{C}_k)},
    \end{aligned}
    \end{equation}
    where we collected, for each group $k$, the total inter-group weight in which it appears:
    \begin{equation}
        w_k := \sum_{l \neq k} \left( \beta_{kl} + \beta_{lk} \right), \qquad \sum_k w_k = 2 .
    \end{equation}

    \textbf{3. Final Combination} \\
    Substituting all the bounded terms back into the triangle inequality for $\Delta_{\mathrm{DP}}$:
    \begin{equation}
        \Delta_{\mathrm{DP}} \leq 2 L Q \lVert \Sigma \rVert_2 \delta
        + 2 L Q \lVert \Sigma \rVert_2 \sum_k \alpha_k \sqrt{\mathrm{tr}(\mathbf{C}_k)}
        + L Q \lVert \Sigma \rVert_2 \sum_k w_k \sqrt{\mathrm{tr}(\mathbf{C}_k)}
    \end{equation}
    Factoring out $L Q \lVert \Sigma \rVert_2$ yields the final bound:
    \begin{equation}
        \Delta_{\mathrm{DP}} \leq L Q \lVert \Sigma \rVert_2 \left( 2\delta + \sum_{k=0}^{K-1} \left( 2\alpha_k + w_k \right) \sqrt{\mathrm{tr}(\mathbf{C}_k)} \right)
    \end{equation}
    completing the proof.
\end{proof}

\subsection{The Effect of Non-Linearity} \label{app:effect_nonlinearity}
\begin{remark}[Non-linearity] \label{app:prop:non_linear}
Taking $\mathbf{W}=\vI$ in Proposition~\ref{prop:nc_2_multigroup} gives, for \emph{any} $L$-Lipschitz $\sigma(\cdot)$ acting on \emph{any} intermediate representation, 
\begin{equation}
    \max_{k,\ell}\lVert\mathbb{E}[\sigma(\vh_v)|\setS_k]-\mathbb{E}[\sigma(\vh_v)|\setS_\ell]\rVert_2 \leq L(\delta + 2\max_{k}\sqrt{\mathrm{tr}(\mathbf{C}_k)}) .
\end{equation} 
\end{remark}
The key difference in the bound for the non-linear readouts is the inclusion of the within-group variance terms introduced by the non-linearity. By contrast, the bound for the linear readouts depends purely on the group-conditional means. The variance terms constitute a \emph{second} and \emph{independent} source of fairness violation. In particular, equalizing just the group means may not be sufficient, even when $\delta=0$, a large within-group variance can still allow a classifier to recover the sensitive attribute.
Figure~\ref{fig:illustration} illustrates two subgroups with identical means but different spreads remain easily separable, so a classifier acting on such representations can still predict the sensitive attribute. 

\begin{figure}[htbp]
    \centering
    \includegraphics[width=0.33\linewidth]{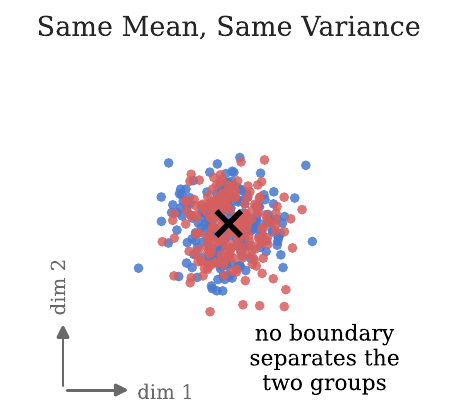}
    \includegraphics[width=0.33\linewidth]{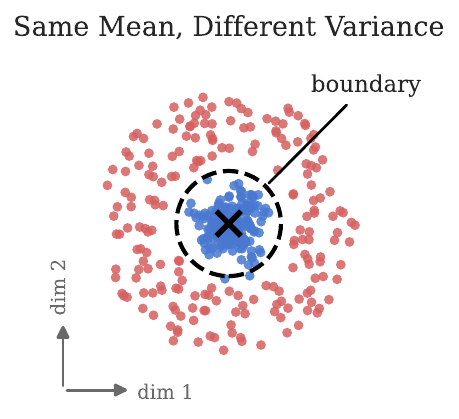}
    \includegraphics[width=0.5\linewidth]{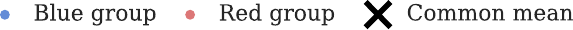}
    \vspace{-3mm}
    \caption{Illustrative example showing that mean matching is not enough, and that variance can still result in discrimination of sensitive attributes.}
    \label{fig:illustration}
\end{figure}

\section{Exact Mean-Equalization Projection} \label{app:proof:projection}
For an embedding matrix $\mathbf{H} \in \mathbb{R}^{n \times d}$ ($n$ nodes, $d$ features), in order to zero out the group-conditional mean $\delta$ we need $\bar{\vh}_{\setS_i} = \bar{\vh}_{\setS_j}$, for every pair of subgroups $\setS_i,\setS_j$, where $\bar{\vh}_{\setS_i} = \frac{1}{|\setS_i|}\ve_{\setS_i}^\top \vH$,
with $\ve_{\setS_i}$ being the indicator vector of subgroup $\setS_i$.
This is equivalent to $\bar{\vh}_{\setS_k} = \bar{\vh}_{\setS_0}$, for every $k\in[K]\setminus\{0\}$.
Define the fairness constraint set 
\begin{equation}
    \setX_0 = \{ \mathbf{X} : \mathbf{a}^\top_k\mathbf{X} = \mathbf{0},\ k\in[K]\setminus\{0\}\}, \quad\quad \mathbf{a}_k := \frac{\ve_{\setS_0}}{n_0} - \frac{\ve_{\setS_k}}{n_k}.
\end{equation}
Then, projecting an embedding matrix $\mathbf{H}$ onto $\setX_0$, amounts to solving:
\begin{equation}
\begin{aligned}
\min_{\mathbf{X}\in\setX_0} \left\{ \lVert \mathbf{X} - \mathbf{H}\rVert^2_2 = \sum_{j=0}^{d-1} \lVert \mathbf{X}_{:,j} - \mathbf{H}_{:,j}\rVert^2_2 \right\}
\end{aligned}
\end{equation}
with $\mathbf{X}_{:,j}$ denoting the $j$-th column of $\mathbf{X}$. The above is equivalent to solving the following for each column vector $\mathbf{h}$ of $\mathbf{H}$:
\begin{equation}
\begin{aligned}
\min_{\mathbf{x}} \quad & \lVert \mathbf{x} - \mathbf{h}\rVert^2_2 \\
\textrm{s.t.} \quad & \mathbf{a}_k^\top \mathbf{x} = 0, \ k\in [K]\setminus \{0\}
\end{aligned}
\end{equation}
Each $\mathbf{a}_k^\top \mathbf{x} = 0$ constraint is $\left(\frac{\ve_{\setS_0}}{|\setS_0|} - \frac{\ve_{\setS_k}}{|\setS_k|}\right)^\top \vx = 0$. \\
Hence, the Lagrangian is:
\begin{equation}
    \mathcal{L}(\vx,\boldsymbol{\lambda}) = \lVert \vx - \vh \rVert_2^2 + \sum_{k=1}^{K-1} \lambda_k \left(\frac{\ve_{\setS_0}}{|\setS_0|} - \frac{\ve_{\setS_k}}{|\setS_k|}\right)^\top \vx
\end{equation}
Using the KKT conditions, we have the following.\\
\textbullet\ Stationarity:\\
\begin{equation}
\begin{aligned}
    \nabla_{\vx}\mathcal{L}(\vx^*,\boldsymbol{\lambda}) = 0 \Leftrightarrow &\ 2(\vx^* - \vh) + \sum_{k=1}^{K-1} \lambda_k \left(\frac{\ve_{\setS_0}}{|\setS_0|} - \frac{\ve_{\setS_k}}{|\setS_k|}\right) = 0 \\
    &\ \vx^* = \vh -\frac{1}{2} \sum_{k=1}^{K-1} \lambda_k \left(\frac{\ve_{\setS_0}}{|\setS_0|} - \frac{\ve_{\setS_k}}{|\setS_k|}\right)
\end{aligned}
\end{equation}
Let's define $\Lambda = \sum_{k=1}^{K-1}\lambda_k$, and express the above equality entry-wise:
\begin{equation}\label{app:eq:stationariy_entry_wise}
    \vx^*_v = \left\{
    \begin{array}{ll}
          \vh_v - \frac{1}{2n_0} \Lambda & v \in\setS_0 \\
          \vh_v + \frac{1}{2n_i} \lambda_i & v \in\setS_i,\ i\neq 0 \\
    \end{array} 
    \right. 
\end{equation}
Taking averages over each group:
\begin{equation}
\begin{aligned}
    \bar{\vx}^*_{\setS_0} &= \bar{\vh}_{\setS_0} - \frac{1}{2n_0} \Lambda \\
    \bar{\vx}^*_{\setS_i} &= \bar{\vh}_{\setS_i} + \frac{1}{2n_i} \lambda_i \\
\end{aligned}
\end{equation}

\textbullet\ Primal Feasibility:\\
\begin{equation} \label{app:eq:primal_feasibility}
    \bar{\vx}^*_{\setS_0} = \bar{\vx}^*_{\setS_i} \Leftrightarrow \bar{\vh}_{\setS_0} - \frac{1}{2n_0} \Lambda = \bar{\vh}_{\setS_i} + \frac{1}{2n_i} \lambda_i \Leftarrow \lambda_i = 2n_i(\bar{\vh}_{\setS_0} - \bar{\vh}_{\setS_i}) - \frac{n_i}{n_0} \Lambda
\end{equation}
if we take the sums of both sides we get:
\begin{equation}
\begin{aligned}
    \sum_{i=1}^{K-1} \lambda_i = 2\sum_{i=1}^{K-1} n_i(\bar{\vh}_{\setS_0} - \bar{\vh}_{\setS_i}) - \sum_{i=1}^{K-1}\frac{n_i}{n_0} \Lambda \Leftrightarrow \frac{n}{n_0}\Lambda = 2\sum_{i=1}^{K-1} n_i(\bar{\vh}_{\setS_0} - \bar{\vh}_{\setS_i}) \\
    \Lambda = 2n_0(\bar{\vh}_{\setS_0} - \bar{\vh}) 
\end{aligned}
\end{equation}
replacing this to \eqref{app:eq:primal_feasibility}, we get:
\begin{equation}
    \lambda_i = 2n_i(\bar{\vh}_{\setS_0} - \bar{\vh}_{\setS_i}) - \frac{n_i}{n_0} 2n_0(\bar{\vh}_{\setS_0} - \bar{\vh}) \Leftrightarrow \lambda_i = 2n_i (\bar{\vh} - \bar{\vh}_{\setS_i})
\end{equation}
and then replacing this back to \eqref{app:eq:stationariy_entry_wise}, we get:
\begin{equation}
    \vx^*_v = \left\{
    \begin{array}{ll}
          \vh_v - (\bar{\vh}_{\setS_0} - \bar{\vh}) \\
          \vh_v + (\bar{\vh} - \bar{\vh}_{\setS_i}) & v \in\setS_i,\ i\neq 0 \\
    \end{array} 
    \right. 
\end{equation}
Unifying both cases, we get:
\begin{equation}
    \vx^*_v = \vh_v + \bar{\vh} - \bar{\vh}_{\setS_i}, \ v\in \setS_i
\end{equation}

$\triangleright$ \textbf{Matrix Form:}\\
Define $\mathbf{M} := \sum_{k=0}^{K-1} \frac{\ve_{\setS_k}\ve_{\setS_k}^\top}{|\ve_{\setS_k}|} \in \mathbb{R}^{n \times n}$, then $\mathbf{Mh}_v = \bar{\vh}_{\setS_i},\ v\in\setS_i$.
Then, for the projection matrix $\vP := \vI + \frac{\ve\ve^\top}{n} - \mathbf{M}$, it holds that $\vx^* = \vP \cdot \vh$. \\
It is easy to see that $\vP$ is symmetric ($\vP = \vP^\top$). So, for the full embedding matrices, we get:
\begin{equation}
    \mathbf{X}^* = \vP \cdot \mathbf{H} \Rightarrow d\mathbf{H} = \mathbf{P} \cdot d\mathbf{X}.
\end{equation}
Hence, \textit{the projection of the backward pass is exactly the same as that of the forward pass.}

\section{Relaxed Projection onto the Fairness Slab}
\label{app:slab}

The projection of Section~\ref{sec:fair_proj:exact} enforces $\delta = 0$ exactly. However, this can be restrictive, a small value of $\delta$ suffices to keep the fairness gap small, and forcing it to exact $0$ can displace embeddings more than fairness demands. 
Hence, we consider the following relaxation of a \emph{slab} of half-width $\delta$. 

Let $\mathbf{a} := \mathbf{a}_1 = \frac{\ve_{\setS_0}}{n_0} - \frac{\ve_{\setS_1}}{n_1}$ be the group-contrast vector as in \eqref{eq:mean-fair-set}, and $\mathbf{s} := \mathbf{H}^\top\mathbf{a}\in\mathbb{R}^d$.
The slab requires the group-mean gap to be at most $\delta$ (\emph{fairness tolerance}) in every feature coordinate
\begin{equation}
    \setX_\delta := \big\{\mathbf{H}\in\mathbb{R}^{n\times d} :
    \lVert\mathbf{H}^\top\mathbf{a}\rVert_\infty \leq \delta\big\},
\end{equation}
and recovers the exact constraint set $\setX_0$, at $\delta = 0$. It is an intersection of two half-spaces, of which at most one can be violated at a time, so the projection stays available in closed form,
\begin{equation} \label{eq:slab_forward}
    \mathrm{Proj}_{\setX_\delta}(\mathbf{H}) = \mathbf{H} - \frac{\mathbf{a}}{\lVert\mathbf{a}\rVert_2^2}g(\mathbf{s},\delta)^\top,
    \qquad g(\mathbf{s},\delta) := \mathrm{sign}(\mathbf{s})\odot (|\mathbf{s}|-\delta\mathbf{1})_+,
\end{equation}
where $(\vx)_{+} = \max\{\vx,\mathbf{0}\} = \mathrm{ReLU}(\vx)$, and $g$ acts entrywise, $[g(\mathbf{s},\delta)]_j = \mathrm{sign}(s_j)(|s_j|-\delta)_+$.

Unlike the exact projection, which applies $\vP$ to every column, this map is \emph{gated}, i.e., a column for which $|s_j|\leq\delta$ has $g(s_j,\delta)=0$ and is returned unchanged, while only violating columns are shifted. Its backward pass is gated in the same way, 
$d\vh = d\vh^* - \mathbf{1}\big[|s|>\delta\big]\,\frac{\mathbf{a}\mathbf{a}^\top}{\lVert\mathbf{a}\rVert_2^2}\,d\vh^*,$
so gradients pass through unchanged for columns inside the slab and get shifted otherwise. 
For $K>2$ the constraint set is the intersection of $K-1$ such slabs, hence a polytope on which \eqref{eq:slab_forward} no longer applies directly. Then, the projection can be computed by Dykstra's algorithm \citep{boyle1986method}. Derivations are provided in the following subsections.

\subsection{Forward Pass}
Given a vector $\vh\in \mathbb{R}^n$, consider the slab defined by the constraint set $\setX_{\delta}:=\{\vh \in \mathbb{R}^n: |\va^T\vh| \leq \delta\}$. This for a matrix $\vH\in \mathbb{R}^{n\times d}$ results to $\setX_{\delta}:=\{\vH \in \mathbb{R}^{n\times d}: \lVert\vH^\top\va\rVert_{\infty} \leq \delta\}$. The following analysis will be on the vector level, and in the end we will generalize it for an $n\times d$ matrix.

The problem of projecting a point $\vd \in \mathbb{R}^n$ onto $\setX_{\delta}$ is given by
\begin{equation}
   \boldsymbol{\Pi}_{\setX_{\delta}}(\vd):= \arg\min_{\vh \in \setX_{\delta}} \frac{1}{2}\|\vh-\vd\|_2^2
\end{equation}

Before solving the problem, we first consider the simpler problem of projecting onto the half-space defined by $\setX_{\delta}^+:=\{\vh \in \mathbb{R}^n: \va^T\vh \leq \delta\}$. The solution is
\begin{equation}\label{eq:proj_slab}
    \boldsymbol{\Pi}_{\setX_{\delta}^+}(\vd) = \vd - \frac{(\va^T\vd - \delta)_{+}}{\|\va\|_2^2}\va,
\end{equation}
where $(\vx)_{+} = \max\{\vx,\mathbf{0}\}$. In other words, we first check whether the point is in the halfspace, and if not, we project onto the hyperplane defining the boundary of the halfspace.

Note that the slab $\setX_{\delta}$ is the intersection of the halfspaces $\setX_{\delta}^+:=\{\vh \in \mathbb{R}^n: \va^T\vh \leq \delta\}$ and $\setX_{\delta}^-:=\{\vh \in \mathbb{R}^n: -\va^T\vh \leq \delta\}$. The projection is given by
\begin{equation}
    \boldsymbol{\Pi}_{\setX_{\delta}}(\vd) = \vd - \frac{(\va^T\vd - \delta)_{+}}{\|\va\|_2^2}\va - \frac{(-\va^T\vd - \delta)_{+}}{\|\va\|_2^2}\va 
    = \vd - [(\va^T\vd - \delta)_{+} - (-\va^T\vd - \delta)_{+}]\frac{\va}{\|\va\|_2^2}
\end{equation}
This stems from the fact that a point lying outside the slab already satisfies one of the halfspace constraints $\setX_{\delta}^+$ or $\setX_{\delta}^-$, and it remains to project the point onto the hyperplane defining the boundary of the halfspace of the other constraint. The result can be confirmed by analyzing the KKT conditions of problem \eqref{eq:proj_slab}.

\noindent {\bf KKT analysis:} Let $\lambda_1 \geq 0$ and $\lambda_2 \geq 0$ denote the Lagrange multipliers of $\setX_{\delta}^+$ and $\setX_{\delta}^-$ respectively.  The Lagrangian of \eqref{eq:proj_slab} is given by
\begin{equation}
    L(\vh,\lambda_1,\lambda_2) = \frac{1}{2}\|\vh-\vd\|_2^2 + \lambda_1(\va^T\vh-\delta) + \lambda_2(-\va^T\vh-\delta)
\end{equation}
The KKT conditions are then given by the complementary slackness conditions 
\begin{equation}
    \lambda_1(\va^T\vh-\delta) = 0, \lambda_2(-\va^T\vh-\delta) = 0, 
\end{equation}
and setting the gradient of the Lagrangian w.r.t. $\vh$ equal to $\mathbf{0}$, which yields
\begin{equation}
    \vh-\vd + \lambda_1\va - \lambda_2\va = \mathbf{0}
\end{equation}
When $\lambda_1 = \lambda_2=0$, complementary slackness mandates that both halfspace constraints $\setX_{\delta}^+$ and $\setX_{\delta}^-$ are satisfied; i.e., $\vd \in \setX_{\delta}$, and the solution is $\vh=\vd$.

If $\lambda_1 >0 $ and $\lambda_2=0$, then $\setX_{\delta}^-$ is satisfied but $\setX_{\delta}^+$ is violated. Hence, we obtain $\vh = \vd -\lambda_1\va$. From complementary slackness, we have $\va^T\vh = \delta$.
Solving for $\lambda_1$, we obtain 
\begin{equation}
    \vh=\vd - \frac{(\va^T\vd - \delta)}{\|\va\|_2^2}\va
\end{equation}
If $\lambda_1 =0 $ and $\lambda_2>0$, then $\setX_{\delta}^+$ is satisfied but $\setX_{\delta}^-$ is violated. Solving for $\lambda_2$, we obtain 
\begin{equation}
    \vh=\vd -\frac{(\va^T\vd + \delta)}{\|\va\|_2^2}\va
\end{equation}
Note that the condition $\lambda_1 >0 $ and $\lambda_2>0$ is impossible, as it would mean that both halfpsace constraints are simultaneously violated. 
Hence, the overall solution is
\begin{equation}
    \vh = \vd - \frac{(\va^T\vd - \delta)_{+}}{\|\va\|_2^2}\va - \frac{(-\va^T\vd - \delta)_{+}}{\|\va\|_2^2}\va 
    = \vd - [(\va^T\vd - \delta)_{+} - (-\va^T\vd - \delta)_{+}]\frac{\va}{\|\va\|_2^2}
\end{equation}

\subsection{Backward Pass}

Recall from the forward analysis that the projection of a single column $\vh$ onto $\setX_{\delta}$ is
\begin{equation}
    \vx = \boldsymbol{\Pi}_{\setX_{\delta}}(\vh)
    = \vh - \big[(\va^\top\vh-\delta)_+ - (-\va^\top\vh-\delta)_+\big]\frac{\va}{\lVert\va\rVert_2^2}.
\end{equation}
Writing $s := \va^\top \vh$ and collapsing the two hinges into a single map, this is soft-thresholding at $\delta$:
\begin{equation}
    \vx = \vh - g(s,\delta)\,\frac{\va}{\lVert\va\rVert_2^2},
    \qquad
    g(s,\delta) := (s-\delta)_+ - (-s-\delta)_+ = \mathrm{sign}(s)\,(|s|-\delta)_+ .
\end{equation}

$\triangleright$ \textbf{Jacobian.}\;
Differentiating and using $\frac{\partial s}{\partial \vh} = \va^\top$ gives, and $g'(s,\delta) = \mathbf{1}[\,|s|>\delta\,]$ we get,
\begin{equation}
    \mathbf{J} := \frac{\partial \vx}{\partial \vh}
    = \vI - \mathbf{1}[\,|s|>\delta\,]\,\mathbf{P}_{\va},
    \qquad
    \mathbf{P}_{\va} := \frac{\va\va^\top}{\lVert\va\rVert_2^2},
\end{equation}
that is,
\begin{equation}
    \mathbf{J} =
    \begin{cases}
        \vI, & |s| \leq \delta \quad (\text{inside the slab}),\\[4pt]
        \vI - \mathbf{P}_{\va}, & |s| > \delta \quad (\text{outside}).
    \end{cases}
\end{equation}

$\triangleright$ \textbf{Backward pass.}\;
As $\mathbf{P}_{\va}$ is symmetric, $\mathbf{J} = \mathbf{J}^\top$, so calculating the gradient \begin{equation}
    d\vh = \mathbf{J}^\top d\vx
    = d\vx - \mathbf{1}[\,|s|>\delta\,]\,\frac{\va\va^\top }{\lVert\va\rVert_2^2}d\vx.
\end{equation}

\subsection{Matrix Form}
For $\mathbf{H}\in\mathbb{R}^{n\times d}$, let $\mathbf{s} = \mathbf{H}^\top\va\in\mathbb{R}^d$, and let $\mathbf{S}_{\mathrm{diag}} = \mathrm{diag}\big(\mathbf{1}[\,|\mathbf{s}|>\delta\,]\big)\in\{0,1\}^{d\times d}$ be the diagonal active-set matrix over columns. For a vector argument $\mathbf{s}\in\mathbb{R}^d$, $g$ acts entrywise, $[g(\mathbf{s},\delta)]_j := \mathrm{sign}(s_j)\,(|s_j|-\delta)_+$, equivalently
\begin{equation}
    g(\mathbf{s},\delta) = \mathrm{sign}(\mathbf{s})\odot\big(|\mathbf{s}|-\delta\mathbf{1}\big)_+ ,
\end{equation}
the element-wise soft-thresholding operator ($\odot$: element-wise multiplication). Then,
\begin{align}
    \text{Forward:}\quad
    & \mathbf{X} = \mathbf{H} - \frac{\va}{\lVert\va\rVert_2^2}g(\mathbf{s},\delta)^\top, \\
    \text{Backward:}\quad
    & d\mathbf{H} = d\mathbf{X} - \frac{\va\va^\top}{\lVert\va\rVert_2^2}d\mathbf{X}\cdot\mathbf{S}_{\mathrm{diag}}.
\end{align}

\subsection{Multiple Sensitive Subgroups} 
\label{app:slab:multi}
For $K>2$, the relaxed constraint set $\mathcal{X}_\delta$ is the intersection of $K-1$ slabs of the two-group form,
$\mathcal{X}_\delta := \bigcap_{k=1}^{K-1}\mathcal{X}_{\delta}^k$, where $\mathcal{X}_{\delta}^k := \{\mathbf{h} : \lvert\mathbf{h^\top}\mathbf{a}_k\rvert\le\delta\}$,
with $\va_k$ as defined in \eqref{eq:mean-fair-set}. For $K=3$, this constrains the gaps between $\mathcal{S}_0-\mathcal{S}_1$ and $\mathcal{S}_0-\mathcal{S}_2$. While each individual slab $\mathcal{X}_{\delta}^k$ admits the closed-form projection \eqref{eq:slab_forward}, their intersection $\mathcal{X}_\delta$ forms a polytope. Because multiple constraints can be violated simultaneously, \eqref{eq:slab_forward} does not apply directly. However, since each $\mathcal{X}_{\delta}^k$ is closed, convex, and contains $\mathbf{0}$, the projection $\mathrm{Proj}_{\mathcal{X}_\delta}$ can be computed via Dykstra's algorithm \citep{boyle1986method} by cycling through the slabs with individual correction terms. 
For the backward pass, we can differentiate though the KKT conditions of the problem using implicit differentiation \citep{blondel2022efficient}.

\section{Variance Control Inside the Diffusion (Proof of Proposition~\ref{prop:fair_diff_conv})}
\label{app:fair_diff_conv:proof}

\begin{proof} 
Let $\Phi(\mathbf{H}) := \vP[\mathbf{T}_{\eta,\lambda}\mathbf{H} + \eta\gamma\mathbf{H}^{(0)}] = \mathrm{Proj}_{\setX}(\mathbf{T}_{\eta,\lambda}\mathbf{H} + \eta\gamma\mathbf{H}^{(0)}) $ denote the update of \eqref{eq:FairProp:cov:pgd_update}. 
Because $\setX$ is a closed convex set, the projection on it is non-expansive, hence we have that
\begin{equation}
\begin{aligned}
    & \lVert\Phi(\mathbf{H}) - \Phi(\mathbf{Y})\rVert_F
    = \lVert \mathrm{Proj}_{\setX}(\mathbf{T}_{\eta,\lambda}\mathbf{H} + \eta\gamma\mathbf{H}^{(0)}) - \mathrm{Proj}_{\setX}(\mathbf{T}_{\eta,\lambda}\mathbf{Y} + \eta\gamma\mathbf{H}^{(0)})\rVert_F \\
    \leq\ & \lVert (\mathbf{T}_{\eta,\lambda}\mathbf{H} + \eta\gamma\mathbf{H}^{(0)}) - (\mathbf{T}_{\eta,\lambda}\mathbf{Y} + \eta\gamma\mathbf{H}^{(0)})\rVert_F = \lVert \mathbf{T}_{\eta,\lambda}(\mathbf{H} - \mathbf{Y}) \rVert_F \\
    \leq \ & \lVert\mathbf{T}_{\eta,\lambda}\rVert_2\lVert\mathbf{H}-\mathbf{Y}\rVert_F.
\end{aligned}
\label{app:eq:contraction}
\end{equation}

We know that $\mathbf{T_{\eta,\lambda}} = \vI - \eta \mathbf{Q}_\lambda$ is a contractive operator iff for all eigenvalues $\mu$ of $\mathbf{Q}_\lambda$ we get 
\begin{equation}
    |1 - \eta \mu| < 1 \Leftrightarrow 0 < \eta \mu < 2.
\end{equation}
Hence, it should hold that 
\begin{equation}
    0 < \eta < \frac{2}{\boldsymbol{\mu}_{\max}(\mathbf{Q}_\lambda)}.
\end{equation}

We know that the Hessian of $f_\lambda$ is $\vQ_\lambda = \gamma\vI + (1-\gamma)\vL_{\mathbf{S}} + 2\lambda\vM$ and is symmetric, so its eigenvalues are real. 
We also know that, $\vL_{\mathbf{S}}\succeq 0$ and $\vM\succeq 0$ give $\vQ_\lambda \succeq \gamma\vI$, hence $\mu \geq \gamma = \boldsymbol{\mu}_{\min} (\vQ_\lambda) > 0$ for every eigenvalue $\mu$ of $\vQ_\lambda$.
Moreover, we know:
\begin{itemize}
    \item $\lambda_{\max}(\vL_{\mathbf{S}}) \leq 2$, since $-1 \leq \sigma(\mathbf{S}) \leq 1$. (equality holds for bipartite graphs)
    \item  $\lVert \mathbf{M} \rVert_2 = \frac{1}{n_{\min}}$, because:\\
    $\lVert \mathbf{M}_k \rVert_2 = 1 \Rightarrow \lVert \mathbf{B}_k^\top\mathbf{M}_k\mathbf{B}_k \rVert_2 = 1$.\\
    Also, $\mathbf{M}$ is a block diagonal with $\frac{1}{n_k}\mathbf{M}_i$ at each block. Hence, $\lVert \mathbf{M}\rVert_2 = \max_{k} \lVert \frac{1}{n_k}\mathbf{M}_k\rVert_2 = \max_k\{\frac{1}{n_k}\} = \frac{1}{\min_i{n_k}} = \frac{1}{n_{\min}}$
\end{itemize}
Hence,
\begin{equation}
    \boldsymbol{\mu}_{\max}(\mathbf{Q}_\lambda) \leq \gamma +2(1-\gamma) + \frac{2\lambda}{n_{\min}} = 2 - \gamma +\frac{2\lambda}{n_{\min}} \Rightarrow \frac{2}{2 - \gamma +\frac{2\lambda}{n_{\min}}} < \frac{2}{\boldsymbol{\mu}_{\max}(\nabla^2 g)}
\end{equation}
meaning that a safe choice for $\eta$ is 
\begin{equation}
    \eta < \frac{2}{2 - \gamma +\frac{2\lambda}{n_{\min}}}.
\end{equation}
Let's define $\rho := \lVert \mathbf{T}_{\eta,\lambda} \rVert_2 = \max\{ |1-\eta\boldsymbol{\mu}_{\min}|, |1-\eta\boldsymbol{\mu}_{\max}| \}< 1$. Hence, \eqref{app:eq:contraction} results in
\begin{equation}
    \lVert\Phi(\mathbf{H}) - \Phi(\mathbf{Y})\rVert_2
    \leq \rho \lVert\mathbf{H}-\mathbf{Y}\rVert_2.
    \label{app:eq:contraction:fin}
\end{equation}

By the Banach fixed-point theorem \citep{bauschke2020correction}, $\Phi$ admits a unique fixed point $\mathbf{H}^*$ and the iterates $\mathbf{H}^{(t+1)} = \Phi(\mathbf{H}^{(t)})$ converge to it from any initialization. The rate follows by applying \eqref{app:eq:contraction:fin} with $\mathbf{Y} = \mathbf{H}^*$: since $\Phi(\mathbf{H}^*) = \mathbf{H}^*$,
\begin{equation}
    \lVert\mathbf{H}^{(t+1)} - \mathbf{H}^*\rVert_F
    = \lVert\Phi(\mathbf{H}^{(t)}) - \Phi(\mathbf{H}^*)\rVert_F
    \leq \rho\,\lVert\mathbf{H}^{(t)} - \mathbf{H}^*\rVert_F ,
\end{equation}
and induction on $t$ gives
\begin{equation}
    \lVert\mathbf{H}^{(t)} - \mathbf{H}^*\rVert_F \leq \rho^{\,t}\,\lVert\mathbf{H}^{(0)} - \mathbf{H}^*\rVert_F .
    \label{eq:linear_rate}
\end{equation}
Reaching accuracy $\varepsilon$ requires $t = O(\log(1/\varepsilon)/\log(1/\rho))$ iterations.


\textbf{The fixed point is the solution of the constrained problem.}
Writing the update as $\Phi(\mathbf{H}) = \vP[\mathbf{H} - \eta\nabla f_\lambda(\mathbf{H})]$, which is equivalent to $\mathrm{Proj}_\setX(\mathbf{H} - \eta\nabla f_\lambda(\mathbf{H}))$, and using the fact that the projection onto a closed convex set $\setC$ is characterized by $\langle \Pi_{\setC}(\vz) - \vz,\, \Pi_{\setC}(\vz) - \vy\rangle \leq 0$ for all $\vy\in\setC$
\citep{bauschke2020correction}, $\mathbf{H}^* = \Phi(\mathbf{H}^*)$ holds if and only if
\begin{equation}
    \big\langle \mathbf{H}^* - \eta\nabla f_\lambda(\mathbf{H}^*) - \mathbf{H}^*,\ \mathbf{Y}-\mathbf{H}^*\big\rangle \leq 0
    \quad \text{for all } \mathbf{Y}\in\setX .
\end{equation}
This is equivalent to $\langle\nabla f_\lambda(\mathbf{H}^*), \mathbf{Y}-\mathbf{H}^*\rangle \geq 0$ for all $\mathbf{Y}\in\setX$, using $\eta>0$. This is precisely the first-order optimality condition for $\min_{\mathbf{H}\in\setX} f_\lambda(\mathbf{H})$, which is necessary and sufficient because $f_\lambda$ is convex and $\setX$ is convex. Since $f_\lambda$ is strictly convex the minimizer is unique, so $\mathbf{H}^* = \arg\min_{\mathbf{H}\in\setX}f_\lambda(\mathbf{H})$.

\end{proof}

\section{Per-step versus End-only Projection}
\label{app:th:FairProp_vs_post}

\begin{theorem}\label{th:FairProp_vs_post}
Let $\mathbf{Y}^*$ denote the fixed point of the APPNP propagation, and let $\setX$ be the closed
convex set defined by the inequality and equality constraints $\setX = \{\vX\in\mathbb{R}^{n\times d} \mid
w_i(\vX)\leq 0,\ i=1,\ldots,p;\ u_j(\vX)=0,\ j=1,\ldots,q\}$. Then the solution obtained by
projecting once at the end is characterized by the KKT conditions as
\begin{equation}\label{eq:PPS}
\vX^* = \mathbf{Y}^* - \tfrac{1}{2} z(\vX^*),
\end{equation}
whereas the solution of FairProp satisfies
\begin{equation}\label{eq:FFS}
\vX^* = \mathbf{Y}^* - [\vI-(1-\gamma)\vS + 2\lambda\vM ]^{-1}\, z(\vX^*),
\end{equation}
where $z(\vX) = \sum_{i=1}^p \nu_i^* \nabla w_i(\vX) + \sum_{j=1}^q \boldsymbol{\mu}_j^* \nabla u_j(\vX)$, with
$\nu_i^*$ and $\boldsymbol{\mu}_j^*$ the optimal dual variables associated with the inequality and equality
constraints, respectively.
\end{theorem}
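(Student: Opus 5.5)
The plan is to write down the KKT system of each of the two optimization problems and solve the stationarity condition for $\vX^*$. Throughout, $\mathbf{Y}^*$ is the unconstrained fixed point. For the variance-regularized objective it satisfies $\nabla f_\lambda(\mathbf{Y}^*)=0$, that is
\[
[\gamma\vI+(1-\gamma)\vL_{\vS}+2\lambda\vM]\mathbf{Y}^* = \gamma\vH^{(0)},
\]
which reduces to the APPNP fixed point when $\lambda=0$. To match the normalization in \eqref{eq:FFS}, I will divide $f_\lambda$ by $\gamma$, or equivalently rescale the multipliers. The Hessian is then proportional to $\vI-(1-\gamma)\vS+2\lambda\vM$ up to the $\gamma$ factors. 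I will state explicitly that multipliers are defined up to this positive rescaling, which leaves the KKT characterization unchanged.

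\textbf{End-only projection.} This problem is $\min_{\vX\in\setX}\|\vX-\mathbf{Y}^*\|_F^2$. Its Lagrangian is
\[
\|\vX-\mathbf{Y}^*\|_F^2+\sum_i\nu_i w_i(\vX)+\sum_j\langle\boldsymbol{\mu}_j,u_j(\vX)\rangle .
\]
Stationarity gives $2(\vX^*-\mathbf{Y}^*)+z(\vX^*)=0$, hence $\vX^*=\mathbf{Y}^*-\tfrac12 z(\vX^*)$, which is \eqref{eq:PPS}. The KKT conditions are necessary and sufficient here because the objective is strongly convex and $\setX$ is convex. I will assume a constraint qualification holds, for instance Slater's condition or affine constraints; this is automatic for $\setX_0$ and for the slabs, and it guarantees that the dual multipliers exist.

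\textbf{FairProp.} By Proposition~\ref{prop:fair_diff_conv}, the FairProp fixed point is the unique minimizer of $f_\lambda$ over $\setX$. Its Lagrangian is
\[
f_\lambda(\vX)+\sum_i\nu_i w_i(\vX)+\sum_j\langle\boldsymbol{\mu}_j,u_j(\vX)\rangle ,
\]
and stationarity reads $\vQ_\lambda\vX^*-\gamma\vH^{(0)}+z(\vX^*)=0$. Substituting $\gamma\vH^{(0)}=\vQ_\lambda\mathbf{Y}^*$ gives $\vQ_\lambda(\vX^*-\mathbf{Y}^*)=-z(\vX^*)$. Since $\vQ_\lambda\succeq\gamma\vI$ is invertible, this yields $\vX^*=\mathbf{Y}^*-\vQ_\lambda^{-1}z(\vX^*)$. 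Writing $\vQ_\lambda$ with $\vL_{\vS}=\vI-\vS$ and absorbing the scalar normalization produces the inverse $[\vI-(1-\gamma)\vS+2\lambda\vM]^{-1}$ of \eqref{eq:FFS}. This inverse exists and admits the Neumann series quoted in Section~\ref{sec:FairProp_discussion} whenever $\|(1-\gamma)\vS-2\lambda\vM\|_2<1$. That condition holds under \eqref{eq:step_condition}, since $\|\vS\|_2\le 1$ and $\|\vM\|_2=1/n_{\min}$.

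\textbf{Main obstacle.} The main difficulty is bookkeeping rather than depth. First, the constants $\gamma$ versus $1$ and the factor $\tfrac12$ must be reconciled with how $f_\lambda$ and the end-only distance are normalized. Second, the two displays must be understood as involving the respective optimal multipliers of each problem, which generally differ even though they are written with the same symbol $z$. I would handle this by fixing the scaling of each Lagrangian so the stated forms come out exactly, and by remarking that the multipliers are problem-specific.
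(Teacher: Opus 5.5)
Your proposal is correct and is essentially the paper's own argument: write stationarity for the two Lagrangians, substitute $\gamma\vH^{(0)}=\vQ_\lambda\mathbf{Y}^*$, and invert $\vQ_\lambda\succeq\gamma\vI$; your remarks on constraint qualification and on the two problems carrying different multipliers add precision the paper omits. The ``main obstacle'' you anticipate does not exist, because $\gamma\vI+(1-\gamma)(\vI-\vS)=\vI-(1-\gamma)\vS$ makes $\vQ_\lambda$ literally equal to $\vI-(1-\gamma)\vS+2\lambda\vM$, so you should not rescale $f_\lambda$ or its multipliers; separately, your Neumann-series aside is guaranteed only under the $\eta=1$ condition $\lambda<\gamma n_{\min}/2$, not under the general-$\eta$ step condition, which for small $\eta$ admits values of $\lambda$ for which the series diverges.
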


\label{app:proof:th:FairProp_vs_post}

\begin{proof}
    The proof is strongly based on the proof of Theorem 5.1 by \cite{kariotakis2026fairrari}
    It comprises two parts. First, we show how just projection once after all the propagation steps produces results that are agnostic to the graph structure, and then show that for FairProp propagation is not. We make use of the fact that the constraint set $\setX$ is convex.
    
    \noindent $\bullet$ \textbf{Propagation + Projection:} Given that $\mathbf{Y}^* = \gamma [ \vI-(1-\gamma)\mathbf{S} + 2\lambda \vM]^{-1}\vH$ is the fixed point of the regularized PPNP, and since $\setX$ is convex, the optimization problem that this method is solving, comprises the following inequality and equality constraints:
    \begin{equation}
    \begin{aligned}
    \min_{\vX} \quad & \lVert \vX-\mathbf{Y}^* \rVert^2_2 \\
    \textrm{s.t.} \quad & w_i(\vX)\leq0, \ i=1,\ldots,p \\
      & u_j(\vX)=0, \ j=1,\ldots,q \\
    \end{aligned}
    \end{equation}

    \noindent
    From the stationarity KKT condition we get that
    \begin{align}
        \nabla \mathcal{L}(\vX^*,\boldsymbol{\nu},\boldsymbol{\mu}) = 0 \Leftrightarrow &\ \nabla \{\lVert \vX^*-\mathbf{Y}^* \rVert^2_2\} + \sum_{i=1}^p \nu_i \nabla w_i(\vX^*) + \sum_{j=1}^q \boldsymbol{\mu}_j \nabla u_j(\vx^*) = 0 \\
        &\ 2(\vX^*-\mathbf{Y}^*) + \sum_{i=1}^p \nu_i \nabla w_i(\vX^*) + \sum_{j=1}^q \boldsymbol{\mu}_j \nabla u_j(\vX^*) = 0 \\
        \Leftrightarrow &\ \vX^* = \mathbf{Y}^* - \frac{1}{2}\left( \sum_{i=1}^p \nu_i \nabla w_i(\vX^*) + \sum_{j=1}^q \boldsymbol{\mu}_j \nabla u_j(\vX^*)\right).
    \end{align}
    As shown in the results above, applying a single projection after all propagation steps yields an outcome that is agnostic to the graph structure.

    \noindent $\bullet$ \textbf{FairProp:} 
    We have shown that FairProp is solving the following constrained, strongly convex QP
    \begin{equation}
        \min \left\{ f_{\lambda}(\vX) \; \textrm{s.t.} \; \vX \in \setX
        \right\},
    \end{equation}
    where
    \begin{equation}
        f_{\lambda}(\vX) := \frac{1}{2} \gamma \lVert \vX-\vH \rVert^2 + \frac{1}{2}(1-\gamma)\mathrm{tr}(\vX^\top\mathbf{L}_{\mathbf{S}}\vX)  + \lambda\,\mathrm{tr}\big(\mathbf{X}^\top \vM \mathbf{X}\big),
    \end{equation}
    with
    \begin{equation}
        \nabla f_{\lambda}(\vX) = [\vI - (1-\gamma)\vS + 2\lambda\vM]\vX- \gamma \mathbf{H}.
    \end{equation}
    Since $\setX$ is convex, the optimization problem comprises of the following inequality and equality constraints:
    \begin{equation}
    \begin{aligned}
    \min_{\vX} \quad & f_{\lambda}(\vX) \\
    \textrm{s.t.} \quad & w_i(\vX)\leq0, \ i=1,\ldots,p \\
      & u_j(\vX)=0, \ j=1,\ldots,q \\
    \end{aligned}
    \end{equation}

    \noindent
    From the stationarity KKT condition we get that
    \begin{align}
        \nabla \mathcal{L}(\vX^*,\boldsymbol{\nu},\boldsymbol{\mu}) = 0 \Leftrightarrow  \nabla f(\vX^*) + \sum_{i=1}^p \nu_i \nabla w_i(\vX^*) + \sum_{j=1}^q \boldsymbol{\mu}_j \nabla u_j(\vX^*) = 0 \\
        \Leftrightarrow  [\vI - (1-\gamma)\vS + 2\lambda\vM]\vX^*- \gamma \mathbf{H} + \sum_{i=1}^p \nu_i \nabla w_i(\vX^*) + \sum_{j=1}^q \boldsymbol{\mu}_j \nabla u_j(\vX^*) = 0 \\
        \Leftrightarrow  \vX^* = \mathbf{Y}^* - [\vI - (1-\gamma)\vS + 2\lambda\vM]^{-1}  \!\left( \sum_{i=1}^p \nu_i \nabla w_i(\vX^*) + \sum_{j=1}^q \boldsymbol{\mu}_j \nabla u_j(\vX^*)\right)\! \\
    \end{align}
    where $\mathbf{Y}^*=\gamma [\vI - (1-\gamma)\vS + 2\lambda\vM]^{-1} \mathbf{H}$.\\
    Re-expressing $[\vI - (1-\gamma)\vS + 2\lambda\vM]^{-1} = \sum_{j=0}^{\infty}[(1-\gamma)\mathbf{S}-2\lambda\vM]^j$ shows that FairProp corrects $\mathbf{Y}^*$ via a graph diffusion term in order to arrive at the solution, hence it is clear that it makes use of the graph structure. For $\lambda=0$, i.e., only mean-equalization, we get
    $[\vI - (1-\gamma)\vS]^{-1} = \sum_{j=0}^{\infty}(1-\gamma)^j\mathbf{S}^j$.
\end{proof}

\subsection{One Dimensional Embeddings and Mean-Equalization} \label{app:prop:proj_percapita}
\subsubsection{End-Only Projection}
\begin{proposition}\label{prop:proj_percapita}
Let $\setS_0,\setS_1$ be the sensitive groups of nodes, with $n_k=|\setS_k|>0$ and $n_0+n_1=n$, and let the number of features be $d=1$. Define the following fairness constraint set
\begin{equation}\label{app:eq:def_a}
    \setX_0 := \left\{\vx\in\mathbb{R}^{n} \ : \ \va^\top\vx = 0 \right\},
    \qquad
    \va := \frac{\ve_{\setS_0}}{n_0} - \frac{\ve_{\setS_1}}{n_1} \in \mathbb{R}^{n}.
\end{equation}
Let $\vy\in\mathbb{R}^{n}$ be the given embedding vector and let $\delta := \va^\top\vy = \bar{\vy}_0 - \bar{\vy}_1$, with $\bar{\vy}_k := \frac{1}{n_k}\sum_{i\in\setS_k} \vy(i)$,
be the difference of the two group means. Then the Euclidean projection of $\vy$ onto $\setX_0$ is unique and given entry-wise by
\begin{equation}\label{app:eq:proj_closed_form}
    \vx^*(i) = \left\{
    \begin{array}{ll}
        \vy(i) - \dfrac{n_1}{n}\,\delta, & i\in\setS_0 \\[2ex]
        \vy(i) + \dfrac{n_0}{n}\,\delta, & i\in\setS_1 ,
    \end{array}
    \right.
\end{equation}
equivalently $\vx^* = \big(\vI - \tfrac{\va\va^\top}{\lVert\va\rVert_2^2}\big)\vy$.

\end{proposition}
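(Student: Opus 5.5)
We treat this as projection onto a hyperplane through the origin. Since $\setX_0=\{\vx:\va^\top\vx=0\}$ is a nonempty closed affine subspace (indeed a linear subspace of codimension one, as $\va\neq\mathbf{0}$ because $n_0,n_1>0$), the Euclidean projection onto it exists and is unique by strict convexity of $\lVert\vx-\vy\rVert_2^2$. We will solve the projection problem through the Lagrangian, exactly as in Appendix~\ref{app:proof:projection} but now with a single constraint.

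\textbf{Step 1 (stationarity).} Form $\mathcal{L}(\vx,\lambda)=\tfrac12\lVert\vx-\vy\rVert_2^2+\lambda\,\va^\top\vx$. Setting the gradient to zero gives $\vx^*=\vy-\lambda\va$. Imposing primal feasibility $\va^\top\vx^*=0$ gives $\lambda=\va^\top\vy/\lVert\va\rVert_2^2$. Hence
\[
\vx^*=\Big(\vI-\tfrac{\va\va^\top}{\lVert\va\rVert_2^2}\Big)\vy .
\]
This is the stated matrix form. Since the problem is convex with affine constraints, KKT is necessary and sufficient.

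\textbf{Step 2 (entrywise evaluation).} We compute the two scalars involved. First, $\va^\top\vy=\bar{\vy}_0-\bar{\vy}_1=\delta$ by definition. Second,
\[
\lVert\va\rVert_2^2=n_0\cdot\tfrac{1}{n_0^2}+n_1\cdot\tfrac{1}{n_1^2}=\tfrac{1}{n_0}+\tfrac{1}{n_1}=\tfrac{n}{n_0n_1}.
\]
Thus $\lambda=\delta\,n_0n_1/n$. For $i\in\setS_0$ we have $\va(i)=1/n_0$, so $\vx^*(i)=\vy(i)-\tfrac{n_1}{n}\delta$. For $i\in\setS_1$ we have $\va(i)=-1/n_1$, so $\vx^*(i)=\vy(i)+\tfrac{n_0}{n}\delta$. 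This is \eqref{app:eq:proj_closed_form}.

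As a consistency check, we will verify that the result agrees with the general $K$-group formula $\vx^*_v=\vy_v+\bar{\vy}-\bar{\vy}_{\setS_k}$ derived in Appendix~\ref{app:proof:projection}. With $\bar{\vy}=(n_0\bar{\vy}_0+n_1\bar{\vy}_1)/n$, we get $\bar{\vy}-\bar{\vy}_0=-\tfrac{n_1}{n}\delta$ and $\bar{\vy}-\bar{\vy}_1=\tfrac{n_0}{n}\delta$, which match.

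There is no genuine obstacle here. The only step requiring care is the computation of $\lVert\va\rVert_2^2$ and the resulting cancellation $\tfrac{n_0n_1}{n}\cdot\tfrac1{n_0}=\tfrac{n_1}{n}$. Getting these sign conventions right is the main point to double-check.
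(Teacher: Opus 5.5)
Your proposal is correct and follows essentially the same route as the paper. Both use uniqueness from strict convexity over the hyperplane $\setX_0$, stationarity of the Lagrangian giving $\vx^*=\vy-\lambda\va$, feasibility with $\lVert\va\rVert_2^2=n/(n_0n_1)$ giving $\lambda^*=\tfrac{n_0n_1}{n}\delta$, and entrywise substitution; the only differences are cosmetic (your $\tfrac12$ scaling of the objective and the extra consistency check against $\vx^*_v=\vy_v+\bar{\vy}-\bar{\vy}_{\setS_k}$).
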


\begin{proof}
$\setX_0$ is a hyperplane through the origin and $\lVert\vx-\vy\rVert_2^2$ is strictly convex, so the problem
\begin{equation}\label{eq:pp_problem}
    \min_{\vx\in\setX_0} \quad \lVert \vx-\vy \rVert_2^2
\end{equation}
has a unique solution, characterized by stationarity of the Lagrangian $\setL(\vx,\lambda) = \lVert\vx-\vy\rVert_2^2 + 2\lambda\,\va^\top\vx$:
\begin{equation}\label{eq:stationarity}
    \nabla_{\vx} \setL = 2(\vx - \vy) + 2\lambda\va = \vzero
    \quad\Longrightarrow\quad
    \vx = \vy - \lambda\va .
\end{equation}
Imposing feasibility gives $\va^\top\vy - \lambda\lVert\va\rVert_2^2 = 0$, and since $\ve_{\setS_0}$ and $\ve_{\setS_1}$ have disjoint supports,
\begin{equation}\label{eq:norm_a}
    \lVert\va\rVert_2^2 = \frac{1}{n_0}+\frac{1}{n_1} = \frac{n}{n_0 n_1},
    \qquad\text{hence}\qquad
    \lambda^* = \frac{\delta}{\lVert\va\rVert_2^2} = \frac{n_0 n_1}{n}\,\delta .
\end{equation}
Substituting $\lambda^*$ into \eqref{eq:stationarity} and using $\va(i)=1/n_0$ for $i\in\setS_0$ and $\va(i)=-1/n_1$ for $i\in\setS_1$ yields \eqref{app:eq:proj_closed_form}. 
\end{proof}

\begin{remark} \label{app:rem:proj_once_unfair}
The correction in \eqref{app:eq:proj_closed_form} is constant within each group. The groups are shifted in opposite directions by amounts inversely proportional to their sizes, with the two
displacements standing in ratio $n_1\!:\!n_0$. The minority is thus moved further, and in the limit of a small minority the majority is left essentially untouched, while the minority absorbs the entire mean gap. 
The constraint equalizes the group means but not the price paid for doing so. Utility is sacrificed almost entirely on the minority side, so a criterion imposed for fairness reasons is itself satisfied unfairly.
\end{remark}


\subsubsection{Per-Step Projection of FairProp}

\begin{proposition}\label{app:prop:FairProp_2groups}
Let $\setS_0,\setS_1$ be the sensitive groups of nodes, with $n_k=|\setS_k|>0$ and $n_0+n_1=n$, and let the number of features be $d$. Define the following fairness constraint set
\begin{equation}\label{app:eq:def_a:X}
    \setX_0 := \left\{\vX\in\mathbb{R}^{n} \ : \ \va^\top\vX = 0 \right\},
    \qquad
    \va := \frac{\ve_{\setS_0}}{n_0} - \frac{\ve_{\setS_1}}{n_1} \in \mathbb{R}^{n}.
\end{equation}
Let $\mathbf{Y}^*$ denote the fixed point of the APPNP propagation,
then the solution of FairProp (for $\lambda=0$) satisfies:
\begin{equation}
    \vX^* = \mathbf{Y}^* - \frac{\vQ^{-1}\va\va^\top\mathbf{Y}^*}{\va^\top\vQ^{-1}\va} = \left[\vI - \frac{\vQ^{-1}\va\va^\top}{\va^\top\vQ^{-1}\va}\right]\mathbf{Y}^*,
\end{equation}
with $\vQ = [\vI - (1-\gamma)\vS]$.
\end{proposition}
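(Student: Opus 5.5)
Since $\lambda=0$, FairProp's fixed point is, by Proposition~\ref{prop:fair_diff_conv}, the unique minimizer of the equality-constrained strongly convex quadratic
\[
\min_{\vX}\ f(\vX)=\tfrac12\gamma\lVert\vX-\vH^{(0)}\rVert_F^2+\tfrac12(1-\gamma)\tr(\vX^\top\vL_{\vS}\vX)\quad\text{s.t.}\quad \va^\top\vX=\vzero^\top .
\]
So I will solve this problem directly through its KKT system, as in the proof of Theorem~\ref{th:FairProp_vs_post}, and then eliminate the multiplier in closed form.

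First, note that $\nabla f(\vX)=\vQ\vX-\gamma\vH^{(0)}$ with $\vQ=\vI-(1-\gamma)\vS=\gamma\vI+(1-\gamma)\vL_{\vS}\succeq\gamma\vI\succ 0$. Hence $\vQ$ is invertible, and $\mathbf{Y}^*=\gamma\vQ^{-1}\vH^{(0)}$ is exactly the APPNP fixed point. The constraint is linear, so the KKT conditions are necessary and sufficient. The multiplier is a row vector $\boldsymbol{\nu}^\top\in\mathbb{R}^{1\times d}$, one entry per feature column. The constraint decouples over columns, so one can equally argue column by column.

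Stationarity reads $\vQ\vX^*-\gamma\vH^{(0)}+\va\boldsymbol{\nu}^\top=\vzero$, which gives
\[
\vX^*=\mathbf{Y}^*-\vQ^{-1}\va\boldsymbol{\nu}^\top .
\]
This is the specialization of Theorem~\ref{th:FairProp_vs_post} with $z(\vX^*)=\va\boldsymbol{\nu}^\top$. Substituting into primal feasibility $\va^\top\vX^*=\vzero^\top$ gives $\va^\top\mathbf{Y}^*=(\va^\top\vQ^{-1}\va)\boldsymbol{\nu}^\top$. Since $\vQ^{-1}\succ0$ and $\va\neq\vzero$ (both groups are nonempty), the scalar $\va^\top\vQ^{-1}\va$ is strictly positive. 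We can therefore solve
\[
\boldsymbol{\nu}^\top=\frac{\va^\top\mathbf{Y}^*}{\va^\top\vQ^{-1}\va}.
\]
Plugging this back yields the claimed expression, and factoring $\mathbf{Y}^*$ on the right gives the operator form $\bigl[\vI-\vQ^{-1}\va\va^\top/(\va^\top\vQ^{-1}\va)\bigr]\mathbf{Y}^*$.

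There is no real obstacle here. The only points needing care are the positivity of $\va^\top\vQ^{-1}\va$, which follows from $\vQ\succeq\gamma\vI$ with $\gamma>0$, and checking that the multiplier is consistent across all $d$ columns simultaneously; the latter is handled by treating $\boldsymbol{\nu}$ as a row vector. As a sanity check, setting $\vQ=\vI$ recovers the end-only projection of Proposition~\ref{prop:proj_percapita}. The resulting map is an oblique projection in the $\vQ$-geometry rather than an orthogonal one.
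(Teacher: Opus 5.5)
Your proposal is correct and follows the paper's proof essentially step for step. Both use KKT stationarity $\vQ\vX^*=\gamma\vH^{(0)}-\va\boldsymbol{\nu}^\top$, substitute into $\va^\top\vX^*=\vzero^\top$ to eliminate the multiplier, and rewrite via $\mathbf{Y}^*=\gamma\vQ^{-1}\vH^{(0)}$; your explicit check that $\va^\top\vQ^{-1}\va>0$ supplies a detail the paper leaves implicit.

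One small correction to your closing remark. Since $\vQ\bigl[\vI-\vQ^{-1}\va\va^\top/(\va^\top\vQ^{-1}\va)\bigr]$ is symmetric and the map is idempotent, it is the orthogonal projection in the $\vQ$-inner product. It is oblique only with respect to the Euclidean inner product, not in the $\vQ$-geometry as you wrote.
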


\begin{proof}
We have shown that FairProp is solving the following constrained, strongly convex QP
\begin{equation}
    \min \left\{ f(\vX) \; \textrm{s.t.} \; \vX \in \setX_0
    \right\},
\end{equation}
where
\begin{equation}
    f(\vX) := \frac{1}{2} \gamma \lVert \vX-\vH \rVert^2 + \frac{1}{2}(1-\gamma)\mathrm{tr}(\vX^\top\mathbf{L}_{\mathbf{S}}\vX),
\end{equation}
with
\begin{equation}
    \nabla f(\vX) = \vQ\vX- \gamma \mathbf{H}, \qquad \vQ = [\vI - (1-\gamma)\vS] \succ 0.
\end{equation}
Since $\setX_0$ is convex, we have the Lagrangian
\begin{equation}
    \mathcal{L}(\vX,\boldsymbol{\mu}) = f(\vX) + \mathrm{tr}(\vX^\top \va \boldsymbol{\mu}^\top)
\end{equation}
the following KKT conditions:\\
\textbullet\ (Stationarity): $\nabla_{\vX} \mathcal{L} = \nabla_{\vX}f(\vX) + \va \boldsymbol{\mu}^\top =0 \Leftrightarrow \vQ\vX=\gamma\vH-\va\boldsymbol{\mu}^\top$\\
\textbullet\ (Primal Feasibility): $\va^\top\vX = 0$\\
Since $\vQ\succ 0$,
\begin{equation}
    \vX = \vQ^{-1} [\gamma\vH-\va\boldsymbol{\mu}^\top].
\end{equation}
Substituting back to $\va^\top\vX = 0$, we get:
\begin{equation}
    \gamma\va^\top \vQ^{-1} \vH - \va^\top \vQ^{-1}\va \boldsymbol{\mu}^\top = 0 \Rightarrow \boldsymbol{\mu}^\top = \gamma \frac{\va^\top \vQ^{-1} \vH}{\va^\top \vQ^{-1}\va}.
\end{equation}
Then, we have the following closed form for the solution $\vX^*$:
\begin{equation}
    \vX^* = \gamma \vQ^{-1}\vH - \gamma \vQ^{-1} \va \frac{\va^\top \vQ^{-1} \vH}{\va^\top \vQ^{-1} \va}.
\end{equation}
The solution of the unconstrained problem of APPNP $\min_{\vX} \left\{ f(\vX) \right\},$ is $\mathbf{Y}^* = \gamma \vQ^{-1}\vH$. Hence, we can express $\vX^*$ it in terms of the unconstrained solution, as
\begin{equation}
    \vX^* = \mathbf{Y}^* - \frac{\vQ^{-1}\va\va^\top\mathbf{Y}^*}{\va^\top\vQ^{-1}\va} = \left[\vI - \frac{\vQ^{-1}\va\va^\top}{\va^\top\vQ^{-1}\va}\right]\mathbf{Y}^*.
\end{equation}
\end{proof}

\begin{corollary}\label{app:cor:FairProp_2groups}
Let $\setS_0,\setS_1$ be the sensitive groups of nodes, with $n_k=|\setS_k|>0$ and $n_0+n_1=n$, and let the number of features be $d=1$. Define the following fairness constraint set
\begin{equation}
    \setX_0 := \left\{\vx\in\mathbb{R}^{n} \ : \ \va^\top\vx = 0 \right\},
    \qquad
    \va := \frac{\ve_{\setS_0}}{n_0} - \frac{\ve_{\setS_1}}{n_1} \in \mathbb{R}^{n}.
\end{equation}
Let $\mathbf{y}^*$ denote the fixed point of the APPNP propagation, and let $\delta := \va^\top\vy^* = \bar{\vy}_0 - \bar{\vy}_1$, with $\bar{\vy}_k := \frac{1}{n_k}\sum_{i\in\setS_k} \vy^*(i)$, be the difference of the two group means.
Then the solution of FairProp satisfies:
\begin{equation}
    \vx^* = \mathbf{y}^* - \frac{\vQ^{-1}\va}{\va^\top\vQ^{-1}\va} \delta.
\end{equation}
\end{corollary}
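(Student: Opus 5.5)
This corollary is the $d=1$ specialization of Proposition~\ref{app:prop:FairProp_2groups}, so I will read it off that result. For $\lambda=0$, Proposition~\ref{app:prop:FairProp_2groups} gives the FairProp solution as
\[
\vX^* = \mathbf{Y}^* - \frac{\vQ^{-1}\va\va^\top\mathbf{Y}^*}{\va^\top\vQ^{-1}\va},
\qquad \vQ = \vI-(1-\gamma)\vS .
\]
Here $\mathbf{Y}^* = \gamma\vQ^{-1}\vH$ is the unconstrained APPNP fixed point. When $d=1$, the matrices $\vX^*$, $\mathbf{Y}^*$ and $\vH$ become vectors $\vx^*,\vy^*,\vh\in\mathbb{R}^n$. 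The multiplier $\boldsymbol{\mu}$ in that proof likewise becomes a scalar.

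The first step is to observe that $\va^\top\vy^*$ is now a scalar. This lets it commute past the vector $\vQ^{-1}\va$, so
\[
\vQ^{-1}\va\va^\top\vy^* = (\va^\top\vy^*)\,\vQ^{-1}\va .
\]

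The second step is to identify that scalar with the mean gap. By the definition of $\va$ and the disjointness of the supports of $\ve_{\setS_0}$ and $\ve_{\setS_1}$,
\[
\va^\top\vy^* = \frac{1}{n_0}\sum_{i\in\setS_0}\vy^*(i) - \frac{1}{n_1}\sum_{i\in\setS_1}\vy^*(i) = \bar{\vy}_0-\bar{\vy}_1 = \delta .
\]
Substituting this into the display above yields the claimed formula $\vx^* = \vy^* - \frac{\vQ^{-1}\va}{\va^\top\vQ^{-1}\va}\,\delta$.

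The remaining care point is that the denominator $\va^\top\vQ^{-1}\va$ is well defined and nonzero. I would argue this from $\vQ\succ 0$. Indeed, $\vQ = \gamma\vI+(1-\gamma)\vL_{\vS}\succeq\gamma\vI$ with $\gamma>0$, so $\vQ^{-1}\succ 0$. Since $n_0,n_1>0$, we have $\va\neq\vzero$, and therefore $\va^\top\vQ^{-1}\va>0$.

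As an optional sanity check, I would also verify that the formula is feasible. Applying $\va^\top$ to the right-hand side gives $\delta-\delta=0$, so $\vx^*\in\setX_0$. Uniqueness then follows from strong convexity, as established in Proposition~\ref{prop:fair_diff_conv}.

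I expect no real obstacle, since this is a direct substitution into Proposition~\ref{app:prop:FairProp_2groups}.
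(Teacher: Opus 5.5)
Your proposal is correct and follows the paper's route: the paper gives no separate proof, since the corollary is just the $d=1$ case of Proposition~\ref{app:prop:FairProp_2groups}, where the scalar $\va^\top\vy^*=\delta$ factors out of $\vQ^{-1}\va\va^\top\vy^*$. Your added checks, that $\va^\top\vQ^{-1}\va>0$ follows from $\vQ\succeq\gamma\vI$ and that the formula is feasible, are sound details the paper leaves unstated.
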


In Figure~\ref{fig:projection_vs_FairProp_illustration} we can see an illustration of the Projection versus FairProp (proj) on a simulated two-group graph ($n_0=40$, $n_1=240$, $d=1$, $\gamma=0.1$, no variance regularization). The horizontal axis is the single feature and the vertical axis is a display-only jitter, identical in all panels.
\textbf{Left:} the APPNP fixed point $\vy^*$ has a mean gap $\delta=\va^\top\vy^*$ between the minority and majority groups.
\textbf{Middle:} the Euclidean projection onto $\{\vx:\va^\top\vx=0\}$ shifts every node in a group by the same amount ($\tfrac{n_1}{n}\delta$ and $\tfrac{n_0}{n}\delta$). The group means coincide, but each group keeps its spread.
\textbf{Right:} FairProp shifts node $i$ proportionally to $(\vQ^{-1}\va)_i$, so nodes with many cross-group neighbours move less and nodes deep inside their group move more. This cancels the uneven propagation of the input mean gap through the graph, so the group means coincide and the within-group spread happens to shrink as well.
Red brackets show the minority spread (mean $\pm 2$ std).

\begin{figure}[h!]
    \centering
    \includegraphics[width=\linewidth]{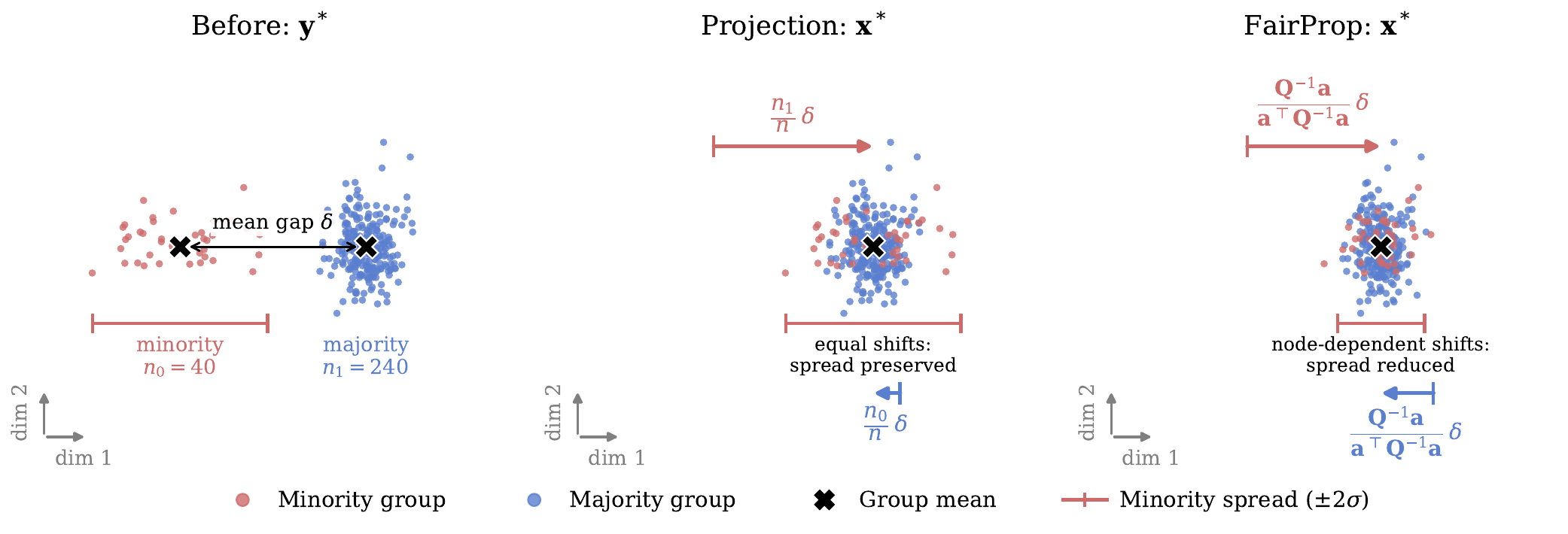}
    \caption{Left: APPNP output $\vy^*$ with mean gap $\delta$. Middle: the Euclidean projection shifts each group uniformly, equalizing the means but preserving within-group spread. Right: FairProp shifts node $i$ proportionally to $(\vQ^{-1}\va)_i$, equalizing the means while also reducing within-group spread. Brackets: minority mean $\pm 2$ std ($\gamma=0.1$, $\lambda=0$).}
    \label{fig:projection_vs_FairProp_illustration}
\end{figure}


In Figure~\ref{fig:projection_vs_FairProp_graph} we can see the graph on which the previous experiments were run on. The minority contains bridge nodes (outlined), which have many edges to the majority, and core nodes, which have almost none; this makes $(\vQ^{-1}\va)_i$ vary within the minority and hence makes FairProp's shifts node-dependent.
\begin{figure}[h!]
    \centering
    \includegraphics[width=0.6\linewidth]{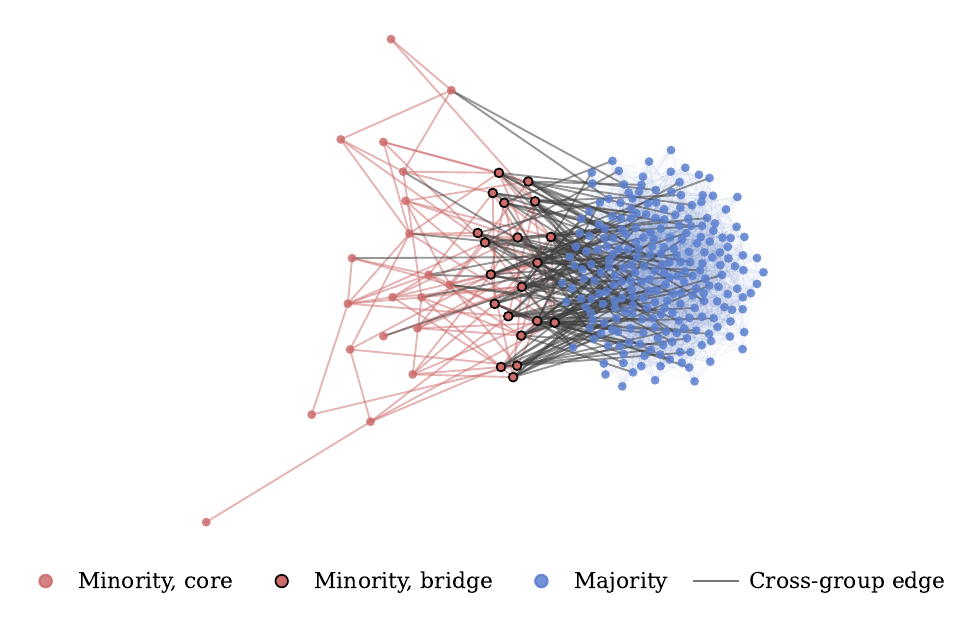}
    \caption{The simulated graph. Outlined minority nodes have edges to the majority; the remaining minority nodes have almost none.}
    \label{fig:projection_vs_FairProp_graph}
\end{figure}


\section{Experiments} \label{app:experiments}
\subsection{Experimental Setup}
\label{app:experimental_setup}
All experiments were performed on a single machine, with an Intel i$7$-$13700$K CPU @ $3.4$GHz and $128$GB of main memory, and an NVIDIA GeForce RTX $3070$ GPU ($8$GB VRAM), running Python $3.10.19$ and PyTorch $2.6.0$.

\textbf{FairProp Details:}\\
\textit{MLP Architecture.}
All tasks, i.e., node classification (NC), link prediction (LP), node regression (NR), share a two-layer MLP from the original APPNP
architecture~\citep{gasteiger2018predict},
\[
  \text{Dropout}(x) \;\to\;
  \text{Linear}(d_\text{in}, 64) \;\to\;
  \text{ReLU} \;\to\;
  \text{Dropout} \;\to\;
  \text{Linear}(64, d_\text{out}),
\]
where $d_\text{out}$ equals the number of classes for NC, 64 for LP, and 1 for NR, and dropout probability is $p=0.5$ throughout. After the MLP, APPNP-style propagation runs for $T=10$ steps.

\textit{Teleportation $\gamma$.}
We fix $\gamma = 0.1$ across all runs, unless stated otherwise.

\textit{Optimizer and Training.}
We train all models with the Adam optimizer for 200 epochs at a learning rate of $10^{-2}$. For NC and NR we apply weight decay $5 \times 10^{-4}$; for LP we use no weight decay. All results are averaged over 20 independent runs with different random seeds.

\textit{Loss Functions.} For NC, we use negative log-likelihood on the training nodes. For LP, binary cross-entropy with logits and negative sampling, and for NR, the mean-squared error on the training nodes.


\subsection{Datasets} \label{app:datasets}
For node classification, we use Credit \citep{yeh2009comparisons}, a consumer lending network where nodes are clients connected by payment-behaviour similarity, with age as the sensitive attribute and payment default as the prediction target; Recidivism \citep{jordan2015effect,agarwal2021towards}, where nodes are criminal defendants linked by similar criminal profiles, with race as the sensitive attribute and re-offending as the target; and Pokec-z and Pokec-n \citep{dai2021say}, two regional subgraphs of the Slovak social network Pokec (Zilina and Nitra regions), where nodes are users, edges are friendships, and gender is the sensitive attribute. For link prediction, we use the citation networks Cora, Citeseer, and Pubmed \citep{yang2016revisiting}, where nodes are academic papers and edges are citations; we adopt label parity (topic class mod 2) as a proxy sensitive attribute following standard LP fairness convention \citep{spinelli2021fairdrop}. For node regression, we use GermanCredit \citep{asuncion2007uci}, a loan applicant network where nodes are connected by feature similarity and the target is credit score; and again Pokec-z and Pokec-n, where the regression target is profile completion percentage, with gender as the sensitive attribute. 
For the dataset statistics please refer to Table~\ref{app:tab:dataset_statistics}.

\begin{table}[htbp]
\centering
\caption{Datasets used in the experiments. Edges are counted without multiplicity (undirected). For node regression (NR) the \emph{Target} column lists the predicted continuous quantity; for link prediction (LP) node classes are not used as a task label but are listed for reference.}
\label{app:tab:dataset_statistics}
\setlength{\tabcolsep}{5pt}
\small
\begin{tabular}{llrrrllr}
\toprule
\textbf{Dataset} & \textbf{Task} & \textbf{Nodes} & \textbf{Edges}
  & \textbf{Features} & \textbf{Sensitive attr.}
  & \textbf{Target / Classes} \\
\midrule
\multicolumn{7}{l}{\textit{Node Classification (NC)}} \\[2pt]
Credit          & NC & 30{,}000 & 150{,}000 &  22 & Age ($> 25$)    & 2 classes \\
Recidivism      & NC &  6{,}172 &  30{,}860 &   7 & Race            & 2 classes \\
Pokec-z         & NC & 67{,}796 & 617{,}958 & 276 & Gender          & 2 classes \\
Pokec-n         & NC & 66{,}569 & 517{,}047 & 265 & Gender          & 2 classes \\
\midrule
\multicolumn{7}{l}{\textit{Link Prediction (LP)}} \\[2pt]
Cora            & LP &  2{,}708 &   5{,}278 & 1{,}433 & Class parity & 7 classes \\
Citeseer        & LP &  3{,}327 &   4{,}552 & 3{,}703 & Class parity & 6 classes \\
Pubmed          & LP & 19{,}717 &  44{,}324 &   500   & Class parity & 3 classes \\
\midrule
\multicolumn{7}{l}{\textit{Node Regression (NR)}} \\[2pt]
German Credit & NR &  1{,}000 &  10{,}122 &  33 & Gender          & Credit score \\
Pokec-z         & NR & 67{,}796 & 617{,}958 & 276 & Gender          & Profile completion \\
Pokec-n         & NR & 66{,}569 & 517{,}047 & 265 & Gender          & Profile completion \\
\bottomrule
\end{tabular}

\end{table}

\subsection{Ablation Study: Effect of the mean-centering tolerance parameter $\delta$.} \label{app:exp:slab}
The parameter $\delta$ controls the tightness of the mean-centering constraints in the FairProp projections, and as a consequence, the tightness of the upper bound on the fairness parity gap. Setting $\delta=0$ enforces exact mean-centering, requires projecting onto a subspace, and yields the tightest bound on the parity gap (exact for node regression). 
Increasing $\delta > 0$ relaxes the mean-centering constraint, requires projecting onto a slab, and also loosens the upper-bound on the fairness parity gap. In the limit, as $\delta \rightarrow \infty$, the constraint vanishes, and the FairProp propagations become equivalent to its unconstrained APPNP counterpart. Hence, $\delta$ can be viewed as a tunable parameter which interpolates between these two regimes. Our goal is to investigate how task utility and fairness are affected as $\delta$ is varied.
From our results in the previous subsection, we observe that setting $\delta=0$ in FairProp causes a small drop in predictive accuracy compared to APPNP ($\delta \rightarrow \infty$) while significantly reducing the parity gap. By varying $\delta > 0$, we test if we can close the accuracy gap to APPNP while possibly incurring modest increase in the parity gap. Figure~\ref{fig:slab} showcases the results across different tasks and datasets.
As $\delta$ is increased from $0$, the general trend is that both accuracy and parity gap stay roughly the same until a tipping point is reached, which usually occurs around $\delta=10^{-2}$. Beyond this value, $\delta$ begins to exhibit an influence on the results, with a noticeable increase in the fairness parity gap, while accuracy remains largely unchanged. The only exception is the GermanCredit dataset used for node regression, where relaxing the constraint reduces the MAE by $7\%$ from $0.481$ to $0.448$.
At $\delta \rightarrow \infty$, we recover the APPNP solution, empirically confirming that the slab constraint reduces to the unconstrained problem in the limit.  This indicates that by varying $\delta$ in the FairProp projections, offers a way to manage the utility-fairness trade-off in cases where exact projection leads to a significant drop in utility. 

 

\begin{figure}[t]
     \centering
     \begin{subfigure}[b]{0.32\textwidth}
         \centering
         \includegraphics[height=2.75mm]{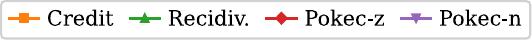}
     \end{subfigure}
     \begin{subfigure}[b]{0.32\textwidth}
         \centering
         \includegraphics[height=2.75mm]{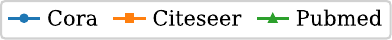}
     \end{subfigure}
     \begin{subfigure}[b]{0.32\textwidth}
         \centering
         \includegraphics[height=2.75mm]{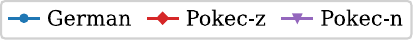}
     \end{subfigure}

     \begin{subfigure}[b]{0.32\textwidth}
         \centering
         \includegraphics[width=1\linewidth]{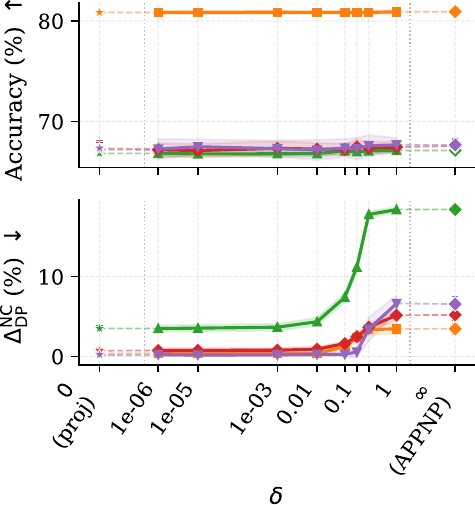}
         \caption{Node Classification}
         \label{fig:slab:nc}
     \end{subfigure}
     \hfill
     \begin{subfigure}[b]{0.32\textwidth}
         \centering
         \includegraphics[width=1\linewidth]{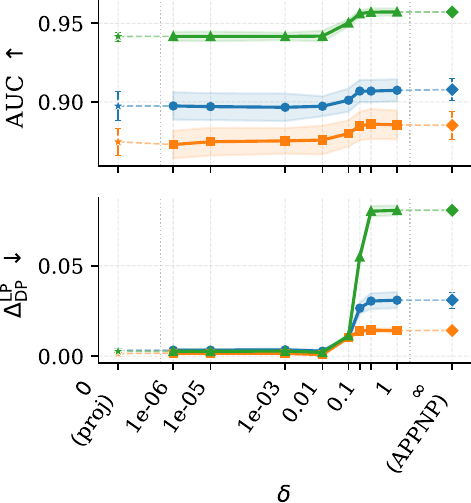}
         \caption{Link Prediction}
         \label{fig:slab:lp}
     \end{subfigure}
     \hfill
     \begin{subfigure}[b]{0.32\textwidth}
         \centering
         \includegraphics[width=1\linewidth]{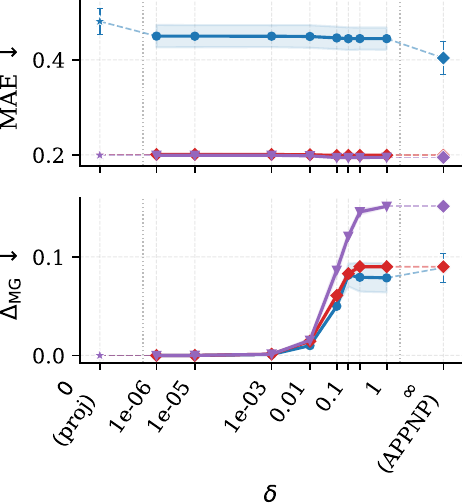}
         \caption{Node Regression}
         \label{fig:slab:nr}
     \end{subfigure}
     \caption{Showing the effect of relaxing the mean-gap $\delta$ by using the slab projection, on different tasks and datasets.}
     \label{fig:slab}
\end{figure}

\subsection{Detailed Experiments}
Note that for link prediction, we also use methods originally proposed for node classification. 
We discard the classification head and replace it with an inner-product decoder applied to the learned node embeddings, optimizing binary cross-entropy loss. 
Moreover, for node regression, since there is only one available baseline, we also use methods originally proposed for node classification. 
We replace the classification head with a single linear output unit and optimized mean squared error loss in place of cross-entropy.

\begin{table*}[htbp]
\setlength{\tabcolsep}{3.5pt}
  \centering
  \caption{\textbf{Node classification} results. Best and second best per column in \textbf{bold} and \underline{underlined}, respectively, excluding the unfair baseline. Dashed results indicate out-of-memory (OOM) or time-consuming experiments. All results are presented by $\times100$.}
  \label{app:tab:nc_combined}
  \small
  \vspace{-3mm}
  \begin{tabular}{l c c c c c c c c}
    \toprule
    & \multicolumn{2}{c}{Credit} & \multicolumn{2}{c}{Recidivism} & \multicolumn{2}{c}{Pokec-z} & \multicolumn{2}{c}{Pokec-n} \\
    \cmidrule(lr){2-3} \cmidrule(lr){4-5} \cmidrule(lr){6-7} \cmidrule(lr){8-9}
    & Acc $\uparrow$ & $\Delta^{\mathrm{NC}}_{\mathrm{DP}}$ $\downarrow$ & Acc $\uparrow$ & $\Delta^{\mathrm{NC}}_{\mathrm{DP}}$ $\downarrow$ & Acc $\uparrow$ & $\Delta^{\mathrm{NC}}_{\mathrm{DP}}$ $\downarrow$ & Acc $\uparrow$ & $\Delta^{\mathrm{NC}}_{\mathrm{DP}}$ $\downarrow$ \\
    \midrule
    APPNP & 80.9 {\scriptsize$\pm$0.2} & 3.5 {\scriptsize$\pm$0.7} & 67.1 {\scriptsize$\pm$0.2} & 18.4 {\scriptsize$\pm$0.3}$^\dagger$ & 67.6 {\scriptsize$\pm$0.6} & 5.2 {\scriptsize$\pm$0.6} & 67.7 {\scriptsize$\pm$0.6} & 6.6 {\scriptsize$\pm$0.9} \\
    \midrule[.2pt]
    FairProp (proj) & {80.8 {\scriptsize$\pm$0.2}} & \textbf{0.4 {\scriptsize$\pm$0.3}} & 66.8 {\scriptsize$\pm$0.3} & \textbf{3.5 {\scriptsize$\pm$0.4}}$^\dagger$ & \underline{67.2 {\scriptsize$\pm$0.7}} & \textbf{0.7 {\scriptsize$\pm$0.3}} & \underline{67.3 {\scriptsize$\pm$0.8}} & \textbf{0.2 {\scriptsize$\pm$0.2}} \\
    \midrule[.2pt]
    FairGNN & 79.9 {\scriptsize$\pm$1.1} & 2.9 {\scriptsize$\pm$1.9} & 54.0 {\scriptsize$\pm$10.0} & 10.8 {\scriptsize$\pm$7.6} & 53.9 {\scriptsize$\pm$3.9} & 5.0 {\scriptsize$\pm$3.3} & 52.8 {\scriptsize$\pm$2.7} & 4.4 {\scriptsize$\pm$4.8} \\
    NIFTY & 80.2 {\scriptsize$\pm$0.3} & 10.1 {\scriptsize$\pm$0.4} & 68.1 {\scriptsize$\pm$1.0} & 19.8 {\scriptsize$\pm$0.5} & 65.5 {\scriptsize$\pm$1.7} & 11.5 {\scriptsize$\pm$3.1} & 64.7 {\scriptsize$\pm$1.1} & 4.7 {\scriptsize$\pm$3.2} \\
    RawlsGCN & 79.6 {\scriptsize$\pm$0.7} & 1.7 {\scriptsize$\pm$0.7} & 67.9 {\scriptsize$\pm$0.9} & 18.7 {\scriptsize$\pm$1.1} & 57.1 {\scriptsize$\pm$6.4} & 2.5 {\scriptsize$\pm$4.5} & 54.5 {\scriptsize$\pm$6.6} & 2.6 {\scriptsize$\pm$4.7} \\
    FairGAT & 79.4 {\scriptsize$\pm$0.5} & 1.2 {\scriptsize$\pm$0.3} & 67.8 {\scriptsize$\pm$1.1} & 15.1 {\scriptsize$\pm$0.5} & \textbf{69.4 {\scriptsize$\pm$0.9}} & \underline{2.0 {\scriptsize$\pm$1.2}} & \textbf{68.6 {\scriptsize$\pm$0.8}} & 2.3 {\scriptsize$\pm$3.0} \\
    EDITS & --- & --- & 65.8 {\scriptsize$\pm$2.8} & 19.1 {\scriptsize$\pm$5.5} & --- & --- & --- & --- \\
    FairEdit & --- & --- & 68.1 {\scriptsize$\pm$0.9} & 19.2 {\scriptsize$\pm$0.8} & --- & --- & --- & --- \\
    FairVGNN & 80.0 {\scriptsize$\pm$0.3} & \underline{0.8 {\scriptsize$\pm$0.1}} & 65.5 {\scriptsize$\pm$2.3} & 19.5 {\scriptsize$\pm$0.4} & --- & --- & --- & --- \\
    FairWalk & 73.4 {\scriptsize$\pm$0.4} & 1.5 {\scriptsize$\pm$0.8} & 60.8 {\scriptsize$\pm$1.2} & 14.0 {\scriptsize$\pm$1.4} & 63.1 {\scriptsize$\pm$0.9} & 4.3 {\scriptsize$\pm$3.7} & 61.6 {\scriptsize$\pm$1.0} & 4.4 {\scriptsize$\pm$3.3} \\
    CrossWalk & 73.6 {\scriptsize$\pm$0.4} & 3.1 {\scriptsize$\pm$1.3} & 60.6 {\scriptsize$\pm$1.2} & 16.1 {\scriptsize$\pm$1.3} & 63.1 {\scriptsize$\pm$1.1} & 5.2 {\scriptsize$\pm$3.7} & 62.3 {\scriptsize$\pm$0.7} & 4.9 {\scriptsize$\pm$3.5} \\
    FMP & \underline{81.2 {\scriptsize$\pm$0.1}} & 3.5 {\scriptsize$\pm$0.3} & \textbf{69.8 {\scriptsize$\pm$0.3}} & 19.1 {\scriptsize$\pm$0.3} & 65.8 {\scriptsize$\pm$0.4} & 3.9 {\scriptsize$\pm$0.3} & 65.3 {\scriptsize$\pm$0.5} & \underline{1.7 {\scriptsize$\pm$0.5}} \\
    FairINV & 79.2 {\scriptsize$\pm$1.0} & 7.2 {\scriptsize$\pm$4.4} & 67.3 {\scriptsize$\pm$1.6} & 21.1 {\scriptsize$\pm$6.2} & 63.1 {\scriptsize$\pm$1.0} & 8.0 {\scriptsize$\pm$4.8} & 62.4 {\scriptsize$\pm$1.1} & 5.6 {\scriptsize$\pm$4.2} \\
    FairSAD & 78.1 {\scriptsize$\pm$0.5} & \textbf{0.4 {\scriptsize$\pm$0.5}} & 63.4 {\scriptsize$\pm$3.2} & 10.0 {\scriptsize$\pm$3.8} & --- & --- & --- & --- \\
    DAB-GNN & 79.6 {\scriptsize$\pm$1.1} & 4.6 {\scriptsize$\pm$2.0} & 63.3 {\scriptsize$\pm$6.3} & 8.7 {\scriptsize$\pm$6.3} & 66.5 {\scriptsize$\pm$2.0} & 3.9 {\scriptsize$\pm$3.3} & 65.1 {\scriptsize$\pm$1.3} & 4.6 {\scriptsize$\pm$3.8} \\
    FairGP & \textbf{81.6 {\scriptsize$\pm$0.3}} & 3.3 {\scriptsize$\pm$0.8} & \underline{68.0 {\scriptsize$\pm$0.9}} & \underline{6.4 {\scriptsize$\pm$2.5}} & --- & --- & --- & --- \\
    \bottomrule
  \end{tabular}
  \vspace{2pt}
  {\small\qquad $^\dagger$ Recidivism results for APPNP / FairProp (proj) use $\gamma=0.3$; all other datasets use $\gamma=0.1$.}
\end{table*}

\begin{table*}[htbp]
\setlength{\tabcolsep}{3.8pt}
  \centering
  \caption{\textbf{Link prediction} results. Best and second best per column in \textbf{bold} and \underline{underlined}, respectively, excluding the unfair baseline.}
  \label{app:tab:lp_combined}
  \small
  \begin{tabular}{l c c c c c c}
    \toprule
    & \multicolumn{2}{c}{Cora} & \multicolumn{2}{c}{Citeseer} & \multicolumn{2}{c}{Pubmed} \\
    \cmidrule(lr){2-3} \cmidrule(lr){4-5} \cmidrule(lr){6-7}
    & AUC $\uparrow$ & $\Delta^{\mathrm{LP}}_\mathrm{DP}$ $\downarrow$ & AUC $\uparrow$ & $\Delta^{\mathrm{LP}}_\mathrm{DP}$ $\downarrow$ & AUC $\uparrow$ & $\Delta^{\mathrm{LP}}_\mathrm{DP}$ $\downarrow$ \\
    \midrule
    APPNP & 0.908 {\scriptsize$\pm$0.007} & 0.031 {\scriptsize$\pm$0.004} & 0.885 {\scriptsize$\pm$0.009} & 0.014 {\scriptsize$\pm$0.001} & 0.957 {\scriptsize$\pm$0.002} & 0.081 {\scriptsize$\pm$0.002} \\
    \midrule[.2pt]
    FairProp (proj) & 0.897 {\scriptsize$\pm$0.009} & 0.003 {\scriptsize$\pm$0.001} & 0.875 {\scriptsize$\pm$0.009} & 0.002 {\scriptsize$\pm$0.000} & 0.942 {\scriptsize$\pm$0.003} & \underline{0.003 {\scriptsize$\pm$0.001}} \\
    \midrule[.2pt]
    FairGNN & 0.683 {\scriptsize$\pm$0.070} & 0.008 {\scriptsize$\pm$0.006} & 0.808 {\scriptsize$\pm$0.041} & 0.019 {\scriptsize$\pm$0.009} & 0.873 {\scriptsize$\pm$0.065} & 0.007 {\scriptsize$\pm$0.008} \\
    EDITS & 0.905 {\scriptsize$\pm$0.008} & 0.009 {\scriptsize$\pm$0.002} & 0.879 {\scriptsize$\pm$0.007} & 0.008 {\scriptsize$\pm$0.001} & 0.966 {\scriptsize$\pm$0.001} & 0.028 {\scriptsize$\pm$0.005} \\
    FairEdit & 0.799 {\scriptsize$\pm$0.049} & 0.015 {\scriptsize$\pm$0.010} & 0.808 {\scriptsize$\pm$0.032} & 0.012 {\scriptsize$\pm$0.004} & 0.914 {\scriptsize$\pm$0.025} & 0.004 {\scriptsize$\pm$0.004} \\
    FairVGNN & 0.852 {\scriptsize$\pm$0.057} & 0.051 {\scriptsize$\pm$0.021} & 0.851 {\scriptsize$\pm$0.012} & 0.036 {\scriptsize$\pm$0.010} & 0.745 {\scriptsize$\pm$0.089} & 0.032 {\scriptsize$\pm$0.036} \\
    FairWalk & 0.908 {\scriptsize$\pm$0.006} & 0.024 {\scriptsize$\pm$0.002} & 0.883 {\scriptsize$\pm$0.007} & 0.014 {\scriptsize$\pm$0.002} & 0.961 {\scriptsize$\pm$0.002} & 0.071 {\scriptsize$\pm$0.002} \\
    CrossWalk & 0.907 {\scriptsize$\pm$0.007} & 0.024 {\scriptsize$\pm$0.002} & 0.884 {\scriptsize$\pm$0.008} & 0.014 {\scriptsize$\pm$0.001} & 0.961 {\scriptsize$\pm$0.002} & 0.072 {\scriptsize$\pm$0.002} \\
    FairGAT & 0.915 {\scriptsize$\pm$0.006} & 0.050 {\scriptsize$\pm$0.017} & \underline{0.919 {\scriptsize$\pm$0.007}} & 0.027 {\scriptsize$\pm$0.016} & 0.845 {\scriptsize$\pm$0.017} & 0.066 {\scriptsize$\pm$0.034} \\
    FairAdj & 0.915 {\scriptsize$\pm$0.006} & 0.019 {\scriptsize$\pm$0.001} & 0.888 {\scriptsize$\pm$0.006} & 0.013 {\scriptsize$\pm$0.001} & \textbf{0.969 {\scriptsize$\pm$0.001}} & 0.050 {\scriptsize$\pm$0.001} \\
    FairDrop & \underline{0.922 {\scriptsize$\pm$0.006}} & 0.022 {\scriptsize$\pm$0.003} & 0.900 {\scriptsize$\pm$0.007} & 0.015 {\scriptsize$\pm$0.002} & \underline{0.967 {\scriptsize$\pm$0.001}} & 0.060 {\scriptsize$\pm$0.002} \\
    FairWire & 0.811 {\scriptsize$\pm$0.013} & \textbf{0.000 {\scriptsize$\pm$0.000}} & 0.817 {\scriptsize$\pm$0.013} & \textbf{0.000 {\scriptsize$\pm$0.000}} & 0.917 {\scriptsize$\pm$0.002} & \textbf{0.001 {\scriptsize$\pm$0.001}} \\
    FairINV & \textbf{0.928 {\scriptsize$\pm$0.006}} & 0.027 {\scriptsize$\pm$0.001} & \textbf{0.924 {\scriptsize$\pm$0.007}} & 0.014 {\scriptsize$\pm$0.001} & 0.953 {\scriptsize$\pm$0.001} & 0.080 {\scriptsize$\pm$0.001} \\
    DAB-GNN & 0.843 {\scriptsize$\pm$0.007} & 0.181 {\scriptsize$\pm$0.016} & 0.771 {\scriptsize$\pm$0.013} & 0.131 {\scriptsize$\pm$0.014} & 0.872 {\scriptsize$\pm$0.004} & 0.180 {\scriptsize$\pm$0.010} \\
    FairSAD & 0.848 {\scriptsize$\pm$0.019} & \underline{0.001 {\scriptsize$\pm$0.000}} & 0.817 {\scriptsize$\pm$0.028} & \underline{0.001 {\scriptsize$\pm$0.000}} & 0.743 {\scriptsize$\pm$0.012} & \textbf{0.001 {\scriptsize$\pm$0.001}} \\
    FairGP & 0.746 {\scriptsize$\pm$0.021} & \textbf{0.000 {\scriptsize$\pm$0.000}} & 0.865 {\scriptsize$\pm$0.010} & \underline{0.001 {\scriptsize$\pm$0.000}} & 0.835 {\scriptsize$\pm$0.009} & \textbf{0.001 {\scriptsize$\pm$0.001}} \\
    \bottomrule
  \end{tabular}
\end{table*}

\begin{table*}[htbp]
\setlength{\tabcolsep}{4pt}
  \centering
  \caption{\textbf{Node regression} results. Best and second best per column in \textbf{bold} and \underline{underlined}, respectively, excluding the unfair baseline.}
  \label{app:tab:nr_combined_no_nba_mae}
  \small
  \begin{tabular}{l c c c c c c}
    \toprule
    & \multicolumn{2}{c}{GermanCredit} & \multicolumn{2}{c}{Pokec-z} & \multicolumn{2}{c}{Pokec-n} \\
    \cmidrule(lr){2-3} \cmidrule(lr){4-5} \cmidrule(lr){6-7}
    & MAE $\downarrow$ & $\Delta_\mathrm{MG}$ $\downarrow$ & MAE $\downarrow$ & $\Delta_\mathrm{MG}$ $\downarrow$ & MAE $\downarrow$ & $\Delta_\mathrm{MG}$ $\downarrow$ \\
    \midrule
    APPNP & 0.404 {\scriptsize$\pm$0.035} & 0.089 {\scriptsize$\pm$0.015} & 0.199 {\scriptsize$\pm$0.001} & 0.091 {\scriptsize$\pm$0.001}$^\dagger$ & 0.199 {\scriptsize$\pm$0.001} & 0.152 {\scriptsize$\pm$0.001}$^\dagger$ \\
    \midrule[.2pt]
    FairProp (proj) & 0.481 {\scriptsize$\pm$0.027} & \textbf{0.000 {\scriptsize$\pm$0.000}} & \textbf{0.200 {\scriptsize$\pm$0.001}} & \textbf{0.000 {\scriptsize$\pm$0.000}}$^\dagger$ & \textbf{0.199 {\scriptsize$\pm$0.001}} & \textbf{0.000 {\scriptsize$\pm$0.000}}$^\dagger$ \\
    \midrule[.2pt]
    FairGNN & 0.415 {\scriptsize$\pm$0.016} & 0.070 {\scriptsize$\pm$0.024} & 0.600 {\scriptsize$\pm$0.021} & 0.100 {\scriptsize$\pm$0.011} & 1.826 {\scriptsize$\pm$5.353} & 0.453 {\scriptsize$\pm$1.322} \\
    FairGAT & 0.442 {\scriptsize$\pm$0.012} & \underline{0.017 {\scriptsize$\pm$0.012}} & 0.652 {\scriptsize$\pm$0.052} & 0.061 {\scriptsize$\pm$0.007} & 0.654 {\scriptsize$\pm$0.042} & 0.094 {\scriptsize$\pm$0.008} \\
    EDITS & \underline{0.414 {\scriptsize$\pm$0.013}} & 0.024 {\scriptsize$\pm$0.019} & 0.587 {\scriptsize$\pm$0.004} & 0.006 {\scriptsize$\pm$0.002} & 0.586 {\scriptsize$\pm$0.005} & \underline{0.020 {\scriptsize$\pm$0.001}} \\
    FairVGNN & \textbf{0.407 {\scriptsize$\pm$0.024}} & 0.049 {\scriptsize$\pm$0.021} & --- & --- & --- & --- \\
    FairEdit & 0.429 {\scriptsize$\pm$0.029} & 0.019 {\scriptsize$\pm$0.015} & --- & --- & --- & --- \\
    FairWalk & 0.445 {\scriptsize$\pm$0.008} & \underline{0.017 {\scriptsize$\pm$0.012}} & 0.478 {\scriptsize$\pm$0.004} & 0.055 {\scriptsize$\pm$0.001} & 0.472 {\scriptsize$\pm$0.004} & 0.100 {\scriptsize$\pm$0.002} \\
    CrossWalk & 0.429 {\scriptsize$\pm$0.010} & 0.021 {\scriptsize$\pm$0.013} & 0.478 {\scriptsize$\pm$0.003} & 0.057 {\scriptsize$\pm$0.001} & 0.471 {\scriptsize$\pm$0.004} & 0.109 {\scriptsize$\pm$0.002} \\
    FMP 
    & 0.558 {\scriptsize$\pm$0.006} & 0.074 {\scriptsize$\pm$0.008} & \underline{0.204 {\scriptsize$\pm$0.002}} & 0.096 {\scriptsize$\pm$0.001} & \underline{0.201 {\scriptsize$\pm$0.002}} & 0.154 {\scriptsize$\pm$0.001} \\
    FairINV & 0.464 {\scriptsize$\pm$0.000} & 0.209 {\scriptsize$\pm$0.001} & 0.635 {\scriptsize$\pm$0.000} & 0.085 {\scriptsize$\pm$0.000} & 0.640 {\scriptsize$\pm$0.000} & 0.150 {\scriptsize$\pm$0.000} \\
    DAB-GNN & 0.750 {\scriptsize$\pm$0.118} & 0.107 {\scriptsize$\pm$0.081} & 0.722 {\scriptsize$\pm$0.231} & 0.034 {\scriptsize$\pm$0.020} & 0.640 {\scriptsize$\pm$0.159} & 0.081 {\scriptsize$\pm$0.034} \\
    FairSAD & 0.456 {\scriptsize$\pm$0.020} & 0.135 {\scriptsize$\pm$0.027} & --- & --- & --- & --- \\
    FairGP & 0.524 {\scriptsize$\pm$0.017} & 0.163 {\scriptsize$\pm$0.036} & --- & --- & --- & --- \\
    FnRGNN & 0.521 {\scriptsize$\pm$0.011} & 0.022 {\scriptsize$\pm$0.018} & 0.249 {\scriptsize$\pm$0.021} & \underline{0.004 {\scriptsize$\pm$0.003}} & 0.239 {\scriptsize$\pm$0.005} & \underline{0.020 {\scriptsize$\pm$0.008}} \\
    \bottomrule
  \end{tabular}
  \vspace{2pt}
  {\small\qquad $^\dagger$ Pokec-z/-n results for APPNP / FairProp (proj) use $\gamma=0.5$; GermanCredit use $\gamma=0.1$.}
\end{table*}

\clearpage
\subsection{Runtime Results} \label{app:runtime}
\begin{figure}[htbp]
    \centering
    \includegraphics[width=\linewidth]{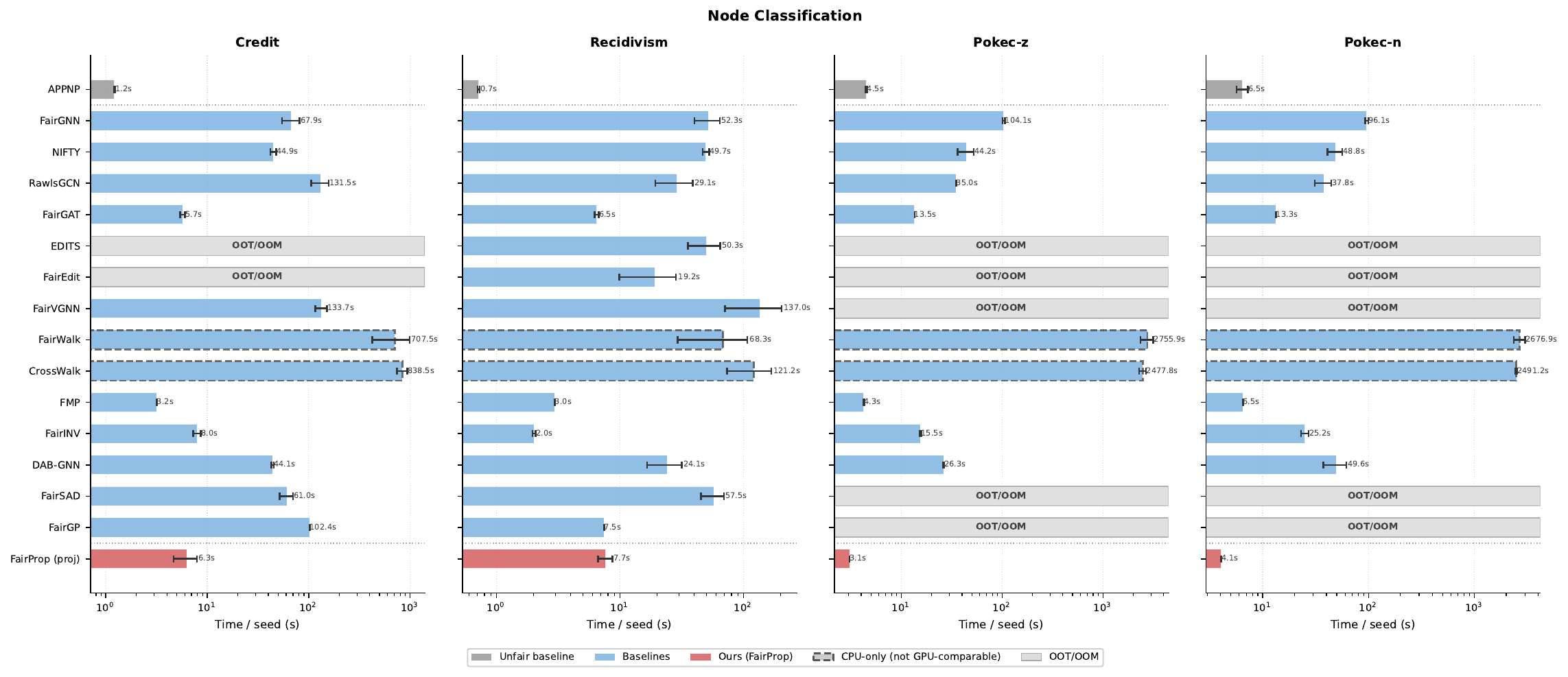}
    \includegraphics[width=0.85\linewidth]{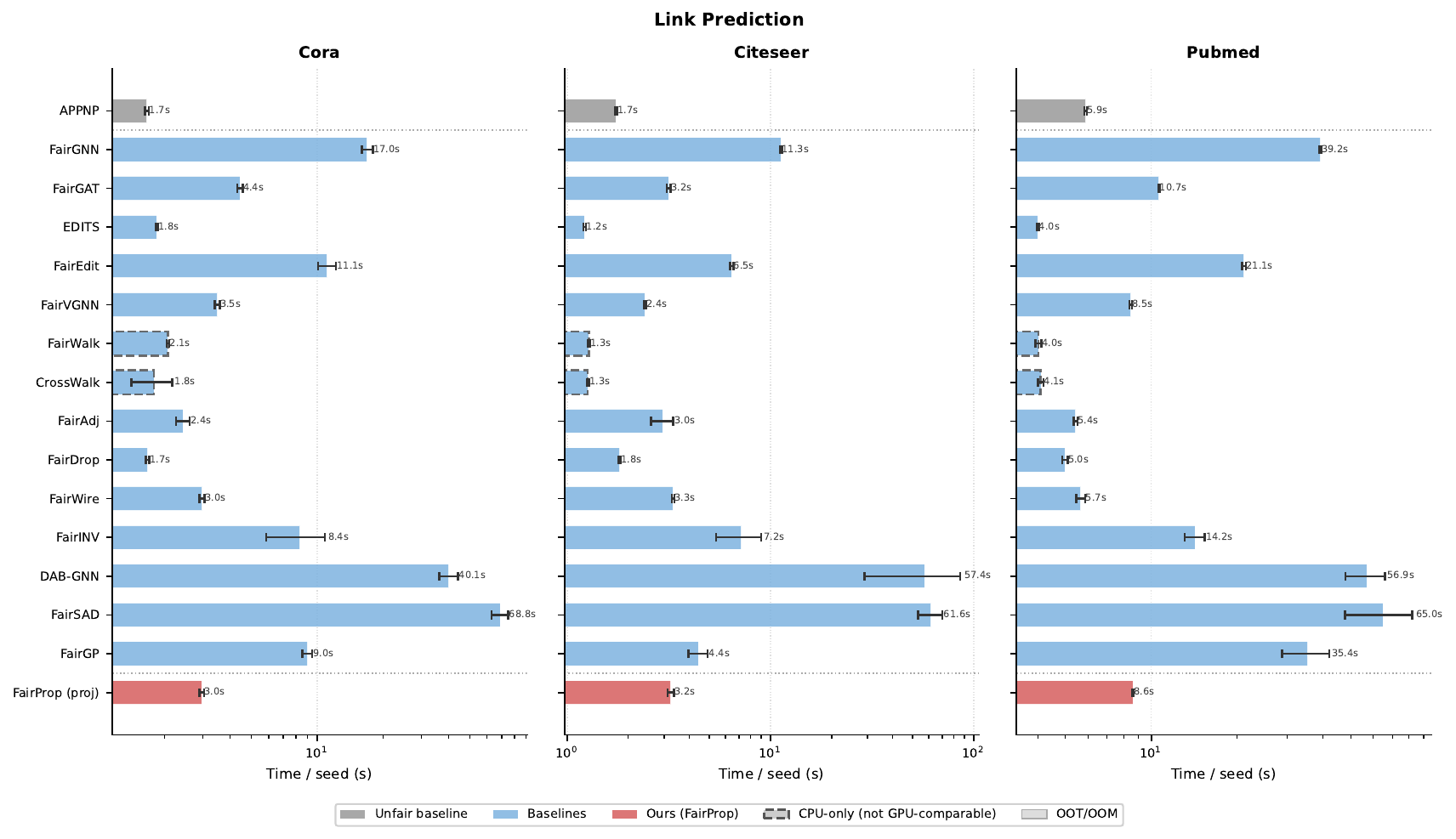}
    \includegraphics[width=0.85\linewidth]{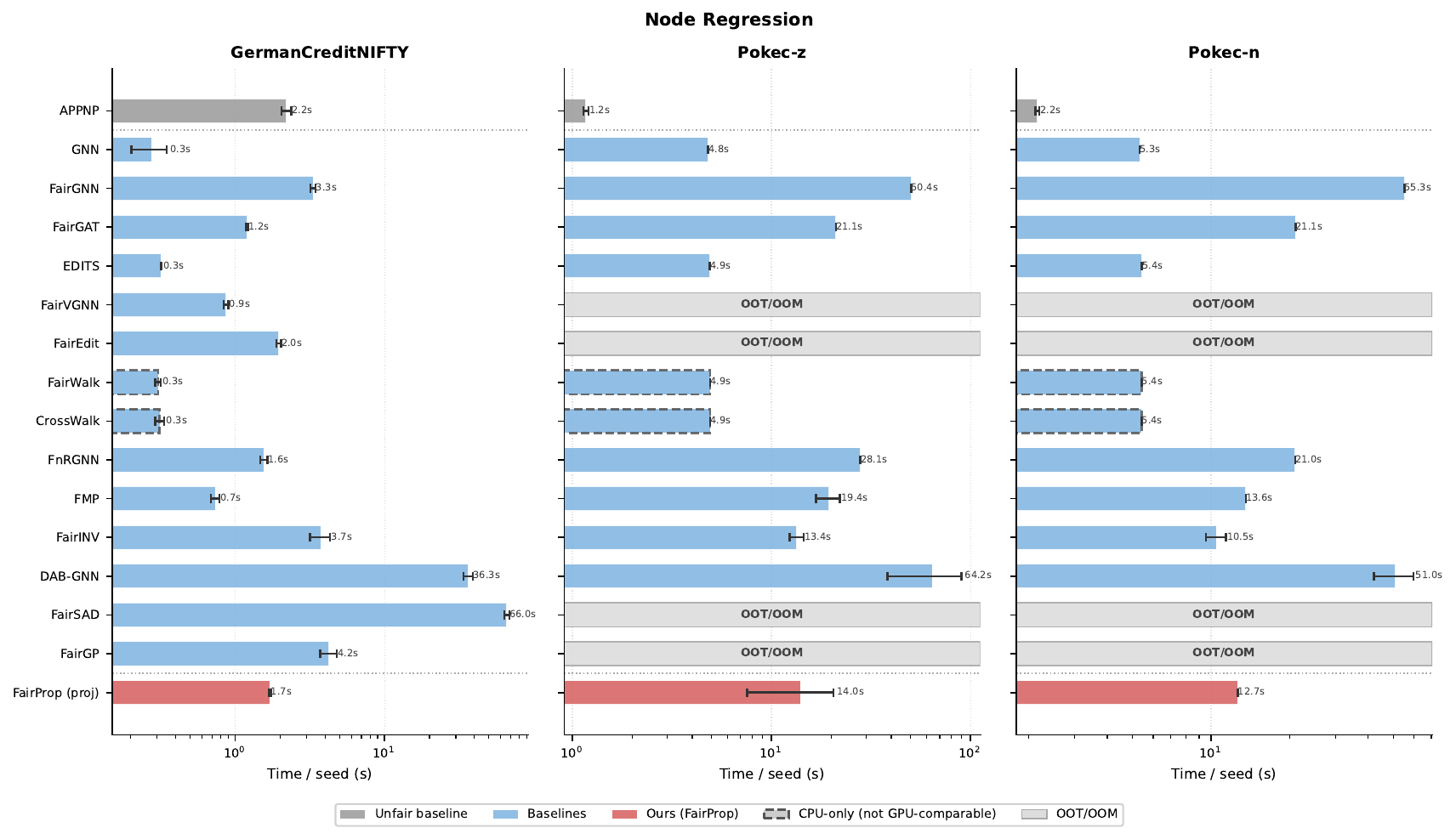}
    \caption{Runtime Results. OOT stands for Out-of-Time and OOM for Out-of-Memory.}
    \label{fig:runtime}
\end{figure}

\clearpage
\subsection{Utility-Fairness Trade-Off Plots}
\subsubsection{Node Classification Scatter Plots} \label{app:nc_scatter}
\begin{figure}[htbp]
    \centering
    \includegraphics[width=\linewidth]{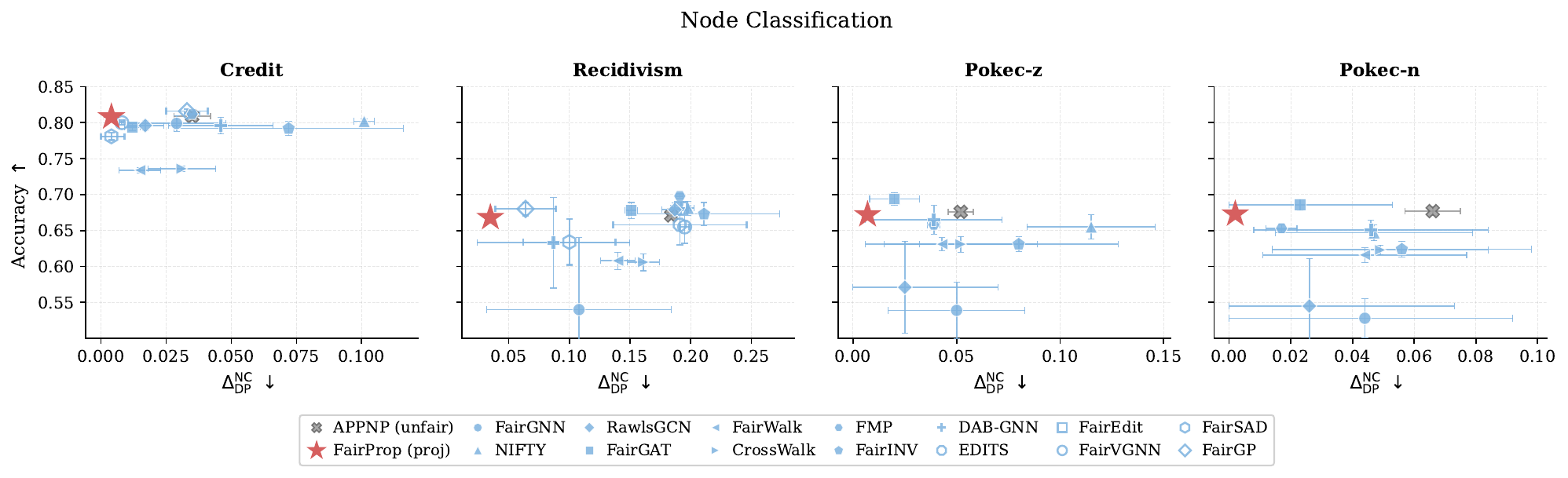}
    \caption{Utility-fairness trade-off on node classification. Each point shows mean accuracy (higher is better) against $\Delta^\text{NC}_\text{DP}$ (lower is better) over 20 runs. Top-left is optimal.}
    \label{fig:scatter_plot:nc}
\end{figure}
\subsubsection{Link Prediction Scatter Plots} \label{app:lp_scatter}
\begin{figure}[htbp]
    \centering
    \includegraphics[width=\linewidth]{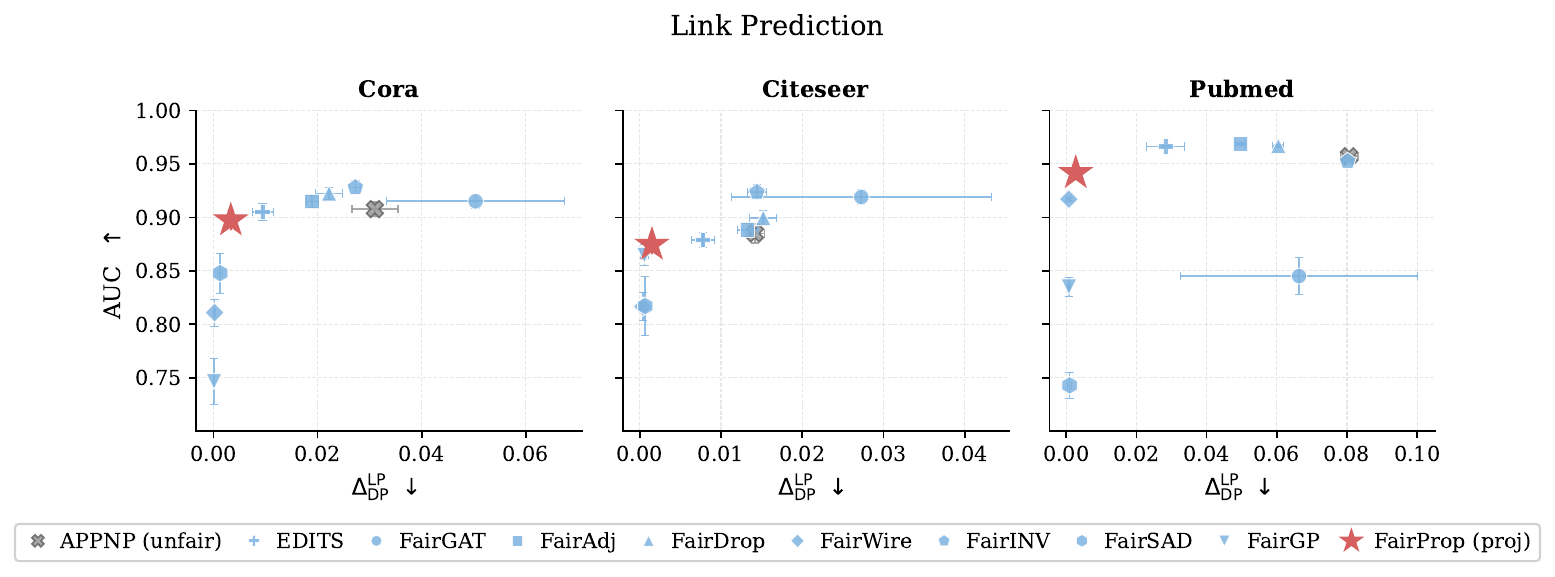}
    \caption{Utility-fairness trade-off on link prediction. Each point shows mean AUC (higher is better) against $\Delta^\text{LP}_\text{DP}$ (lower is better) over 20 runs. Top-left is optimal.}
    \label{fig:scatter_plot:lp}
\end{figure}

\subsection{Node Classification Bound Comparison} \label{app:bound_comparison}

\begin{figure}[h!]
    \centering
    \includegraphics[width=0.55\linewidth]{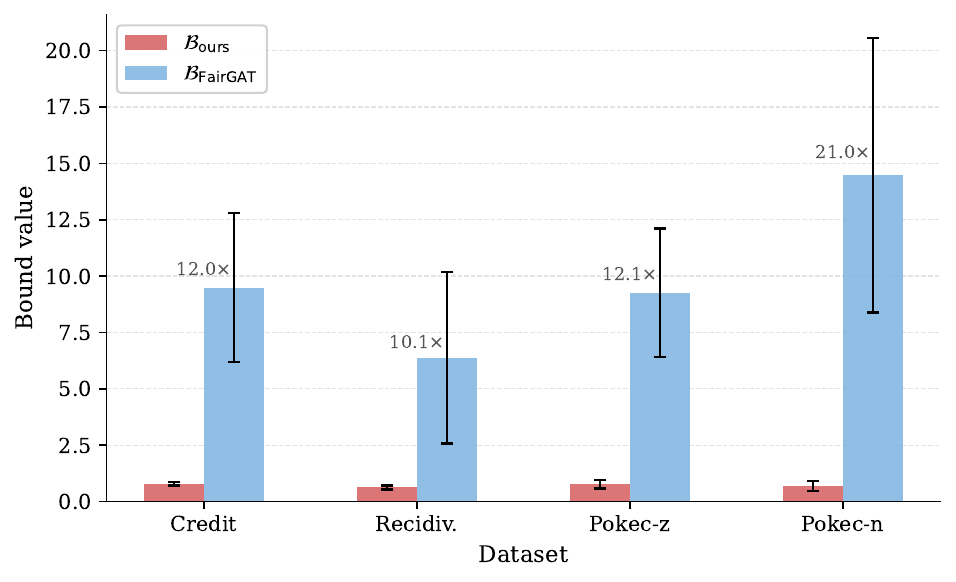}
    \caption{Comparison of our bound on the soft prediction gap against that of FairGAT~\citep{kose2024fairgat}.}
    \label{fig:bound_comparison}
\end{figure}

\end{document}